\documentclass{article}

\PassOptionsToPackage{numbers,compress}{natbib}
\usepackage[main,final]{neurips_2026}

\usepackage[utf8]{inputenc}
\usepackage[T1]{fontenc}
\usepackage{amsmath,amssymb,amsthm,mathtools}
\usepackage{booktabs}
\usepackage{nicefrac}
\usepackage{microtype}
\usepackage{xcolor}
\usepackage{enumitem}
\usepackage{graphicx}
\usepackage{float} 
\usepackage{subcaption}
\usepackage{bm}
\usepackage{wrapfig}
\usepackage{url}
\usepackage{hyperref}
\hypersetup{hidelinks} 

\newtheorem{theorem}{Theorem}[section]
\newtheorem{lemma}[theorem]{Lemma}
\newtheorem{proposition}[theorem]{Proposition}
\newtheorem{corollary}[theorem]{Corollary}
\theoremstyle{definition}
\newtheorem{definition}[theorem]{Definition}

\theoremstyle{remark}

\DeclareMathOperator{\arcosh}{arcosh}
\DeclareMathOperator{\grad}{grad}
\DeclareMathOperator{\argmin}{arg\,min}
\DeclareMathOperator{\argmax}{arg\,max}

\newcommand{\R}{\mathbb{R}}

\newcommand{\HnK}{\mathbb{H}^{n,K}}
\newcommand{\Hn}{\mathbb{H}^{n}}
\newcommand{\HH}{\mathbb{H}}
\newcommand{\Sn}{\mathbb{S}^{n-1}}
\newcommand{\buse}{b_{\bm{\xi}}}
\newcommand{\ip}[2]{\langle #1, #2 \rangle}
\newcommand{\ipL}[2]{\langle #1, #2 \rangle_{\mathcal{M}}}
\newcommand{\norm}[1]{\| #1 \|}
\newcommand{\abs}[1]{| #1 |}

\newcommand{\xhat}{\bm{\hat{\xi}}}
\newcommand{\wbar}{\bm{\bar{w}}}

\newcommand{\busek}{b_{\bm{\xi}}^K}

\title{The Implicit Bias of Hyperbolic Representation Learning for Multiclass Data:\\
A Busemann Risk Perspective}

\author{%
  Xingrun Li$^{1}$ \quad
  Sho Kuno$^{1}$ \quad
  Yusuke Mukuta$^{1,2}$ \quad
  Xin Yang$^{1}$ \quad
  Tatsuya Harada$^{1,2}$ \\
  $^{1}$The University of Tokyo \qquad
  $^{2}$RIKEN
}

\begin{document}

\maketitle

\vspace{-10pt}
\begin{abstract}
We study the implicit bias of Riemannian gradient flow for hyperbolic
multiclass classification with fixed class prototypes in hyperbolic
space~\(\mathbb H^n\). Our framework accommodates general
\textit{permutation invariant relative margin (PERM)}
losses~\cite{JMLR:v25:23-1599}, a class that includes cross entropy and
other standard multiclass losses. Our analysis is based on a decomposition:
at large radius, the distance to each prototype splits into a radial term
and a direction dependent term described by the Busemann function. This
yields two main results. First, we prove a radial dichotomy: the sign of a
drift coefficient~\(\mu\) determines whether the radius is pushed toward the
ideal boundary or back toward the interior; if the positive drift persists,
then \(r(t)=\tfrac12\log t+O(1)\), while persistent negative drift returns
the trajectory to the large radius threshold in finite time. Second, we show that large-radius directions can approach
critical points of the Busemann risk on
$\partial\mathbb H^n$, even during radial return. 
These results provide a rigorous large-radius perspective
on two phenomena we refer to as \textit{boundary saturation}
and \textit{near-boundary clustering} in hyperbolic
representation learning.

\end{abstract}

\vspace{-10pt}
\section{Introduction}

Over the past decade, deep learning has achieved remarkable success across diverse domains, from computer vision and natural language processing to scientific discovery~\cite{ImageNet,Devlin2019BERTPO,AlphaFold}. However, a central mystery underlying this success has captivated the research community: modern neural networks are typically overparameterized, and even in the absence of explicit regularization terms in the training objective, gradient based optimization nevertheless tends to find solutions that generalize well~\cite{Zhang2016UnderstandingDL,Belkin2018ReconcilingMM}. This phenomenon is known as \textbf{implicit regularization} (or implicit bias)~\cite{ImplicitRegularization}.
This perspective has led to a more precise understanding of several models. For example, in Euclidean settings, gradient descent on separable data with
cross-entropy loss converges directionally to max margin solutions
~\cite{GradientDescentonSeparableData}, while matrix factorized models
implicitly favor low rank structures~\cite{inmatrix,inmatrix2}.

In parallel, \textbf{hyperbolic representation learning} has become an important tool for modeling hierarchical, tree like, and long tailed data.
Due to its negative curvature and exponential volume growth, hyperbolic space can represent latent hierarchical structures with substantially lower distortion than Euclidean space~\cite{LowDistortion,Nickel2017PoincarEF,tradeoff}.
This geometric advantage has motivated a broad family of hyperbolic models, ranging from Poincar\'e embeddings to hyperbolic neural networks, graph neural networks, and Transformers~\cite{hnn,hgnn,Yang2024HypformerEE,survey1}, with applications across vision, graph learning, recommendation, reinforcement learning, and beyond~\cite{hyperbolicCV,hyperbolicRL,hnn+,hgformer,survey2}.
However, despite rapid progress in model design and empirical applications, the large radius training dynamics of hyperbolic models remain poorly understood.

Understanding implicit bias in hyperbolic space is not a direct extension of Euclidean theory.
Unlike Euclidean models, whose analyses of implicit bias often rely on norm direction decompositions and max margin characterizations~\cite{GradientDescentonSeparableData,GradientHomo,Ji2020DirectionalCA}, hyperbolic models are governed by Riemannian gradient dynamics on a noncompact curved manifold~\cite{RiemannianAdaptiveOptimizationMethods,SGDonRiemannian}.
This geometry permits representations to reach large radii while their angular directions become governed by the ideal boundary, even when radial motion is ultimately bounded.
Moreover, hyperbolic architectures commonly rely on geodesic distances, exponential and logarithmic maps, M\"obius operations, and other nonlinear geometric constructions~\cite{hnn,Lorentzmodelof}, making their losses fundamentally different from convex surrogates composed with linear predictors.
As a result, standard Euclidean tools do not directly apply.

Beyond the theoretical challenge, this gap leaves concrete empirical phenomena unexplained. For example, hyperbolic classifiers are often associated with two behaviors in the large-radius regime: \emph{\textbf{boundary saturation}}, where embeddings reach large hyperbolic radii near the Poincar\'e ball boundary and numerical instabilities may arise~\cite{NumericalStability,clipped}; and \emph{\textbf{near-boundary clustering}}, where class discrimination at large radii depends less on radial positions and more on boundary directions~\cite{fewshot,metriclearning}.

Motivated by these observations, we study the large radius dynamics of
hyperbolic representations under Riemannian gradient flow for multiclass
classification. Our central questions are: \emph{when do hyperbolic
representations undergo outward excursions, and what governs their
angular directions during radial return?}
Our analysis starts from a simple geometric observation: along a
large radius ray in \(\Hn\), the distance to any fixed prototype decomposes
into a common radial term and a direction dependent Busemann term. The
common radial term cancels in relative margin based multiclass losses,
or \emph{PERM losses}~\cite{JMLR:v25:23-1599}, leaving the leading order
classification behavior governed by Busemann functions on the ideal
boundary \(\partial\Hn\).
This decomposition yields a clean asymptotic picture. Persistent positive
drift conditionally yields the radial law
\(
r(t)=\frac12\log t+O(1),
\)
while angular alignment can reverse outward motion and induce a
slow return toward the interior. In the large radius regime, the angular
dynamics are governed approximately by a \emph{Busemann loss} on the ideal
boundary. Thus, at large radius, classification is governed primarily by
the boundary direction rather than the full finite radius hyperbolic position.
\vspace{-8pt}
\paragraph{Contributions.}
To the best of our knowledge, we provide the first rigorous asymptotic
analysis of Riemannian gradient flow for hyperbolic multiclass
classification under general PERM losses~\citep{JMLR:v25:23-1599},
including cross-entropy, multiclass exponential, and Gamma-Phi losses.
Our main contributions are:
\begin{itemize}
    \item We develop a Busemann asymptotic framework for hyperbolic
multiclass classification with PERM losses, showing that, at large radius,
each sample loss is governed to leading order by a Busemann loss on the
ideal boundary.
\item We establish a radial dichotomy governed by the drift
coefficient $\mu$: persistent positive drift conditionally yields
$r(t)=\frac12\log t+O(1)$, while angular alignment can reverse
outward motion and induce slow radial return, suggesting a possible
mechanism for boundary saturation.
\item We show escaping trajectories with ideal prototypes converge
to global minimizers of Busemann risk, while finite prototype
trajectories approach local minima arbitrarily closely during
three-phase return, suggesting a possible mechanism for
near-boundary clustering.
    \item We characterize the asymptotic classifier and its decision boundaries
on the ideal boundary. 
\end{itemize}
\vspace{-8pt}
\section{Related Work}
\vspace{-4pt}
Due to the limited space in the main text, we provide a brief overview
of related literature here and defer a more detailed discussion to
Appendix~[\ref{sec:related}].
For \textbf{implicit bias}:
Gradient based optimization has been shown to select max margin
solutions in separable linear classification
~\cite{GradientDescentonSeparableData}, with extensions to nonseparable
data~\cite{nonseparable}, homogeneous networks~\cite{GradientHomo}, deep directional
convergence~\cite{Ji2020DirectionalCA}, optimization geometry dependent
biases~\cite{geometry}, and low rank biases in
matrix factorization~\cite{inmatrix}. For \textbf{multiclass classification}, the PERM framework extends implicit bias theory beyond binary losses~\cite{implicitmulti}, while recent work also studies algorithm dependent margin biases in homogeneous networks~\cite{flavor}. These Euclidean results characterize margin or complexity biases, but not radial drift or directional alignment
in curved noncompact spaces.
For \textbf{hyperbolic representation learning}: Hyperbolic learning has developed a broad range of models, including
Poincar\'e embeddings~\cite{Nickel2017PoincarEF},
hyperbolic neural networks~\cite{hnn},
hyperbolic neural network variants~\cite{hnn+},
hyperbolic graph neural networks~\cite{hgnn},
and hyperbolic Transformers~\cite{Yang2024HypformerEE};
see~\cite{survey1} for a survey.
Empirical studies have also reported boundary related phenomena,
including numerical instability near the Poincar\'e boundary~\cite{NumericalStability},
the effectiveness of clipping~\cite{clipped},
common radius concentration~\cite{fewshot},
and metric learning induced class separation~\cite{metriclearning}.
These works highlight hyperbolic geometry, but focus mainly on model design
and empirical phenomena rather than the large radius dynamics governing
boundary approach and return.

\section{Hyperbolic Space}

In this section, we provide a brief review of the essential concepts and a theorem about hyperbolic geometry that underlie our proposed method.

\begin{figure}[htbp]
    \centering
    \begin{subfigure}{0.45\textwidth}
        \centering
        \includegraphics[width=\linewidth]{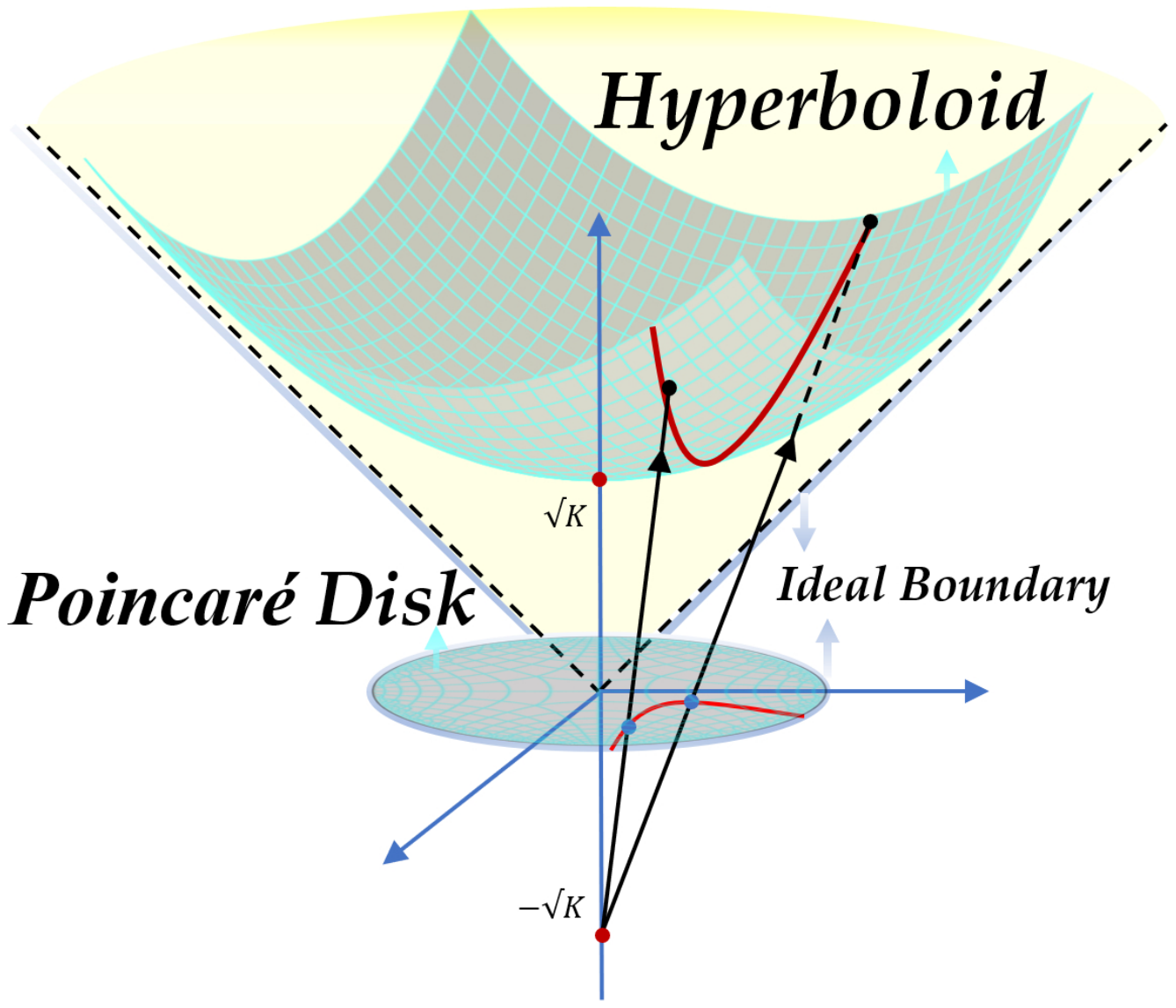}
        \caption{Hyperboloid model and  Poincar\'e Disk}
        \label{fig:a}
    \end{subfigure}
    \hfill
    \begin{subfigure}{0.45\textwidth}
        \centering
        \includegraphics[width=\linewidth]{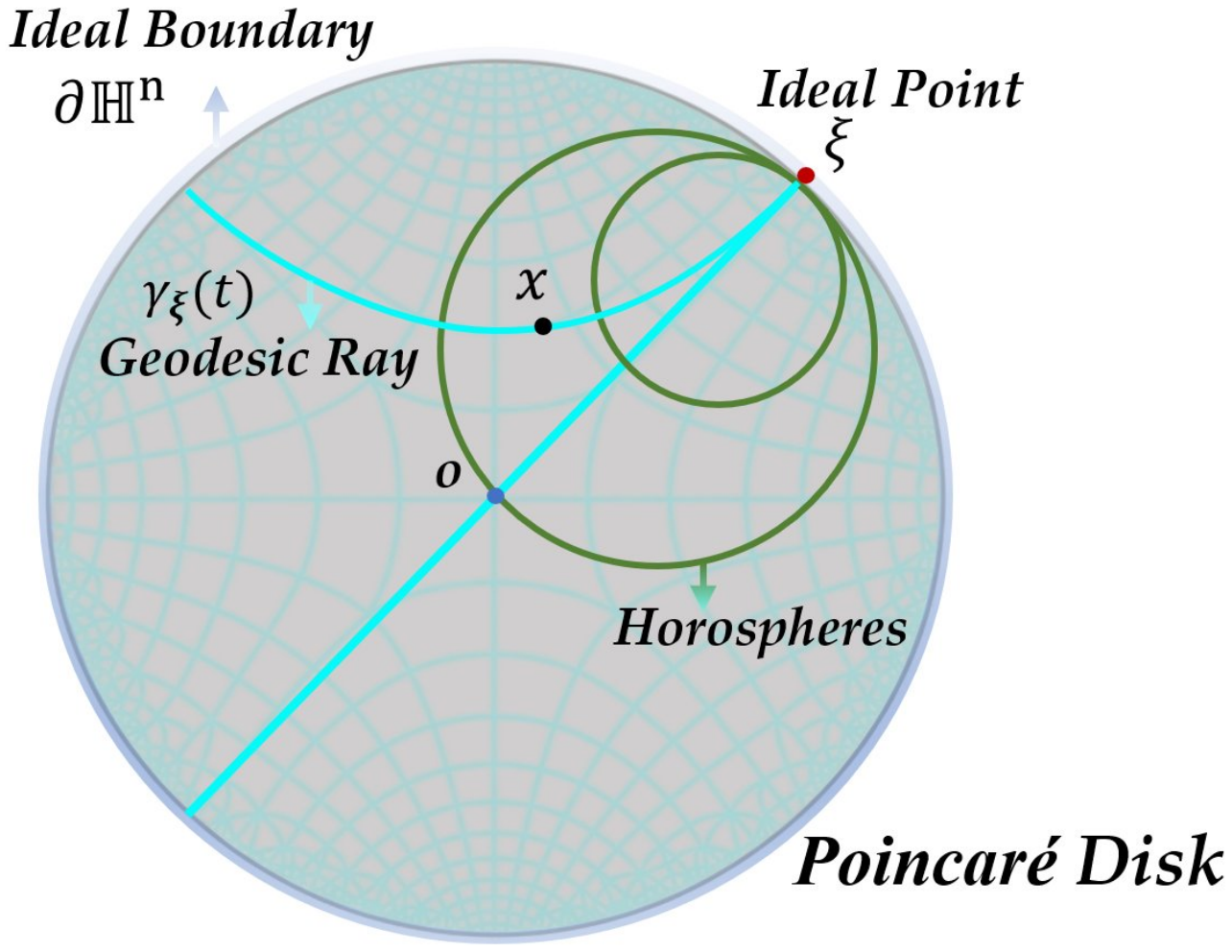}
        \caption{Geodesic rays, ideal boundary, and horospheres}
        \label{fig:b}
    \end{subfigure}
    
    \caption{Geometric illustration of hyperbolic representations and their boundary
asymptotics.}
    \label{fig:main}
\end{figure}

The \emph{Minkowski inner product}, defined for vectors $\mathbf{x} = (x_0, x_1, \ldots, x_n) = (x_0, \bar{\mathbf{x}}),\mathbf{y}=(y_0, y_1, \ldots, y_n) = (y_0, \bar{\mathbf{y}}) \in\mathbb{R}\times\mathbb{R}^{n}$, is given by:
$\langle \mathbf{x}, \mathbf{y}\rangle_{\mathcal{M}} = -x_0 y_0 + \bar{\mathbf{x}}^{\top}\bar{\mathbf{y}} = \mathbf{x}^{\top}\bm{\eta}\mathbf{y}$
where the \emph{Minkowski metric tensor} of hyperboloid model is \(\bm{\eta}=\mathrm{diag}(-1, 1, \ldots, 1)\).
The \(n\)-dimensional \emph{hyperbolic space} (hyperboloid model) is defined as:
\begin{equation}
\mathbb{H}^{n,K}=\{\mathbf{x}\in\mathbb{R}^{n+1}: \langle\mathbf{x},\mathbf{x}\rangle_{\mathcal{M}}=-K,\, x_0>0\},
\end{equation}
where \(K>0\) is the \emph{Minkowski constant}, which determines the constant negative \emph{curvature} \(\kappa=-1/K\).
The \emph{hyperbolic distance} of \(\mathbf{x},~\mathbf{y}\in\mathbb{H}^{n,K}\) is given by
\begin{equation}
d_{\mathbb{H}}^K(\mathbf{x},\mathbf{y})=\sqrt{K} \operatorname{arcosh}\left(-\langle \mathbf{x}, \mathbf{y}\rangle_{\mathcal{M}} /K\right).
\end{equation}

The \emph{north pole} is defined as:
$
\mathbf{o}=(\sqrt{K},0,\dots,0).
$
Every $\mathbf{x}=(x_0,\bar{\mathbf{x}})\in \HnK \setminus \{\mathbf{o}\}$ admits a unique polar decomposition of the form
\begin{equation}\label{eq:polar}
\mathbf{x}(r) = \left( \sqrt{K}\cosh \frac{r}{\sqrt{K}},  \sqrt{K}\sinh \frac{r}{\sqrt{K}}\xhat \right),
\end{equation}
where radius $r=d_{\mathbb{H}}^K(\mathbf{o},\mathbf{x})$ and direction $\xhat=\bar{\mathbf{x}}/\|\bar{\mathbf{x}}\|\in\Sn$.
The \emph{ideal boundary} is defined as the set
$\partial\HnK = \{ \boldsymbol{\xi} = (1, \xhat) : \xhat \in \Sn \}$,
where each element $\boldsymbol{\xi}$ represents an equivalence class
of geodesic rays.
\textbf{As $r \to \infty$, the ray $\mathbf{x}(r)$ in \eqref{eq:polar} converges to the \emph{ideal point} $\boldsymbol{\xi} = (1, \xhat)$ along the direction $\xhat$.}
For an ideal point \(\boldsymbol{\xi} = (1, \xhat) \in \partial\HnK\)
and \(\mathbf{w} \in \HnK\), the \emph{Busemann function} is defined as:
\begin{equation}\label{eq:busemann}
  \busek(\mathbf{w})
    := \lim_{t \to \infty}\bigl[d_{\mathbb{H}}^K(\mathbf{w},
       \gamma_{\boldsymbol{\xi}}(t)) - t\bigr]
     = \sqrt{K}\,\log\!\Bigl(
         \frac{-\ipL{\mathbf{w}}{\boldsymbol{\xi}}}{\sqrt{K}}\Bigr),
\end{equation}
where
\(
\gamma_{\boldsymbol{\xi}}(t)
  = \bigl(\sqrt{K}\cosh \tfrac{t}{\sqrt{K}},\;
     \sqrt{K}\sinh \tfrac{t}{\sqrt{K}}\;\xhat\bigr)
\)
is the unit-speed geodesic ray toward~\(\boldsymbol{\xi}\).

Intuitively, \(b_{\boldsymbol{\xi}}^K(\mathbf w)\) measures the asymptotic position of \(\mathbf w\) along the direction toward the ideal point \(\boldsymbol{\xi}\). The level sets
$\{\mathbf{w} : \busek(\mathbf{w}) = c\}$ are \emph{horospheres}
centred at~$\boldsymbol{\xi}$.

\paragraph{Clarification.}
In fact, the hyperboloid model and the Poincar\'e ball (disk) model are equivalent representations of the same hyperbolic space~\cite{Anderson1999HyperbolicG}, while the gyrovector formalism provides an algebraic language for operations on the hyperbolic space~\cite{Ungar2009gyro}. Since the quantities studied in this paper are intrinsic geometric objects, their asymptotic behavior is independent of the chosen representation. We therefore carry out the analysis in the hyperboloid model, where these quantities admit the simpler closed form expressions. And we present visualizations in the Poincar\'e ball model for better geometric intuition.
\textbf{Also, without loss of generality, we set Minkowski constant \(K=1\) throughout the main development to simplify the presentation, and thus omit \(K\) from the notation.} The general case follows by a straightforward rescaling argument, see Appendix [\ref{sec:discussion}].

\section{Hyperbolic Multi-classification Loss }
\subsection{Classification Setup}

We consider a $C$-class classification problem in $\Hn$ with $C \ge 2$. 
For each class $c \in [C] := \{1,\dots,C\}$, let \(\mathbf{w}_c \in \Hn\) 
denote its \emph{prototype}, a fixed representative point of class $c$ 
in the embedding space. We are given $N$ training samples 
$\{(\mathbf{z}_i,y_i)\}_{i=1}^N$, where \(\mathbf{z}_i\) is the input 
and $y_i \in [C]$ is its class label. An encoder 
$f_\theta$ maps each input \(\mathbf{z}_i\) to a point 
\(\mathbf{x}_i = f_\theta(\mathbf{z}_i) \in \Hn\), which we write 
in polar form as
\(
\mathbf{x}_i = (\cosh r_i,\; \sinh r_i\,\xhat_i),
\) and we omit parameters $\theta$ throughout the paper for brevity.

We define class score functions $v_c(\mathbf{x})$, $c \in [C]$, 
producing a score vector 
$\mathbf{v}(\mathbf{x}) = (v_1(\mathbf{x}),\dots,v_C(\mathbf{x}))^{\top} : 
\Hn \to \mathbb{R}^C$. Let 
$\boldsymbol{\phi}(\mathbf{v})= (\phi_1(\mathbf{v}),\dots,\phi_C(\mathbf{v}))^{\top} 
: \mathbb{R}^C \to \mathbb{R}^C$ be a vector-valued multiclass loss function, 
where $\phi_c(\mathbf{v})$ denotes the $c$-th component of 
$\boldsymbol{\phi}(\mathbf{v})$, corresponding to the loss when the true 
class is $c$. The loss for sample \((\mathbf{x}_i, y_i)\) takes the 
\emph{class-score form}:
\begin{equation}\label{eq:loss_general}
\ell(\mathbf{x}_i, y_i)
  = \phi_{y_i}\bigl(\mathbf{v}(\mathbf{x}_i)\bigr)
  = \phi_{y_i}\bigl(v_1(\mathbf{x}_i), \ldots, v_C(\mathbf{x}_i)\bigr).
\end{equation}
The total loss is \(L(\mathbf{X}) = \frac{1}{N}\sum_{i=1}^N \ell(\mathbf{x}_i, y_i)\), where
\(\mathbf{X}:=(\mathbf{x}_1,\ldots,\mathbf{x}_N)\)
collects all training embeddings.
The choice of class score function \(v_c(\mathbf{x})\) determines how
geometric information is used for classification. We adopt the distance to
prototype score, a common and geometrically direct choice based on proximity
in the intrinsic hyperbolic metric. Other hyperbolic scoring rules, such as
hyperplane based scores, hypersphere based scores, and Lorentzian inner
products, are also closely related to the underlying hyperbolic distance
geometry. Thus, throughout this work, we set
\(
  v_c(\mathbf{x}) = -d_{\HH}(\mathbf{x}, \mathbf{w}_c)
    = -\arcosh\!\bigl(-\langle \mathbf{x}, \mathbf{w}_c
       \rangle_{\mathcal{M}}\bigr).
\)
We define \(\mathbf{W}:=[\mathbf{w}_1,\ldots,\mathbf{w}_C]\)
to collect the prototypes column wise. The resulting
\emph{hyperbolic multiclass classification objective} is:
\begin{equation}
\vspace{-5pt}
  L_{\mathbf{W}}(\mathbf{X})
    = \frac{1}{N}\sum_{i=1}^{N}
        \phi_{y_i}\!\Bigl(
          -d_{\HH}(\mathbf{x}_i, \mathbf{w}_1),\;\ldots,\;
          -d_{\HH}(\mathbf{x}_i, \mathbf{w}_C)
        \Bigr).
\end{equation}

\subsection{PERM Losses Frame}
We employ the PERM framework of \citet{JMLR:v25:23-1599}.
We first recall some  central concepts.
Let $\mathfrak{S}_C$ denote the symmetric group on $[C]$, which is  the
group of all $C!$ permutations of $\{1,\dots,C\}$.
For $\sigma \in \mathfrak{S}_C$, the associated permutation matrix
$S_\sigma \in \{0,1\}^{C \times C}$ has entries
$(S_\sigma)_{ij} = \mathbf{1}[j = \sigma(i)]$,
so that $(S_\sigma \mathbf{v})_i = v_{\sigma(i)}$.
The matrix $D = [I_{C-1} \mid -\mathbf{1}_{C-1}] \in \R^{(C-1)\times C}$
extracts differences with respect to the last component:
\(
  (D\mathbf{v})_k = v_k - v_C,
  ~ k = 1,\dots,C{-}1.
\)
For each class $y \in [C]$, the matrix label code
$\Upsilon_y \in \{0,\pm1\}^{(C-1)\times(C-1)}$ is the unique
invertible matrix satisfying
\vspace{-2pt}
\begin{equation}\label{eq:label_code}
  \Upsilon_y D\mathbf{v}
    = \bigl(v_y - v_{j_1},\;\dots,\;v_y - v_{j_{C-1}}\bigr)^\top,
  \quad
  \{j_1,\dots,j_{C-1}\} = [C]\setminus\{y\}.
\end{equation}
That is, $\Upsilon_y$ re-expresses the
base differences $\{v_k - v_C\}$ as the class $y$ margins
$\{v_y - v_j\}_{j \neq y}$.
 
\begin{definition}[PERM loss]
A multiclass loss
$\boldsymbol{\phi}(\mathbf{v})= (\phi_1(\mathbf{v}),\dots,\phi_C(\mathbf{v}))^{\top} : \mathbb{R}^C \to \mathbb{R}^C$
is a \emph{PERM loss} if it is:
(1)~\emph{Permutation Equivariant}:
    $\boldsymbol{\phi}(S_\sigma \mathbf{v})
       = S_\sigma\,\boldsymbol{\phi}(\mathbf{v})$
    for all $\sigma \in \mathfrak{S}_C$; and
(2)~\emph{Relative Margin based}:
    $\phi_y(\mathbf{v})$ depends on $\mathbf{v}$ only through the
    margins $\{v_y - v_j\}_{j \neq y}$.
\end{definition}
 
By Theorem 2.5 in \cite{JMLR:v25:23-1599}, a PERM loss is uniquely
characterised by a symmetric function
$\psi : \R^{C-1}\!\to\R$, called its \emph{template function}, via
\begin{equation}\label{eq:template}
  \phi_y(\mathbf{v}) = \psi(\Upsilon_y D\mathbf{v}).
\end{equation}
Because $\psi$ is symmetric, the particular ordering of the margins in
\eqref{eq:label_code} does not affect $\phi_y$.
 
\begin{definition}[Totally regular PERM loss]
\label{def:totally_regular}
Let $\psi : \mathbb{R}^{C-1} \to \mathbb{R}$ be a PERM template. 
For each $n \in \{2,\dots,C-1\}$, define its $n$-ary truncation 
$\psi^{(n)} : \mathbb{R}^n \to \mathbb{R}$ by restricting $\psi$ 
to any $n$ coordinates.
The loss is \emph{totally regular} if, for all such $n$, 
$\psi^{(n)}$ is nonnegative, twice differentiable, strictly convex, 
semi-coercive, and satisfies $\nabla \psi^{(n)} \prec 0$ everywhere. 
\end{definition}
 Then we introduce our first assumption:
 \begin{enumerate}[label=(A\arabic*), ref=A\arabic*, series=assumptions]
  \item \label{assump:perm}
  \textbf{\textit{The multiclass loss function $\boldsymbol{\phi}$ is a totally regular PERM loss with template function $\psi$.}}
\end{enumerate}

(\ref{assump:perm})  are satisfied by cross-entropy ($\psi(\mathbf{u})
   = \log\!\bigl(1 + \textstyle\sum_{j=1}^{C-1} e^{-u_j}\bigr)$), multiclass exponential loss, and all Gamma-Phi losses with exponentially decaying $\phi$ (Proposition~4.6 in~\cite{JMLR:v25:23-1599}).

\section{Asymptotic Expansion and Busemann Risk}\label{Section4}

\subsection{The Radial Busemann Expansion}

The cornerstone of our analysis is the following large radius decomposition
of hyperbolic distance into a radial term and a direction dependent
Busemann term.

\begin{lemma}[Radial Busemann expansion]
\label{lem:dist_busemann}
For $\mathbf{x} = (\cosh r,\, \sinh r\,\xhat) \in \Hn 
\setminus \{\mathbf{o}\}$ and $\mathbf{w} = (w_0, w_1, \ldots, w_n)=(w_0, \wbar) 
\in\Hn$, let $\boldsymbol{\xi} = (1, \xhat) \in 
\partial\Hn$. There exists a constant 
$R_0(\mathbf{w}) > 0$ such that for all 
$r \ge R_0$,
\begin{equation}\label{eq:dist_bus_exp}
  d_{\HH}(\mathbf{x}, \mathbf{w}) 
    = r + \buse(\mathbf{w}) + O(e^{-2r}),
\end{equation}
where $\buse(\mathbf{w}) = \log(w_0 - \xhat \cdot \wbar)$ 
is the Busemann function.
\end{lemma}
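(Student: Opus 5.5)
The plan is to compute the Minkowski inner product explicitly and then expand $\arcosh$ for large argument. With $K=1$ and $\mathbf{x}=(\cosh r,\sinh r\,\xhat)$, we have
\[
-\ipL{\mathbf{x}}{\mathbf{w}} = w_0\cosh r - \sinh r\,\xhat\cdot\wbar
= \tfrac{e^{r}}{2}\bigl(w_0-\xhat\cdot\wbar\bigr) + \tfrac{e^{-r}}{2}\bigl(w_0+\xhat\cdot\wbar\bigr).
\]
Set $a:=w_0-\xhat\cdot\wbar$ and $b:=w_0+\xhat\cdot\wbar$. Since $w_0=\sqrt{1+\|\wbar\|^2}>\|\wbar\|$, Cauchy--Schwarz gives the uniform bounds $a \ge w_0-\|\wbar\| = 1/(w_0+\|\wbar\|)>0$ and $0<b\le w_0+\|\wbar\|$. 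The product $ab=w_0^2-(\xhat\cdot\wbar)^2\ge 1$ is also bounded. Moreover $\log a=\buse(\mathbf{w})$, by \eqref{eq:busemann} with $K=1$. Writing $X:=-\ipL{\mathbf{x}}{\mathbf{w}}$, we get $X=\tfrac{a e^{r}}{2}\bigl(1+\tfrac{b}{a}e^{-2r}\bigr)$.

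Next, use $\arcosh X=\log\bigl(X+\sqrt{X^2-1}\bigr)=\log(2X)+\log\tfrac{1+\sqrt{1-X^{-2}}}{2}$. The second term is $O(X^{-2})$ once $X\ge 2$, say. This gives
\[
d_{\HH}(\mathbf{x},\mathbf{w}) = r+\log a+\log\bigl(1+\tfrac{b}{a}e^{-2r}\bigr)+O(X^{-2}).
\]
Since $b/a\le (w_0+\|\wbar\|)^2$, the third term is at most $(w_0+\|\wbar\|)^2e^{-2r}$. Since $X\ge \tfrac{a}{2}e^{r}\ge e^{r}/\bigl(2(w_0+\|\wbar\|)\bigr)$, the last term is also $O\bigl((w_0+\|\wbar\|)^2e^{-2r}\bigr)$.

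The only point needing care is uniformity in $\xhat$. I will choose $R_0(\mathbf{w})$ so that $e^{R_0}\ge 4(w_0+\|\wbar\|)$. This forces $X\ge 2$ for every direction, and makes the error constant depend only on $\|\wbar\|$. That uniformity is the main thing to check, and the lower bound on $a$ via $w_0^2-\|\wbar\|^2=1$ handles it. The constant in $O(e^{-2r})$ is then explicit, e.g.\ $C\,(w_0+\|\wbar\|)^2$ with an absolute $C$.
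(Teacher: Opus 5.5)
Your proposal is correct and follows the paper's proof essentially step for step. You use the same split $-\ipL{\mathbf{x}}{\mathbf{w}}=\tfrac{e^r}{2}A+\tfrac{e^{-r}}{2}B$, the same expansion $\arcosh X=\log(2X)+O(X^{-2})$ for $X\ge 2$, the same direction-uniform threshold $R_0(\mathbf w)=\log\bigl(4(w_0+\|\wbar\|)\bigr)$ obtained from $A\ge 1/(w_0+\|\wbar\|)$ via the hyperboloid constraint, and the same remainder constants $(w_0+\|\wbar\|)^2$ and $4(w_0+\|\wbar\|)^2$.
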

\begin{proof}[Proof sketch]
Using the hyperboloid distance formula and the expansions of 
\(\cosh r\) and \(\sinh r\), we obtain
\(
  -\ipL{\mathbf{x}}{\mathbf{w}}
  =
  \frac{e^r}{2}
  (w_0-\xhat\cdot\wbar)
  \bigl(1+O(e^{-2r})\bigr).
\)
Since \(w_0-\xhat\cdot\wbar>0\), applying
\(\arcosh(u)=\log(2u)+O(u^{-2})\) gives
\(
  d_{\HH}(\mathbf{x},\mathbf{w})
  =
  r+\log(w_0-\xhat\cdot\wbar)+O(e^{-2r}),
\)
which is exactly the claimed expansion. The full and detailed proof is in 
Appendix [\ref{proof:4.1}].
\end{proof}

This decomposition allows us to expand the nonconvex loss \(\ell\) and
separate the radial and angular mechanisms behind the later convergence
analysis.

\subsection{Loss Decomposition}

Using Lemma~\ref{lem:dist_busemann}, we derive an asymptotic expansion of
the loss for each sample in~\eqref{eq:loss_general}, consisting of a leading
Busemann term and an exponentially small correction. Recall from
Assumption~(\ref{assump:perm}) that \(\bm{\phi}\) is a totally regular PERM
loss with template \(\psi\). Hence
\(\ell_i(\mathbf{x}_i,y_i)=\psi(\Upsilon_{y_i}D\mathbf v(\mathbf{x}_i))\),
where \(\mathbf v(\mathbf{x}_i)=(v_1(\mathbf{x}_i),\ldots,v_C(\mathbf{x}_i))
=(-d_{\HH}(\mathbf{x}_i,\mathbf w_1),\ldots,-d_{\HH}(\mathbf{x}_i,\mathbf w_C))\).

\begin{definition}[Busemann loss, risk  and correction coefficient]
\label{def:bus_loss}
For a direction $\bm{\xi}= (1, \xhat) \in \partial\Hn$ with label $y \in [C]$, and prototype $\{\mathbf{w}_c\}_{c=1}^C$, define:
\begin{align}
\mathcal{L}^{\mathrm{BS}}(\bm{\xi}, y) 
\;&:=\; \psi\bigl(\Upsilon_{y} D\,\mathbf{b}(\bm{\xi})\bigr), 
\qquad \quad\text{where } \mathbf{b}(\bm{\xi}) \;:=\; (-b_{\bm{\xi}}(\mathbf{w}_1), \ldots, -b_{\bm{\xi}}(\mathbf{w}_C))^\top,\\ 
\boldsymbol{\alpha}(\bm{\xi}) 
\;&:=\; \bigl(\alpha_{\bm{\xi}}(\mathbf{w}_1), \ldots, \alpha_{\bm{\xi}}(\mathbf{w}_C)\bigr)^\top,\qquad \text{where }\alpha_{\bm{\xi}}(\mathbf{w}_c)
  \;:=\;  \frac{\norm{\wbar_c^\perp}^2}
             {\ipL{\mathbf{w}_c}{\bm{\xi}}^2},
\end{align}
here $\wbar_c^\perp := \wbar_c - (\xhat \cdot \wbar_c)\,\xhat$ is 
the component of $\wbar_c$ orthogonal to the boundary
direction~$\xhat$.

We call $\mathcal{L}^{\mathrm{BS}} : \partial\Hn \times [C] \to \R$ the \emph{Busemann loss} function, and $\bm{\alpha}: \partial \Hn \to \R^C$  the \emph{correction coefficient}.
Moreover, the \emph{Busemann risk} is
\begin{equation}
\mathcal{R}_{\mathbf{W}}^{\mathrm{BS}}(\bm{\xi}_1,\ldots,\bm{\xi}_N)
  \;:=\; \frac{1}{N}\sum_{i=1}^N
         \mathcal{L}^{\mathrm{BS}}(\bm{\xi}_i, y_i)
\end{equation}
\end{definition}

\begin{proposition}[Loss expansion]
\label{prop:loss_expansion}
Under Assumption~(\ref{assump:perm}), there exists a 
constant $R_0(\mathbf{W}) > 0$ such that for each 
$i \in [N]$ and all $r_i \ge R_0$,
\begin{equation}\label{eq:loss_expansion1}
\ell_i(\mathbf{x}_i, y_i) 
\;=\; \mathcal{L}^{\mathrm{BS}}(\bm{\xi}_i, y_i) 
\;+\; \mu(\bm{\xi}_i, y_i)\,e^{-2r_i} 
\;+\; O(e^{-4r_i}),
\end{equation}
where
\begin{equation}\label{eq:beta_def_v22}
\mu(\bm{\xi}, y) \;:=\; -\bigl\langle 
\nabla\psi\bigl(\Upsilon_{y} D\,\mathbf{b}(\bm{\xi})\bigr),\; 
\Upsilon_{y} D\,\boldsymbol{\alpha}(\bm{\xi})\bigr\rangle,
\end{equation}
and we call $\mu : \partial\Hn \times [C] \to \mathbb{R}$ 
the radial drift coefficient.
\end{proposition}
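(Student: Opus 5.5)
The plan is to sharpen Lemma~\ref{lem:dist_busemann} to second order, so that the $O(e^{-2r})$ term in the distance is identified exactly as $\alpha_{\bm\xi}(\mathbf w)e^{-2r}$ with an $O(e^{-4r})$ remainder. I then push this expansion through the PERM template by a first-order Taylor expansion of $\psi$. The radial term $r$ drops out because of the relative-margin structure, which is why only the Busemann term and the correction survive.

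\emph{Step 1 (second-order distance expansion).} For $\mathbf x=(\cosh r,\sinh r\,\xhat)$, set $a:=w_0-\xhat\cdot\wbar=-\ipL{\mathbf w}{\bm\xi}$ and $c:=w_0+\xhat\cdot\wbar$. Expanding $\cosh r$ and $\sinh r$ exactly gives
\[
-\ipL{\mathbf x}{\mathbf w}=u:=\tfrac{e^r}{2}\,a\,\bigl(1+\tfrac{c}{a}e^{-2r}\bigr).
\]
I then use $\arcosh u=\log(2u)-\tfrac{1}{4u^2}+O(u^{-4})$, valid for $u$ bounded away from $1$. This yields
\[
\log(2u)=r+\log a+\tfrac{c}{a}e^{-2r}+O(e^{-4r}),\qquad \tfrac{1}{4u^2}=\tfrac{e^{-2r}}{a^2}+O(e^{-4r}).
\]
The $e^{-2r}$ coefficient is therefore $(ac-1)/a^2$. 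Now $ac=w_0^2-(\xhat\cdot\wbar)^2=1+\norm{\wbar}^2-(\xhat\cdot\wbar)^2=1+\norm{\wbar^\perp}^2$, using $w_0^2-\norm{\wbar}^2=1$. Hence the coefficient equals $\norm{\wbar^\perp}^2/\ipL{\mathbf w}{\bm\xi}^2=\alpha_{\bm\xi}(\mathbf w)$, and
\[
d_{\HH}(\mathbf x,\mathbf w)=r+\buse(\mathbf w)+\alpha_{\bm\xi}(\mathbf w)e^{-2r}+O(e^{-4r}).
\]
For uniformity in $\xhat\in\Sn$, note that $a\ge w_0-\norm{\wbar}>0$ and that $a,c$ are bounded above. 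Consequently $u\ge \tfrac{e^r}{2}(w_0-\norm{\wbar})$, and both $R_0$ and the $O(\cdot)$ constants depend only on $\mathbf w$. Taking the maximum over the $C$ prototypes gives $R_0(\mathbf W)$.

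\emph{Step 2 (cancellation of the radial term).} Write $\mathbf 1=(1,\ldots,1)^\top\in\R^C$. Stacking over classes gives
\[
\mathbf v(\mathbf x_i)=-r_i\mathbf 1+\mathbf b(\bm\xi_i)-e^{-2r_i}\bm\alpha(\bm\xi_i)+O(e^{-4r_i}).
\]
Since $D\mathbf 1=0$, we obtain
\[
\Upsilon_{y_i}D\mathbf v(\mathbf x_i)=\mathbf m_i-e^{-2r_i}\Upsilon_{y_i}D\bm\alpha(\bm\xi_i)+O(e^{-4r_i}),\qquad \mathbf m_i:=\Upsilon_{y_i}D\mathbf b(\bm\xi_i).
\]

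\emph{Step 3 (Taylor expansion of $\psi$).} Applying Taylor's theorem to $\psi$ at $\mathbf m_i$ with increment $h=O(e^{-2r_i})$ gives
\[
\psi(\mathbf m_i+h)=\psi(\mathbf m_i)+\langle\nabla\psi(\mathbf m_i),h\rangle+O(\norm h^2).
\]
The first term is $\mathcal L^{\mathrm{BS}}(\bm\xi_i,y_i)$. The linear term equals $\mu(\bm\xi_i,y_i)e^{-2r_i}$ plus a contribution $\langle\nabla\psi(\mathbf m_i),O(e^{-4r_i})\rangle=O(e^{-4r_i})$. The quadratic remainder is $O(e^{-4r_i})$.

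The step I expect to be the main obstacle is making these remainders uniform over $\bm\xi_i\in\partial\Hn$. Because $\buse(\mathbf w_c)$ is continuous on the compact sphere, the points $\mathbf m_i$ range over a compact set $M$; after enlarging $R_0$, the increments $h$ stay within a compact unit neighborhood $M'$ of $M$. On $M'$ the gradient $\nabla\psi$ is bounded, since it is continuous: $\psi$ is convex and differentiable, hence $C^1$. For the $O(\norm h^2)$ remainder I need the Hessian of $\psi$ to be bounded on $M'$. Twice differentiability alone gives only $o(\norm h)$ pointwise, so I would first try to use local boundedness of $\nabla^2\psi$, applying the mean value inequality to $\nabla\psi$ along segments in $M'$. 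This holds for every loss listed after (\ref{assump:perm}), where $\psi$ is smooth, and I would state it explicitly as the regularity being used, reading ``twice differentiable'' in Definition~\ref{def:totally_regular} as $C^2$.
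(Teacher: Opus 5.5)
Your proposal is correct and follows the paper's proof essentially step for step. You expand $\arcosh$ to second order to identify the $e^{-2r}$ coefficient as $(AB-1)/A^2=\alpha_{\bm\xi}(\mathbf w)$, cancel the common term $-r\mathbf 1$ via $D\mathbf 1=\mathbf 0$, and then Taylor-expand $\psi$ to first order at $\Upsilon_y D\,\mathbf b(\bm\xi)$.

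Your extra attention to uniformity over $\bm\xi$ is a real refinement, since the paper simply treats $\nabla\psi(\mathbf z_0)$ as a fixed vector. One correction to your remark: pointwise twice differentiability already gives an $O(\norm{h}^2)$ Peano remainder, not just $o(\norm{h})$. What your $C^2$ (locally bounded Hessian) reading actually buys is a single constant over the compact set of Busemann margins, and that uniform constant is what the later bounds in Lemma~\ref{lem:radial_gradient} and Theorem~\ref{thm:radial} depend on.
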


\begin{proof}[Proof sketch]
By extending Lemma~\ref{lem:dist_busemann},
$v_c(\mathbf{x}_i) = -d_{\HH}(\mathbf{x}_i, \mathbf{w}_c) = -r_i - b_{\bm{\xi}_i}(\mathbf{w}_c) - \alpha_{\bm{\xi}_i}(\mathbf{w}_c)\,e^{-2r_i} + O(e^{-4r_i})$.
Since $\ell_i$ is a PERM loss by Assumption~(\ref{assump:perm}), it depends on
$\mathbf v$ only through the relative margin vector
$\Upsilon_{y_i}D\mathbf v$.
The common radial term $-r_i$ is annihilated by $D$, so:
\begin{equation}\label{eq:expansion}
\Upsilon_{y_i} D\, \mathbf{v}(\mathbf{x}_i) 
\;=\; \Upsilon_{y_i} D\, \mathbf{b}(\bm{\xi}_i) \;-\; \Upsilon_{y_i} D\,\boldsymbol{\alpha}(\bm{\xi}_i)\,e^{-2r_i} \;+\; O(e^{-4r_i}).
\end{equation}
Then applying Taylor expansion of $\psi$ around
$\Upsilon_{y_i}D\mathbf b(\bm{\xi}_i)$ to the relative margin expansion
in~\eqref{eq:expansion}, and using the definition of
$\mu(\bm{\xi}_i,y_i)$ in~\eqref{eq:beta_def_v22}, yields the loss
expansion~\eqref{eq:loss_expansion1}.

The full and detailed proof is in 
Appendix [\ref{proof:4.3}].
\end{proof}
The common radial term cancels in the relative margin vector
\(\Upsilon_{y_i}D\,\mathbf v\) by the relative margin structure in
Assumption~(\ref{assump:perm}). Therefore, the leading order loss depends only
on the boundary direction \(\bm{\xi}_i\), not on the radius \(r_i\). This yields
the radial angular decoupling in leading order, which drives our asymptotic analysis. Summing over \(i\), the empirical loss admits
$
\label{eq:total_loss_expansion}
L_{\mathbf W}(\mathbf X)
=
\frac1N\sum_{i=1}^N
\left[
  \mathcal L^{\mathrm{BS}}(\bm{\xi}_i,y_i)
  +
  \mu(\bm{\xi}_i,y_i)e^{-2r_i}
  +
  O(e^{-4r_i})
\right].
$
In particular, as \(\min_i r_i\to\infty\),
\begin{equation}
L_{\mathbf W}(\mathbf X)
=
\mathcal R_{\mathbf W}^{\mathrm{BS}}
(\bm{\xi}_1,\ldots,\bm{\xi}_N)
+
o(1).   
\end{equation}

This decoupling motivates our dynamical analysis below, where we study how
the radial and angular components of the embeddings evolve under gradient
flow. Throughout the analysis, we fix the class prototypes
\(\{\mathbf w_c\}_{c=1}^C\) and study the gradient flow dynamics of the
embeddings \(\{\mathbf x_i\}_{i=1}^N\) directly on \(\Hn\). The setting of
fixed prototypes isolates the geometric mechanism behind boundary
divergence in distance based hyperbolic classifiers, is natural in view of
the symmetry of hyperbolic distance, and is consistent with methods using
fixed prototypes in prototypical learning~\cite{fix3,fix1,fix2}. It also
parallels classical implicit bias analyses of linear classification, while
shifting the object of study from classifier weights to the radial and
angular dynamics of hyperbolic embeddings.

\section{Radial and Angular Behavior}
\label{section 5}
We study the Riemannian gradient flow on $(\Hn)^N$:
\begin{equation}\label{eq:grad_flow}
\dot{\mathbf{x}}_i(t) \;=\; -\grad_{\Hn}\, \ell_i(\mathbf{x}_i(t), y_i), \qquad i = 1, \ldots, N,
\end{equation}
with initial conditions $\mathbf{x}_i(0) \in \Hn$, where for any smooth function $f$ on a Riemannian manifold $(M, g)$ with metric matrix $G$, the Riemannian gradient is defined as
\(
\grad_{M} f = G^{-1}\,\nabla f,
\)
with $\nabla f$ denoting the vector of partial derivatives in local coordinates.
This is the continuous time analogue of Riemannian gradient descent $\mathbf{x} \leftarrow \exp_{\mathbf{x}}(-\eta\, \grad_{M} f)$.
Recall that each point on the hyperboloid is $\mathbf{x}_i = (\cosh r_i,\, \sinh r_i\,\xhat_i)$ with $r_i > 0$ and $\xhat_i \in \Sn$. In these polar coordinates $(r_i, \xhat_i)$, the induced metric takes the block diagonal form $ds^2 = dr^2 + \sinh^2(r)\,d\omega^2$, where $d\omega^2$ is the round metric on $\Sn$.
Since the metric is block diagonal between $r$ and $\xhat$, the gradient flow decouples into a radial and an angular equation:
\vspace{-4pt}
\begin{equation}\label{eq:polar_flow}
\dot{r}_i \;=\; -\frac{\partial \ell_i}{\partial r_i}, 
\qquad
\dot{\xhat}_i \;=\; -\frac{1}{\sinh^2(r_i)}\,\grad_{\Sn}\,\ell_i\bigl|_{\xhat_i},
\end{equation}
where $\grad_{\Sn} \ell_i = \nabla_{\xhat} \ell_i - (\nabla_{\xhat} \ell_i \cdot \xhat_i)\,\xhat_i$ is the Euclidean gradient with respect to $\xhat_i \in \R^n$ projected onto the tangent plane of the sphere at $\xhat_i$.
We first analyze a single trajectory that fix \(i\in[N]\) and suppress the subscript \(i\).
The extension to all samples follows by applying the same argument independently to each \(i\).

\subsection{Radial Dichotomy}

\begin{lemma}[Radial gradient]
\label{lem:radial_gradient}
Under Assumptions~(\ref{assump:perm}), there exists a 
constant $R_0(\mathbf{W}) > 0$ such that for all $r \ge R_0$,
\begin{equation}\label{eq:radial_grad}
\frac{\partial \ell}{\partial r} 
\;=\; -2\mu(\bm{\xi}, y)\,e^{-2r} 
\;+\; O(e^{-4r}),
\end{equation}
where $\mu$ is the radial drift coefficient defined in~\eqref{eq:beta_def_v22}.
\end{lemma}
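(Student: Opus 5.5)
The plan is to differentiate the exact distance formula in $r$ at fixed direction $\xhat$, rather than differentiate the $O(\cdot)$ remainder in Proposition~\ref{prop:loss_expansion}, which would not be justified. I would then push the result through the PERM structure with the chain rule.

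\textbf{Step 1 (exact radial derivative of each distance).} For a prototype $\mathbf w_c=(w_0,\wbar)$ set $\beta_c=w_0-\xhat\cdot\wbar$ and $\gamma_c=w_0+\xhat\cdot\wbar$. Then
\[
A_c:=-\ipL{\mathbf x}{\mathbf w_c}=\cosh r\,w_0-\sinh r\,\xhat\cdot\wbar=\tfrac12\bigl(e^{r}\beta_c+e^{-r}\gamma_c\bigr),
\]
and $\beta_c\gamma_c=w_0^2-(\xhat\cdot\wbar)^2=1+\norm{\wbar_c^\perp}^2$. The exact derivative is
\[
\frac{\partial d_{\HH}(\mathbf x,\mathbf w_c)}{\partial r}=\frac{\partial_r A_c}{\sqrt{A_c^2-1}},\qquad \partial_rA_c=\tfrac12\bigl(e^{r}\beta_c-e^{-r}\gamma_c\bigr).
\]
Expanding in $q=e^{-2r}$ should give
\[
\partial_r d_{\HH}(\mathbf x,\mathbf w_c)=1-2\alpha_{\bm\xi}(\mathbf w_c)\,e^{-2r}+O(e^{-4r}),
\]
with $\alpha=(\beta\gamma-1)/\beta^2=\norm{\wbar^\perp}^2/\ipL{\mathbf w}{\bm\xi}^2$. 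This is consistent with the term $\alpha e^{-2r}$ in the distance expansion used for Proposition~\ref{prop:loss_expansion}, since $\partial_r(\alpha e^{-2r})=-2\alpha e^{-2r}$.

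For uniformity over $\xhat\in\Sn$, note that $\beta_c\ge w_0-\norm{\wbar_c}>0$ because $w_0^2=1+\norm{\wbar_c}^2$. Hence $\beta_c,\gamma_c$ lie in a compact subset of $(0,\infty)$. The ratio above is then a real-analytic function of $q$ near $0$, with coefficients continuous in $\xhat$, so the $O(e^{-4r})$ is uniform once $r\ge R_0(\mathbf W)$.

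\textbf{Step 2 (chain rule through the template).} Write $\ell=\psi(\Upsilon_yD\mathbf v)$ with $v_c=-d_{\HH}(\mathbf x,\mathbf w_c)$. Then
\[
\frac{\partial\ell}{\partial r}=\Bigl\langle\nabla\psi(\Upsilon_yD\mathbf v),\,\Upsilon_yD\,\partial_r\mathbf v\Bigr\rangle,\qquad \partial_r\mathbf v=-\mathbf 1+2\boldsymbol\alpha(\bm\xi)e^{-2r}+O(e^{-4r}).
\]
Since $D\mathbf 1=0$, the leading radial term cancels, which is the same mechanism as in Proposition~\ref{prop:loss_expansion}. What remains is $2\langle\nabla\psi(\Upsilon_yD\mathbf v),\Upsilon_yD\boldsymbol\alpha\rangle e^{-2r}+O(e^{-4r})$, using boundedness of $\nabla\psi$ on compacts.

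\textbf{Step 3 (re-centering the gradient, main obstacle).} By Proposition~\ref{prop:loss_expansion}, $\Upsilon_yD\mathbf v=\Upsilon_yD\mathbf b(\bm\xi)+O(e^{-2r})$. I then need $\nabla\psi(\Upsilon_yD\mathbf v)-\nabla\psi(\Upsilon_yD\mathbf b)=O(e^{-2r})$ uniformly. This is the delicate point, because Assumption~(\ref{assump:perm}) only gives twice differentiability, so the mean value bound needs $\nabla^2\psi$ bounded on a neighborhood of the compact set $\{\Upsilon_yD\mathbf b(\bm\xi):\bm\xi\in\partial\Hn\}$. This compactness holds because $\mathbf b$ is continuous on $\Sn$.

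I would first try to get the local boundedness of $\nabla^2\psi$ from the mean value theorem applied to the differentiable map $\nabla\psi$ on that compact neighborhood. Failing that, I would assume $\psi\in C^2$, which holds for cross-entropy, exponential and Gamma-Phi losses. This also matches what the Taylor step in Proposition~\ref{prop:loss_expansion} already used.

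Multiplying the $O(e^{-2r})$ gradient error by $e^{-2r}$ gives an $O(e^{-4r})$ error. Hence $\partial_r\ell=2\langle\nabla\psi(\Upsilon_yD\mathbf b),\Upsilon_yD\boldsymbol\alpha\rangle e^{-2r}+O(e^{-4r})=-2\mu(\bm\xi,y)e^{-2r}+O(e^{-4r})$ by the definition in \eqref{eq:beta_def_v22}, with $R_0(\mathbf W)$ taken as the maximum of the finitely many per-prototype thresholds.
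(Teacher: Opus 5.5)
Your proof is correct, and it takes a genuinely different route from the paper's. The paper differentiates the loss expansion of Proposition~\ref{prop:loss_expansion} term by term. It simply asserts that the remainder $\rho$ is $C^1$ with $\partial_r\rho=O(e^{-4r})$ uniformly. Nothing in Lemma~\ref{lem:dist_busemann} or Proposition~\ref{prop:loss_expansion} bounds derivatives of remainders, so that step is unproved there.

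You instead differentiate the exact formula $\partial_r d_{\HH}(\mathbf x,\mathbf w_c)=\partial_rA_c/\sqrt{A_c^2-1}$. You then expand it as an analytic function of $q=e^{-2r}$, with parameters $(\beta_c,\gamma_c)$ ranging over a compact subset of $(0,\infty)^2$. Your expansion $1-2\alpha_{\bm{\xi}}(\mathbf w_c)e^{-2r}+O(e^{-4r})$ is right, and it supplies exactly the justification the paper's shortcut lacks. The chain rule with $D\mathbf 1=\mathbf 0$ and the re-centering of $\nabla\psi$ then give $-2\mu e^{-2r}$.

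The price of your route is that the regularity of $\psi$ has to be faced explicitly. Your first idea for Step~3 does not work. The mean value theorem turns a Hessian bound into a Lipschitz bound, but it cannot produce the Hessian bound. Twice differentiability alone, even with strict convexity, does not make $\nabla^2\psi$ locally bounded. For example, a convex $\psi$ whose second derivative exists everywhere but has ever-taller, ever-narrower spikes accumulating at a point.

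Without more regularity, you only get an $O(e^{-4r})$ error for each fixed $\bm{\xi}$, from differentiability of $\nabla\psi$ at $\Upsilon_yD\mathbf b(\bm{\xi})$. That error is not uniform in $\bm{\xi}$, so it does not yield the uniform constant later used as $M$ in Theorem~\ref{thm:radial}. Your fallback, $\psi\in C^2$ or just a locally Lipschitz $\nabla\psi$ near the compact set $\{\Upsilon_yD\mathbf b(\bm{\xi})\}$, is therefore the right fix. The paper tacitly needs the same assumption for its uniform $O(\|\boldsymbol\delta\|^2)$ Taylor remainder. It holds for the concrete templates the paper actually uses (cross-entropy, multiclass exponential, sum-of-softplus).
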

\begin{proof}
By Lemma~\ref{lem:dist_busemann}, the expansion in
Proposition~\ref{prop:loss_expansion} holds with a $C^1$ remainder
$\rho$ satisfying $\rho=O(e^{-4r}),\partial_r\rho=O(e^{-4r})$ uniformly.
Differentiating
$\ell=\mathcal{L}^{\mathrm{BS}}(\bm{\xi},y)+
\mu(\bm{\xi},y)e^{-2r}+\rho$
in $r$, and noting that $\mathcal{L}^{\mathrm{BS}}$ and $\mu$ are
$r$-independent, yields
$\partial_r\ell=-2\mu(\bm{\xi},y)e^{-2r}+O(e^{-4r})$.
\end{proof}

\begin{theorem}[Radial dichotomy]
\label{thm:radial}
Under Assumption~(\ref{assump:perm}), for any drift lower bound
\(\underline{\mu}>0\), there exists
\(R_0=R_0(\mathbf{W},\underline{\mu})>0\) such that:
\begin{enumerate}[label=\textup{(\alph*)}]
\item \textbf{Boundary divergence.}\;
  If there exist \(T_2>T_1\geq0\) such that \(r(T_1)\ge R_0\) and
  \(\mu(\bm{\xi}(t),y)\ge\underline{\mu}\) on the time interval
  \(I=[T_1,T_2]\), then
  \(
    r(t)\ge\tfrac{1}{2}\log\!\bigl(
      e^{2r(T_1)}+2\underline{\mu}(t-T_1)
    \bigr)
  \)
  for all \(t\in I\).
  Moreover, if \(r(T_1)\ge R_0\) and
  \(\mu(\bm{\xi}(t),y)\ge\underline{\mu}\) for all \(t\ge T_1\), then
  \begin{equation}\label{eq:radial_rate_div}
    r(t)=\tfrac{1}{2}\log t+O(1),
    \qquad t\to\infty.
  \end{equation}

\item \textbf{Boundary repulsion.}\;
  If there exist \(T_2>T_1\geq0\) such that
  \(\mu(\bm{\xi}(t),y)\le-\underline{\mu}\) and \(r(t)\geq R_0\) for all
  \(t\in I=[T_1,T_2]\), then
  \(
    r(t)\le\tfrac{1}{2}\log\!\bigl(
      e^{2r(T_1)}-2\underline{\mu}(t-T_1)
    \bigr)
  \)
  for all \(t\in I\).
  Moreover, if \(r(T_1)\ge R_0\) and
  \(\mu(\bm{\xi}(t),y)\le-\underline{\mu}\)
  whenever \(r(t)\ge R_0\), then the first hitting time
  \(T^*:=\inf\{t\ge T_1:\, r(t)\le R_0\}\) satisfies
  \begin{equation}
    T^*\le T_1+
    \frac{e^{2r(T_1)}-e^{2R_0}}{2\underline{\mu}}
    <\infty.
  \end{equation}
\end{enumerate}
\end{theorem}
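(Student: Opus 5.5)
The plan is to reduce everything to a scalar differential inequality for $u(t):=e^{2r(t)}$. By the radial flow equation in \eqref{eq:polar_flow} together with Lemma~\ref{lem:radial_gradient},
\[
\dot r=-\partial_r\ell=2\mu(\bm{\xi}(t),y)e^{-2r}+\rho(t),\qquad |\rho(t)|\le M e^{-4r},
\]
valid whenever $r\ge R_0'(\mathbf W)$. The constant $M$ is uniform in $\bm{\xi}$: the remainder in Proposition~\ref{prop:loss_expansion} is uniform because $\partial\Hn$ is compact, the prototypes are fixed, and $w_0-\xhat\cdot\wbar_c\ge w_0-\norm{\wbar_c}>0$ bounds the Busemann data uniformly. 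Multiplying by $2e^{2r}$ gives
\[
\dot u=4\mu+\tilde\rho,\qquad |\tilde\rho|\le 2Me^{-2r}.
\]
Now choose $R_0\ge R_0'$ so large that $2Me^{-2R_0}\le 2\underline{\mu}$, i.e.\ the error absorbs at most half the drift. Then, whenever $r\ge R_0$ and $\mu\ge\underline{\mu}$, we get $\dot u\ge 4\underline{\mu}-2\underline{\mu}=2\underline{\mu}$. Symmetrically, whenever $r\ge R_0$ and $\mu\le-\underline{\mu}$, we get $\dot u\le-2\underline{\mu}$. This $R_0$ depends only on $\mathbf W$ and $\underline{\mu}$, as the statement requires.

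\textbf{Part (a).} The issue is that the inequality $\dot u\ge 2\underline{\mu}$ needs $r(t)\ge R_0$ for all $t\in I$, not just at $T_1$. I handle this with a continuation argument. Let $s=\sup\{t\in I: r\ge R_0 \text{ on } [T_1,t]\}$. On $[T_1,s]$ we have $\dot u\ge 2\underline{\mu}>0$, so $u$ is increasing and $r(s)\ge r(T_1)\ge R_0$. If $s<T_2$, then because $u$ is strictly increasing at $s$, the trajectory stays above $R_0$ slightly beyond $s$, contradicting maximality. So $s=T_2$. Integrating gives $u(t)\ge u(T_1)+2\underline{\mu}(t-T_1)$, which is the claimed lower bound after taking $\tfrac12\log$.

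For the asymptotic law \eqref{eq:radial_rate_div}, the lower bound already yields $r\ge\tfrac12\log t-O(1)$. For the upper bound, use $\mu\le \bar\mu:=\sup_{\partial\Hn}|\mu|<\infty$, which is finite by compactness and continuity of $\nabla\psi$, $\mathbf b$ and $\bm\alpha$. Since $r\ge R_0$ throughout, $\dot u\le 4\bar\mu+2\underline{\mu}$, so $u(t)\le u(T_1)+Ct$. Combining the two bounds on $u$ gives $r=\tfrac12\log t+O(1)$.

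\textbf{Part (b).} On $I$ the hypothesis gives $r\ge R_0$ directly, so $\dot u\le-2\underline{\mu}$ and integration yields the stated upper bound. For the hitting time, suppose $T^*>T_1+(e^{2r(T_1)}-e^{2R_0})/(2\underline{\mu})=:T'$. Then $r\ge R_0$ on $[T_1,T']$. There the hypothesis gives $\mu\le-\underline{\mu}$, so $u(T')\le e^{2R_0}$, i.e.\ $r(T')\le R_0$. This forces $T^*\le T'$, a contradiction. Hence $T^*\le T'<\infty$.

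\textbf{Main obstacle.} The delicate step is making the $O(e^{-4r})$ remainder in Lemma~\ref{lem:radial_gradient} uniform in the direction $\bm{\xi}(t)$, which varies along the trajectory. This uniformity is what lets a single $R_0(\mathbf W,\underline{\mu})$ work. I would verify it by tracking the Taylor remainder of $\psi$ on the compact set $\{\Upsilon_yD\mathbf b(\bm\xi)\}$ enlarged by a small neighborhood, where $\nabla^2\psi$ is bounded.
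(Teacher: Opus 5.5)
Your proposal is correct and follows essentially the same route as the paper: absorb the uniform $O(e^{-4r})$ remainder by choosing $R_0$ with $Me^{-2R_0}\le\underline{\mu}$, integrate the resulting inequality $\frac{d}{dt}e^{2r}\ge 2\underline{\mu}$ (respectively $\le -2\underline{\mu}$), and use the supremum of $\mu$ over the compact boundary for the matching upper bound in part (a). Your sup-based continuation argument for keeping $r\ge R_0$ on $I$, and your contradiction argument for the hitting time, are more careful versions of steps the paper handles only informally.
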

\noindent\textit{Proof sketch.}
Firstly, by Proposition~\ref{prop:loss_expansion}, differentiating the loss expansion with
respect to~$r$ in geodesic
polar coordinates gives
$\dot{r} = 2\mu(t)\,e^{-2r} + O(e^{-4r})$.
For $r \geq R_0$ sufficiently large, the remainder
is absorbed, yielding
$\underline{\mu}\,e^{-2r}
\leq \dot{r}
\leq 3\bar{\mu}\,e^{-2r}$
when $\mu \geq \underline{\mu} > 0$.
\begin{wrapfigure}{r}{0.4\textwidth}
  \centering
  \vspace{-10pt}
  \includegraphics[width=0.35\textwidth]{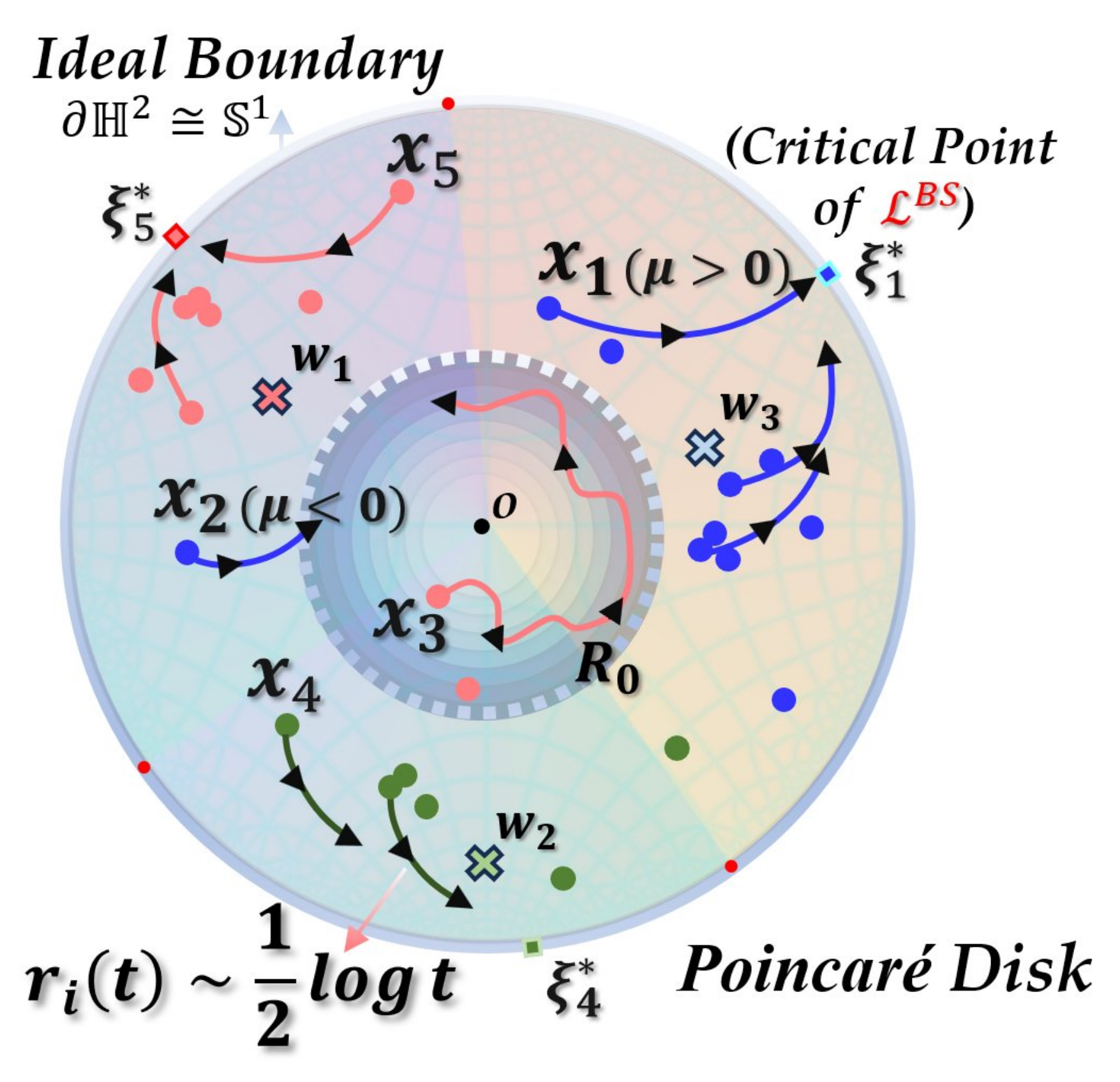}
  \caption{Radial dichotomy and boundary convergence.}
  \label{fig:asy}
  \vspace{-40pt}
\end{wrapfigure}
(i) Part~(a):
From the radial ODE, for $r \geq R_0$ large enough,
$\dot{r} \geq \underline{\mu}\,e^{-2r}$.
Multiplying both sides by $e^{2r}$ and recognizing
$\frac{d}{dt}[e^{2r}] = 2e^{2r}\dot{r}$, we obtain
$\frac{d}{dt}[e^{2r}] \geq 2\underline{\mu}$.
Integrating from $T_1$ to $t$ gives
$e^{2r(t)} \geq 2\underline{\mu}(t-T_1) + e^{2r(T_1)}$,
hence $r(t) \geq \frac{1}{2}\log t + O(1)$ if the condition holds for all $t\geq T_1$.
The matching upper bound follows from
$\dot{r} \leq 3\bar{\mu}\,e^{-2r}$ by the same
argument.

(ii) Part~(b): When $\mu \leq -\underline{\mu} < 0$ and
$r \geq R_0$, the same technique gives
$\frac{d}{dt}[e^{2r}] \leq -2\underline{\mu}$.
Integrating from $T_1$ to $t$ yields
$e^{2r(t)} \leq e^{2r(T_1)} - 2\underline{\mu}(t-T_1)$,
which reaches $e^{2R_0}$ no later than
$T^* = T_1+(e^{2r(T_1)} - e^{2R_0})/(2\underline{\mu})$.

The full and detailed proof is in 
Appendix [\ref{proof: 5.2}].
\hfill $\square$

Figure~\ref{fig:asy} illustrates the radial dichotomy and angular convergence
on the Poincar\'e disk. The dashed circle denotes the fixed threshold radius
\(R_0\), outside which the large radius expansion is valid. The crosses
\(w_1,w_2,w_3\) are fixed prototypes, and the colored points are embeddings.
For \(x_1\), positive drift \(\mu(\bm{\xi}_1,y_1)>0\) drives the trajectory
toward the ideal boundary, while for \(x_2\), negative drift
\(\mu(\bm{\xi}_2,y_2)<0\) pushes the trajectory back toward the interior,
as characterized by Theorem~\ref{thm:radial}. The point \(x_3\) starts inside
\(R_0\), where the large radius expansion does not control the dynamics and
the motion is instead governed by the full nonconvex loss \(\ell\).

\subsection{Angular Convergence}
\label{sec:angular}
We first consider the large-radius dynamics with fixed finite
prototypes. Although trajectories remain bounded, they may undergo
outward radial excursions before returning toward the interior.
During this process, angular alignment can reverse the radial drift,
while the directions approach local minima of
$\mathcal L^{\mathrm{BS}}$ to arbitrary accuracy at sufficiently large
initial radii. To formalize this behavior, we consider the following
configuration:

\begin{enumerate}[label=(A\arabic*), ref=A\arabic*, resume=assumptions]
  \item \label{assump:noncollinear}
  \textbf{\textit{Noncollinear prototypes: The fixed prototypes
  $\mathbf W=\{w_c\}_{c=1}^{C}\subset\Hn$, $C\ge3$,
  are pairwise distinct, and there exists no hyperbolic geodesic
  containing all $w_c$, $c\in[C]$.}}
\end{enumerate}

For sufficiently large initial radii, trajectories with initially
positive drift can undergo long time outward motion, allowing angular
alignment to reverse the sign of $\mu$ and induce radial return.
During this slow return, the angular deviation from a Busemann local
minimum decreases exponentially with the return radius $R$.
We formalize this behavior through the following aligned region.

\begin{definition}[Sufficiently aligned region]
\label{def:aligned}
Let $\bm\xi^*\in\partial\Hn$ be a nondegenerate local
minimum of $\mathcal L^{\mathrm{BS}}(\cdot,y)$.
Define
\begin{equation}
\mathcal A(\bm\xi^*)
:=
\left\{
\bm\xi\in\partial\Hn:
d_{\Sn}(\bm\xi,\bm\xi^*)\le\theta_*
\right\},
\end{equation}
where $d_{\Sn}(\bm\xi,\bm\xi^*)
=\arccos(\hat{\bm\xi}^{\top}\hat{\bm\xi}^*)$
is the spherical geodesic distance, and $\theta_*>0$
is sufficiently small that $\mu(\bm\xi,y)<0$
and the Busemann Hessian is positive definite
throughout $\mathcal A(\bm\xi^*)$.
\end{definition}

The angular width of this region depends on the prototype
geometry. For any number of distinct prototype directions with
common radius $\rho$, the maximal alignment angle near
the Busemann minimum approaching the true class direction
scales as $\Theta(e^{-2\rho})$ under cross-entropy.

\refstepcounter{theorem}
\label{thm:three_phase}
\noindent{\textbf{Theorem~\thetheorem}
(Three-phase Busemann dynamics).}
\textit{Under Assumptions~(\ref{assump:perm}) and
~(\ref{assump:noncollinear}), with $\psi$ real analytic,
suppose $\bm\xi(T_0)$ lies in the Busemann gradient-flow
basin of a nondegenerate local minimum $\bm\xi^*$,
with $\mu(\bm\xi(T_0),y)>0$.
For a fixed initial angular direction, for any $R\ge R_0$, if
$r(T_0)\ge 2R$, then there exist finite times
$T_0<T_1<T_2<T_3$ satisfying:}

\begin{wrapfigure}{r}{0.19\textwidth}
  \centering
  \vspace{-15pt}
  \includegraphics[width=0.19\textwidth]{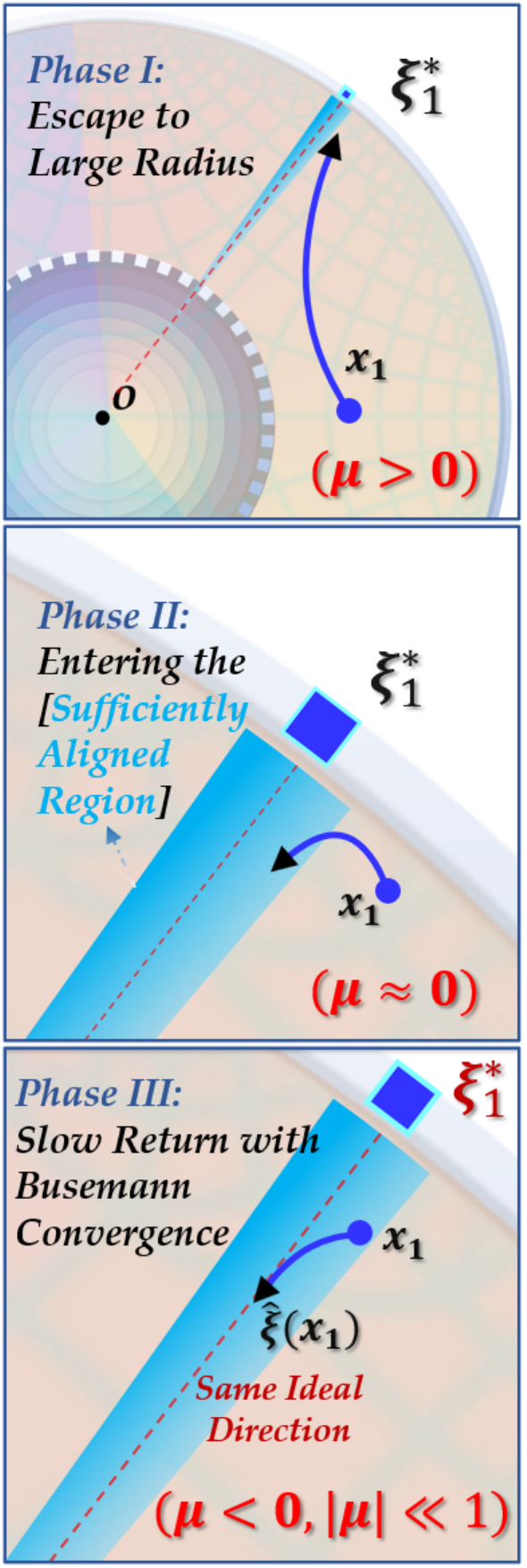}
  \caption{Three-phase Busemann dynamics.}
  \label{fig:three_phase}
  \vspace{-28pt}
\end{wrapfigure}

\noindent\textbf{(a) Large-radius excursion.}\;
\textit{The trajectory initially moves outward,
$\dot r(T_0)>0$, while
$\|\dot{\xhat}(t)\|=O(e^{-2r(t)})$
in the large-radius regime.}

\smallskip
\noindent\textbf{(b) Angular alignment and drift reversal.}\;
\textit{At $T_1$, $\mu(\bm\xi(T_1),y)=0$,
and $\mu(\bm\xi(t),y)<0$ for $t\in(T_1,T_3]$.
At $T_2$, the trajectory first enters the
sufficiently aligned region
$\mathcal A(\bm\xi^*)$.}

\smallskip
\noindent\textbf{(c) Slow radial return.}\;
\textit{For $t\in[T_2,T_3]$, $\dot r(t)<0$
and $|\dot r(t)|=O(e^{-2r(t)})$, with
$r(T_3)=R$.}
\textit{For any $\varepsilon>0$,
the returning trajectory reaches
\begin{equation}
    d_{\Sn}(\bm\xi(T_3),\bm\xi^*)\le\varepsilon
\end{equation}
in finite time by choosing $R$ sufficiently large.}

\noindent\textit{Proof sketch.}
For sufficiently large $r$, the radial and angular dynamics satisfy
$\dot r=2\mu(\bm\xi,y)e^{-2r}+O(e^{-4r})$ and
$\dot{\xhat}=-(\sinh^{-2}r)
(\grad_{\Sn}\mathcal L^{\mathrm{BS}}+O(e^{-2r}))$.
At a Busemann critical point,
$\mu(\bm\xi^*,y)=
-\sum_{c\ne y}g_c
\|\grad_{\Sn}b_{\bm\xi^*}(w_c)-
\grad_{\Sn}b_{\bm\xi^*}(w_y)\|^2<0$,
where $g_c>0$, by noncollinearity.
Under the time change $ds=dt/\sinh^2r$, the angular
dynamics is an $O(e^{-2r})$ perturbation of Busemann
gradient flow. Since the initial direction lies in
the basin of $\bm\xi^*$, finite-time approximation
ensures entry into $\mathcal A(\bm\xi^*)$ while
$r\ge 2R-O(1)$, forcing a preceding drift reversal.
The positive definite Busemann Hessian then gives
$\frac{d}{ds}d_{\Sn}(\bm\xi,\bm\xi^*)
\le-\lambda d_{\Sn}(\bm\xi,\bm\xi^*)+Ce^{-2r}$,
ensuring continued angular alignment and negative
radial drift. Since $|dr/ds|$ is bounded, the slow
return to $R$ provides at least $O(R)$ rescaled time
for angular contraction. Consequently,
$d_{\Sn}(\bm\xi(T_3),\bm\xi^*)
\le C'e^{-\kappa R}$, which is at most
$\varepsilon$ for sufficiently large $R$.

The full and detailed proof is in Appendix~[\ref{proof: 6.5}].
\hfill $\square$

Theorem~\ref{thm:three_phase} establishes that trajectories
with finite prototypes can undergo radial return while
approaching Busemann local minima to arbitrary accuracy.
We now consider ideal boundary prototypes, for which
permanent radial escape becomes possible. In this setting,
Busemann scores replace hyperbolic distance scores,
while preserving the PERM loss structure.

\begin{enumerate}[label=(A\arabic*), ref=A\arabic*, resume=assumptions]
  \item \label{assump:ideal-prototypes}
  \textbf{\textit{Ideal boundary prototypes: The fixed class
  prototypes $w_c^\infty=(1,\hat w_c)\in\partial\Hn$
  are pairwise distinct.}}
\end{enumerate}

In the Busemann risk, we use naturally continuous extension
$b_\xi(w_c^\infty)=\log(-\langle w_c^\infty,\xi\rangle_M)$
for $\xi\ne w_c^\infty$.

\begin{theorem}[Angular convergence to Busemann minimizers in the escaping regime]
\label{thm:angular}
Under
Assumptions~(\ref{assump:perm}) and (\ref{assump:ideal-prototypes}),
the Riemannian gradient flow satisfies
\begin{equation}
  r(t)\longrightarrow\infty,
  \qquad
  \bm{\xi}(t)\longrightarrow\bm{\xi}^*
  ,
  \quad \text{where}\quad
  \bm{\xi}^*\in
  \arg\min_{\bm{\xi}\in\partial\Hn}
  \mathcal{L}^{\mathrm{BS}}(\bm{\xi},y).
\end{equation}
 If Assumption~
(\ref{assump:ideal-prototypes}) holds for all $i\in[N]$,
then $(\bm{\xi}_1^*,\ldots,\bm{\xi}_N^*)$ is a
\textbf{global minimizer} of the Busemann risk
$\mathcal{R}^{\mathrm{BS}}_{\mathbf{W}^\infty}
  =\frac{1}{N}\sum_{i=1}^{N}
  \mathcal{L}^{\mathrm{BS}}(\bm{\xi}_i,y_i)$
on~$(\partial\Hn)^N$.
\end{theorem}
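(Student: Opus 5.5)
The plan is to exploit one special feature of ideal prototypes: every class score is itself a Busemann function of $\mathbf x$. The strategy is to prove first that the flow has no critical points at finite radius, which forces $r(t)\to\infty$. The boundary dynamics then reduce to a perturbed gradient flow of $\mathcal L^{\mathrm{BS}}(\cdot,y)$, and the remaining task is to show that its only attracting limit is the global minimizer.

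\emph{Step 1 (explicit form of the loss).} Write $t_c:=\xhat\cdot\hat w_c$. Then $-\ipL{\mathbf x}{w_c^\infty}=\cosh r-t_c\sinh r=\tfrac{e^r}{2}a_c(r,\xhat)$ with
\[
a_c:=(1-t_c)+e^{-2r}(1+t_c).
\]
The term $r-\log 2$ is common to all classes and is annihilated by $D$. The margins are therefore exactly $v_y-v_j=\log a_j-\log a_y$, and $\ell=\psi\bigl((\log a_j-\log a_y)_{j\ne y}\bigr)$ holds with no remainder. Letting $r\to\infty$ recovers $\mathcal L^{\mathrm{BS}}$, and Lemma~\ref{lem:dist_busemann} and Proposition~\ref{prop:loss_expansion} become identities in closed form.

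\emph{Step 2 (no critical points in $\Hn$, hence $r\to\infty$).} The gradient is $\grad\ell=\sum_{j\ne y}\partial_j\psi\,\bigl(\grad b_{w_y^\infty}-\grad b_{w_j^\infty}\bigr)$ with $-\partial_j\psi>0$ by total regularity. Each $\grad b_{w_c^\infty}(\mathbf x)$ is a unit vector, namely minus the unit tangent pointing toward $w_c^\infty$. Distinct ideal points give distinct unit vectors at every $\mathbf x$. Setting $\grad\ell=0$ would express the unit vector $\grad b_{w_y^\infty}$ as a convex combination of the other distinct unit vectors, which strict convexity of the unit ball forbids. Hence $\ell$ has no critical points.

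Along the flow, $\tfrac{d}{dt}\ell=-\|\grad\ell\|^2$ and $\ell\ge0$, so $\int_0^\infty\|\grad\ell\|^2\,dt<\infty$. If $r(t)$ did not tend to infinity, there would be a bounded recurrent set. There $\|\grad\ell\|$ is bounded below by a positive constant, and the gradient is Lipschitz, so each visit costs a fixed amount of loss; infinitely many visits would contradict $\ell\ge 0$. Hence $r(t)\to\infty$.

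\emph{Step 3 (angular limit).} Once $r$ is large, \eqref{eq:polar_flow} and Step 1 give
\[
\dot{\xhat}=-\sinh^{-2}(r)\bigl(\grad_{\Sn}\mathcal L^{\mathrm{BS}}+O(e^{-2r})\bigr),
\qquad
\dot r=2\mu e^{-2r}+O(e^{-4r}).
\]
The loss is monotone, so $\ell\to L_\infty$. I would pass to the time change $ds=dt/\sinh^2 r$ and treat the angular equation as an asymptotically autonomous perturbation of the Busemann gradient flow. Since $\psi$ is real analytic and $\log(1-t_c)$ is analytic away from $\hat w_c$, Łojasiewicz's inequality yields convergence $\bm\xi(t)\to\bm\xi^*$ to a critical point. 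I expect this to require showing $\int_0^\infty\sinh^{-2}r\,dt=\infty$. Theorem~\ref{thm:radial}(a) gives $r\approx\tfrac12\log t$, so this integral is of logarithmic type and diverges.

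\emph{Step 4 (the limit is the global minimizer) — the main obstacle.} Near $\hat w_y$, $\mathcal L^{\mathrm{BS}}(\cdot,y)\to\inf\psi$ because every margin tends to $+\infty$, so $\hat w_y$ realizes the global infimum. I must rule out other boundary critical points as limits. The plan is to use the stable-manifold argument, which excludes saddles and maxima for all but a measure-zero set of initializations, together with the radial dichotomy. At a non-minimizing critical $\bm\xi$, the reasoning in the proof sketch of Theorem~\ref{thm:three_phase} suggests $\mu(\bm\xi,y)<0$. Theorem~\ref{thm:radial}(b) would then force radial return, contradicting $r\to\infty$ from Step 2.

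The delicate points are twofold. First, I need to verify that the sign computation for $\mu$ at a critical point carries over to ideal prototypes. Second, I need to show that non-minimizing local minima of $\mathcal L^{\mathrm{BS}}$ cannot exist, perhaps via the sphere-level analogue of the convexity argument in Step 2. I would try the second point first. Finally, the statement about $(\bm\xi_1^*,\dots,\bm\xi_N^*)$ follows because $\mathcal R^{\mathrm{BS}}_{\mathbf W^\infty}$ is separable across samples.
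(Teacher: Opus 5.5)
Your Step~2 is sound. It reaches $r(t)\to\infty$ by a genuinely different route from the paper: absence of critical points plus an energy argument, modulo the routine global-existence check.

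The gap is in Steps~3--4, which never identify the angular limit. In fact the trajectory converges to the true-class direction $\hat w_y$ for \emph{every} initialization. At that point $b_{\bm\xi}(w_y^\infty)=-\infty$, and $\mathcal L^{\mathrm{BS}}(\cdot,y)$ is defined there only by continuous extension. Your perturbative equations are not uniform near it:
\begin{itemize}
\item The correction coefficient $\alpha_{\bm\xi}(w_y^\infty)=(1+t_y)/(1-t_y)$ blows up.
\item Along the flow $1-t_y=O(e^{-2r})$, so in $a_y=(1-t_y)+e^{-2r}(1+t_y)$ the supposedly subleading term is not of lower order than the Busemann term.
\item Hence neither $O(e^{-2r})$-shadowing of the Busemann gradient flow nor a {\L}ojasiewicz argument is available near the limit. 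Your analyticity claim explicitly excludes the $\hat w_c$.
\item Invoking Theorem~\ref{thm:radial}(a) to get $\int\sinh^{-2}r\,dt=\infty$ presupposes $\mu\ge\underline{\mu}>0$ for all later times, which is circular.
\end{itemize}

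Step~4 also fails as planned, because non-minimizing local minima \emph{do} exist. Take $\mathbb H^2$ with cross-entropy and ideal prototypes at angles $\phi_c\in\{0,2\pi/3,4\pi/3\}$, with $y$ at angle $0$. The function $\log\bigl(1+\sum_{c\ne y}\frac{1-\cos\phi}{1-\cos(\phi-\phi_c)}\bigr)$ blows up at both competitor directions. So $\phi=\pi$ is a strict local minimum, with value $\log 9>\inf\psi$. A stable-manifold argument would in any case cover only generic initializations, whereas the theorem claims all of them.

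Your $\mu<0$ idea is the right mechanism for discarding such points. The critical-point identity $\mu=-\sum_c g_c\|\grad_{\Sn}b_c-\grad_{\Sn}b_y\|^2$ holds verbatim for ideal prototypes. It is strict because $\hat w\mapsto\hat w^{\perp}/(1-\xhat\cdot\hat w)$ is stereographic projection from $\xhat$, hence injective. But this needs angular convergence as input, which Step~3 does not supply.

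The missing idea is a Lyapunov function, built from exactly the facts you used in Step~2. The margin gradients are $\grad m_c=\grad b_{w_c^\infty}-\grad b_{w_y^\infty}$, so your sign is flipped. With $q_c=-\partial_c\psi>0$,
\[
\frac{d}{dt}\,b_{w_y^\infty}(\mathbf x(t))=-\sum_{c\ne y}q_c\bigl(1-\langle\grad b_{w_c^\infty},\grad b_{w_y^\infty}\rangle\bigr)<0,
\]
because these are distinct unit vectors. This is precisely the paper's $\dot s<0$ for $s=x_0-x_1=e^{b_{w_y^\infty}(\mathbf x)}$, in horospherical coordinates centred at $w_y^\infty$.

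The paper then shows $s\to0$ in two moves. First, $d\mathbf u/ds$ is a convex combination of the fixed vectors $a_c$, so $\mathbf u$ stays bounded. Second, if instead $s\to s_\infty>0$, the trajectory would stay in a compact set on which $\dot s$ is bounded away from $0$, a contradiction. Finally,
\[
s=\cosh r-t_y\sinh r\ge\max\bigl\{e^{-r},\tfrac{e^{r}}{2}(1-t_y)\bigr\},
\]
so $s\to0$ forces $r\to\infty$ and $\xhat\to\hat w_y$ simultaneously, for every initialization.

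The argmin statement then follows because every Busemann margin diverges to $+\infty$ at $\hat w_y$. Strict coordinatewise monotonicity of $\psi$ then excludes all other directions. No critical-point analysis of $\mathcal L^{\mathrm{BS}}$ is needed.
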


\noindent\textit{Proof sketch.}
Rotate coordinates so $w_y^\infty=(1,1,0,\ldots,0)$ and write
$x=(x_0,x_1,u)$, $s=x_0-x_1>0$, and
$a_c=\hat w_{c,\perp}/(1-\hat w_{c,1})$.
The relative margins are
$m_c=\log[(1+\|u-sa_c\|^2)/(s^2(1+\|a_c\|^2))]$.
With $q_c=-\partial_c\psi(m)>0$, the Riemannian flow yields
$\dot s=-2s\sum_{c\ne y}q_c/(1+\|u-sa_c\|^2)<0$;
$du/ds$ is a weighted average of the fixed $a_c$.
The margins remain bounded below, ensuring global existence,
while bounded $du/ds$ implies $u(t)$ converges.
If $s(t)$ approached a positive limit, the trajectory
would remain compact, contradicting a uniformly negative
$\dot s$. Thus $s(t)\to0$. Since
$x_0+x_1=(1+\|u\|^2)/s$, we obtain
$r(t)\to\infty$ and $\bm\xi(t)\to w_y^\infty$.
All relative Busemann margins then diverge to $+\infty$.
Nonnegativity and coordinatewise monotonicity of $\psi$
yield a global Busemann minimum under continuous extension.
Samplewise separability establishes the joint risk result.

The full and detailed proof is in Appendix~[\ref{proof: 5.6}]. \hfill $\square$

\subsection{Asymptotic Decision Boundary}

\begin{theorem}[Asymptotic classifier]
\label{thm:boundary}
Under Assumption~(\ref{assump:perm}), let
\(\{\mathbf w_c\}_{c=1}^C\) be fixed prototypes,
either all in \(\Hn\) or all in \(\partial\Hn\),
with distance or Busemann scores, respectively.
For a test point
\(\mathbf x=(\cosh r,\sinh r\,\xhat)\) with
\(\bm\xi=(1,\xhat)\in\partial\Hn\), define
\(
  \hat y(\mathbf x)
  :=
  \argmin_{y\in[C]}
  \psi\!\left(\Upsilon_yD\,\mathbf v(\mathbf x)\right).
\)
If \(\argmin_y\mathcal L^{\mathrm{BS}}(\bm\xi,y)\) is unique, then for all
sufficiently large \(r\),
\(
  \hat y(\mathbf x)
  =
  \argmin_{y\in[C]}\mathcal L^{\mathrm{BS}}(\bm\xi,y)
  =
  \argmin_{c\in[C]} b_{\bm\xi}(\mathbf w_c).
\)
For ideal prototypes, this equality holds for all \(r>0\).

Moreover, the decision boundary between
classes $c$ and $c'$ on $\partial\Hn\cong\Sn$ is
\begin{equation}\label{eq:boundary_condition}
\begin{aligned}
\mathcal B_{c,c'}=\bigl\{\xhat\in\Sn:\;&
\xhat^\top(\wbar_c-\wbar_{c'})
=w_{c,0}-w_{c',0},\quad b_{\bm\xi}(\mathbf w_c)
\le b_{\bm\xi}(\mathbf w_k),
\quad \forall k\in[C]\bigr\},
\end{aligned}
\end{equation}
which is a subset of the intersection of $\Sn$
with an affine hyperplane in $\R^n$.
For ideal prototypes, $w_{c,0}=1$, so the
corresponding spherical sections are great subspheres.
\end{theorem}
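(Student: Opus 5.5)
The plan has three steps: identify the class minimizing the Busemann loss, pass from the Busemann loss to the finite-radius loss, and then read off the decision boundary.

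\emph{Step 1: the Busemann argmin is the closest prototype.} For each class $y$ we have $\Upsilon_yD\,\mathbf b(\bm\xi)=(b_{\bm\xi}(\mathbf w_j)-b_{\bm\xi}(\mathbf w_y))_{j\ne y}$. Set $\beta_c:=b_{\bm\xi}(\mathbf w_c)$ and let $c^\star\in\argmin_c\beta_c$. For any $y\neq c^\star$, compare the margin vector of $y$ with that of $c^\star$. Let $\tau$ be the transposition swapping $y$ and $c^\star$. The margin multiset of $c^\star$ is $\{\beta_j-\beta_{c^\star}\}_{j\ne c^\star}$ and that of $y$ is $\{\beta_j-\beta_y\}_{j\ne y}$. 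Match the entry indexed by $j\notin\{y,c^\star\}$ with itself, and match the entry $\beta_y-\beta_{c^\star}$ with $\beta_{c^\star}-\beta_y$. Each margin of $c^\star$ is then coordinatewise $\ge$ the matched margin of $y$, since $\beta_{c^\star}\le\beta_y$. Because $\psi$ is symmetric and $\nabla\psi\prec0$, it is strictly decreasing in each coordinate. Hence $\mathcal L^{\mathrm{BS}}(\bm\xi,c^\star)\le\mathcal L^{\mathrm{BS}}(\bm\xi,y)$, with strict inequality iff $\beta_y>\beta_{c^\star}$. So the Busemann argmin equals $\argmin_c b_{\bm\xi}(\mathbf w_c)$, and uniqueness of one argmin is equivalent to uniqueness of the other.

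\emph{Step 2: passing to finite $r$.} The same comparison applied to $\mathbf v(\mathbf x)$ itself shows that $\hat y(\mathbf x)=\argmin_c d_{\HH}(\mathbf x,\mathbf w_c)$ exactly, for every $\mathbf x$. By Lemma~\ref{lem:dist_busemann}, $d_{\HH}(\mathbf x,\mathbf w_c)-d_{\HH}(\mathbf x,\mathbf w_{c'})=\beta_c-\beta_{c'}+O(e^{-2r})$. If the minimizing $c^\star$ is unique with gap $\delta=\min_{c\ne c^\star}(\beta_c-\beta_{c^\star})>0$, then for $r$ so large that the remainder is below $\delta/2$, the nearest prototype is $c^\star$.

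For ideal prototypes the scores are $-b_{\bm\xi_{\mathbf x}}$-type quantities. I will check the identity
\[
-\ipL{\mathbf x}{\mathbf w_c^\infty}=\cosh r-\sinh r\,\xhat\cdot\hat w_c=\tfrac{e^r}{2}(1-\xhat\cdot\hat w_c)+\tfrac{e^{-r}}{2}(1+\xhat\cdot\hat w_c).
\]
The main obstacle is the claim that the equality holds for all $r>0$. Here I would use the Busemann score $b_{\mathbf w_c^\infty}(\mathbf x)=\log(-\ipL{\mathbf x}{\mathbf w_c^\infty})$. I would then compare it with the boundary-direction quantity $\log(1-\xhat\cdot\hat w_c)$, using the symmetric role of the Busemann pairing $\log(-\ipL{\cdot}{\cdot})$ in Definition~\ref{def:bus_loss}. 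The ordering in $c$ of $(e^r-e^{-r})(-\xhat\cdot\hat w_c)$ plus a $c$-independent term is the ordering of $-\xhat\cdot\hat w_c$, for every $r>0$, since $\sinh r>0$. That is exactly the ordering of $\log(1-\xhat\cdot\hat w_c)=b_{\bm\xi}(\mathbf w_c^\infty)$. This $r$-independence is the key computation.

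\emph{Step 3: decision boundary.} The tie $b_{\bm\xi}(\mathbf w_c)=b_{\bm\xi}(\mathbf w_{c'})$ means $w_{c,0}-\xhat\cdot\wbar_c=w_{c',0}-\xhat\cdot\wbar_{c'}$, i.e.\ $\xhat^\top(\wbar_c-\wbar_{c'})=w_{c,0}-w_{c',0}$, since $\log$ is injective. Intersecting with the condition that $c$ and $c'$ are both minimal among all classes gives $\mathcal B_{c,c'}$. This set is contained in $\Sn\cap\{\xhat:\xhat^\top(\wbar_c-\wbar_{c'})=w_{c,0}-w_{c',0}\}$, an affine hyperplane section. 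For ideal prototypes $w_{c,0}=w_{c',0}=1$, so the hyperplane passes through the origin and the section is a great subsphere. Here $\hat w_c\ne\hat w_{c'}$ by (\ref{assump:ideal-prototypes}), so the normal vector is nonzero.
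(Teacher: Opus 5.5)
Your argument is correct, but Step~2 is organized differently from the paper's proof.

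\emph{The paper's route.} It works at the level of losses. It uses the distance expansion to show $\psi(\Upsilon_yD\mathbf v(\mathbf x))=\mathcal L^{\mathrm{BS}}(\bm\xi,y)+O(e^{-2r})$ uniformly in $y$. It then takes the gap $\Delta$ between the best and second-best Busemann losses and concludes $\hat y=\argmin_y\mathcal L^{\mathrm{BS}}(\bm\xi,y)$ once $2C_0e^{-2r}<\Delta$. Only afterwards does it prove your Step~1 identity, and it treats agreement with nearest-prototype prediction as a separate remark.

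\emph{Your route.} You apply the margin-comparison lemma to the finite-radius score vector first. This gives the exact identity $\hat y(\mathbf x)=\argmin_c d_{\HH}(\mathbf x,\mathbf w_c)$ at every radius. The only asymptotic input is then Lemma~\ref{lem:dist_busemann} together with the Busemann gap $\delta$.

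\emph{Comparison.} Your chain is shorter and uses nothing about $\psi$ beyond symmetry and strict coordinatewise monotonicity; no Taylor or Lipschitz estimate of $\psi$ around the Busemann margins is needed. It also makes the ideal-prototype case literally the same lemma, applied to scores ordered by $\xhat\cdot\hat w_c$ for all $r>0$; the paper handles that case the same way. The paper's route yields one extra quantitative fact, that each label's loss is $O(e^{-2r})$-close to its Busemann loss, but this is not needed for the argmin.

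One small point: at a direction $\xhat=\hat w_c$ the ideal Busemann value $b_{\bm\xi}(\mathbf w_c^\infty)$ equals $-\infty$. So Step~1's finite-margin identification $\argmin_y\mathcal L^{\mathrm{BS}}=\argmin_c b_{\bm\xi}$ needs the continuous extension of the Busemann loss there, as the paper invokes.
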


\noindent\textit{Proof sketch.}
By the radial Busemann expansion~\eqref{eq:dist_bus_exp},
the common term \(-r\) cancels in the relative margin vector. Hence, uniformly over \(y\),
\(\psi(\Upsilon_yD\mathbf v(\mathbf x))
=\mathcal L^{\mathrm{BS}}(\bm\xi,y)+O(e^{-2r})\).
\begin{wrapfigure}{r}{0.4\textwidth}
  \centering
  \vspace{-10pt}
\includegraphics[width=0.35\textwidth]{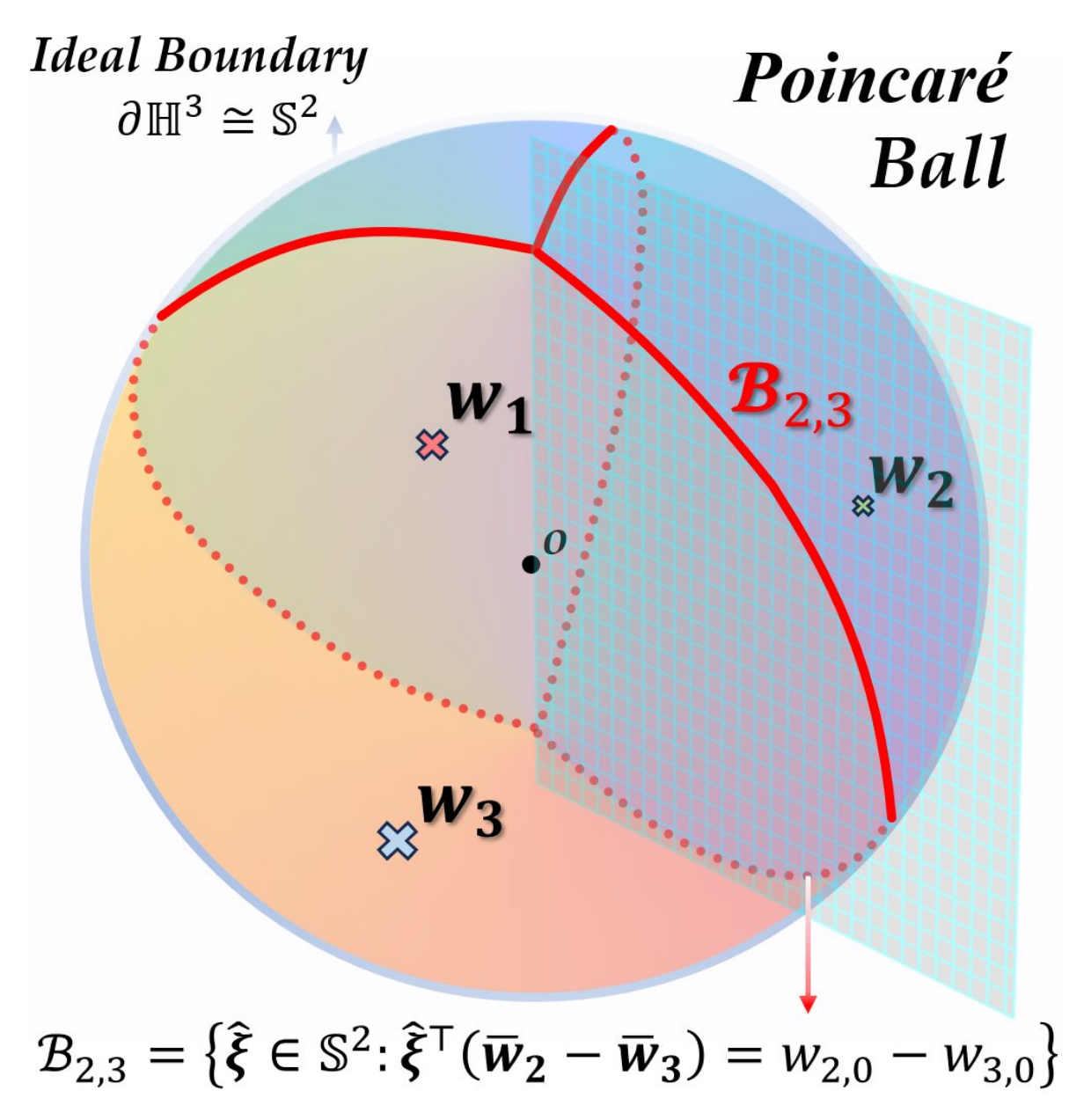}
  \caption{Asymptotic decision boundary on the ideal boundary.}
  \label{fig:boundary}
  \vspace{-20pt}
\end{wrapfigure}
If the minimizer of \(\mathcal L^{\mathrm{BS}}(\bm\xi,\cdot)\) is unique,
the \(O(e^{-2r})\) error cannot change the argmin for sufficiently large
\(r\). Therefore
\(\hat y(\mathbf x)=\argmin_y \mathcal L^{\mathrm{BS}}(\bm\xi,y)\).
It remains to identify this argmin. Writing
\(\beta_c=b_{\bm\xi}(\mathbf w_c)\), the Busemann loss depends on the
relative margins \((\beta_c-\beta_y)_{c\neq y}\). Since a totally regular
PERM template is permutation invariant and strictly decreasing in each
coordinate, the minimum is achieved exactly by the smallest Busemann value:
\(\argmin_y \mathcal L^{\mathrm{BS}}(\bm\xi,y)
=\argmin_c b_{\bm\xi}(\mathbf w_c)\).
Finally, \(b_{\bm\xi}(\mathbf w_c)=b_{\bm\xi}(\mathbf w_{c'})\) is equivalent
to \(\xhat^\top(\wbar_c-\wbar_{c'})=w_{c,0}-w_{c',0}\). 
The additional minimum condition ensures that the
pairwise bisector belongs to the multiclass decision
boundary rather than separating two nonwinning classes.
The full and detailed proof is in Appendix [\ref{proof: 5.7}].
\hfill $\square$

Figure~\ref{fig:boundary} illustrates the pairwise boundary
\(\mathcal B_{2,3}\) between prototypes \(w_2\) and \(w_3\), which is the
spherical section obtained by intersecting
\(\partial\mathbb H^3\simeq\mathbb S^2\) with the affine hyperplane
\(\hat\xi^\top(\bar w_2-\bar w_3)=w_{2,0}-w_{3,0}\). And the Busemann loss \(\mathcal{L}^{\mathrm{BS}}(\bm{\xi}_i,y_i)\) evaluates
the classification quality of the boundary direction \(\bm{\xi}_i\). This
can be interpreted through the following Busemann margin:

\begin{definition}[Busemann margin]
\label{def:horo_margin}
Let \(\bm{\xi}=(1,\xhat)\in\partial\Hn\). The \emph{Busemann margin}
of sample \(i\) at direction \(\bm{\xi}\) is
\begin{equation}\label{eq:horo_margin}
m_i(\bm{\xi})
:=
\min_{c\neq y_i}
\bigl[
b_{\bm{\xi}}(\mathbf w_c)
-
b_{\bm{\xi}}(\mathbf w_{y_i})
\bigr].
\end{equation}
Sample \(i\) is correctly classified at the ideal direction
\(\bm{\xi}\) if and only if \(m_i(\bm{\xi})>0\).
\end{definition}

\begin{corollary}[Correct classification under cross-entropy]
\label{lem:correct_min}
Under cross-entropy loss, for any class label $y_i$ and
any boundary direction $\bm\xi\in\partial\Hn$, if
\begin{equation}
\mathcal L^{\mathrm{BS}}(\bm\xi,y_i)<\log 2,
\end{equation}
then the Busemann margin is strictly positive,
$m_i(\bm\xi)>0$, and the asymptotic Busemann classifier
correctly predicts class $y_i$.
\end{corollary}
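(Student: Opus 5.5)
The plan is to write the cross-entropy Busemann loss explicitly and show that the condition $\mathcal L^{\mathrm{BS}}<\log 2$ forces every Busemann margin to be positive. For cross-entropy the template is $\psi(\mathbf u)=\log\bigl(1+\sum_{j=1}^{C-1}e^{-u_j}\bigr)$. The vector $\Upsilon_{y_i}D\,\mathbf b(\bm\xi)$ collects the relative margins $-b_{\bm\xi}(\mathbf w_{y_i})+b_{\bm\xi}(\mathbf w_c)$ for $c\neq y_i$, by \eqref{eq:label_code} and Definition~\ref{def:bus_loss}. Hence
\begin{equation*}
\mathcal L^{\mathrm{BS}}(\bm\xi,y_i)=\log\Bigl(1+\sum_{c\neq y_i}\exp\bigl(-[b_{\bm\xi}(\mathbf w_c)-b_{\bm\xi}(\mathbf w_{y_i})]\bigr)\Bigr).
\end{equation*}
For ideal prototypes we read $b_{\bm\xi}$ as the continuous extension stated before Theorem~\ref{thm:angular}. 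Directions $\bm\xi$ that coincide with a prototype, where some $b$ equals $-\infty$, need a brief separate remark; there the convention $e^{-\infty}=0$ or $e^{+\infty}=\infty$ settles the inequality directly.

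Next we argue by contraposition. Suppose $m_i(\bm\xi)\le 0$. Then some $c^\star\neq y_i$ satisfies $b_{\bm\xi}(\mathbf w_{c^\star})-b_{\bm\xi}(\mathbf w_{y_i})\le 0$, so the corresponding exponential term is at least $1$. Every other term in the sum is nonnegative, so the argument of the logarithm is at least $2$. Since $\log$ is monotone, $\mathcal L^{\mathrm{BS}}(\bm\xi,y_i)\ge\log 2$, which contradicts the hypothesis. Therefore $m_i(\bm\xi)>0$.

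Finally we connect this to the classifier. The margin $m_i(\bm\xi)>0$ means $b_{\bm\xi}(\mathbf w_{y_i})<b_{\bm\xi}(\mathbf w_c)$ for all $c\neq y_i$, so $y_i$ is the unique element of $\argmin_c b_{\bm\xi}(\mathbf w_c)$. Theorem~\ref{thm:boundary} identifies this index with the unique $\argmin_y\mathcal L^{\mathrm{BS}}(\bm\xi,y)$. The same theorem then gives that the asymptotic classifier $\hat y(\mathbf x)$ equals $y_i$ for all sufficiently large $r$, and for all $r>0$ in the ideal case. There is no real obstacle here. The only points needing care are the correct identification of the margin vector through $\Upsilon_{y_i}D$, since its sign convention makes the entries $b_c-b_{y_i}$, and checking the uniqueness hypothesis of Theorem~\ref{thm:boundary}, which the strict margin provides.
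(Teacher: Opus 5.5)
Your proof is correct and is essentially the paper's argument: the paper lower-bounds the cross-entropy Busemann loss by the single term $\log\bigl(1+e^{-m_i(\bm\xi)}\bigr)$ and reads off $m_i(\bm\xi)>0$ from the hypothesis, which is your term-dropping step in direct rather than contrapositive form. Your extra step, invoking Theorem~\ref{thm:boundary} with argmin uniqueness supplied by the strict margin, spells out the link to $\hat y(\mathbf x)$ that the paper asserts in one line.
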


\noindent\textit{Proof.}
By the cross-entropy formulation of the Busemann loss,

\(
\mathcal L^{\mathrm{BS}}(\bm\xi,y_i)
=
\log\left(
1+\sum_{c\ne y_i}
e^{b_{\bm\xi}(w_{y_i})-b_{\bm\xi}(w_c)}
\right)
\ge
\log(1+e^{-m_i(\bm\xi)}).
\)
Rearranging yields the stated margin bound.
Since $\mathcal L^{\mathrm{BS}}(\bm\xi,y_i)<\log 2$,
the lower bound is strictly positive, implying
$b_{\bm\xi}(w_{y_i})<b_{\bm\xi}(w_c)$ for all
$c\ne y_i$. Thus the Busemann classifier assigns
the correct label.
\hfill $\square$

Together, these results characterize the asymptotic
decision boundaries and classification margins through
Busemann geometry. We next examine the predicted radial
and angular behavior through numerical experiments.



\section{Experiments}

We evaluate hyperboloid gradient dynamics using \(C=3\) prototypes on
\(\mathbb H^2\) at radius \(\rho=1\) with \(N=15\) embeddings, and ablate
\(C\in\{2,3,5\}\), PERM templates
\(\psi\in\{\psi_{\mathrm{CE}},\psi_{\mathrm{EXP}},\psi_{\mathrm{SS}}\}\). We integrate both the full coupled gradient flow and the
separate radial dynamics.

\begin{figure}[h]
    \centering
    \begin{subfigure}[b]{0.32\linewidth}
        \centering
        \includegraphics[width=\linewidth]{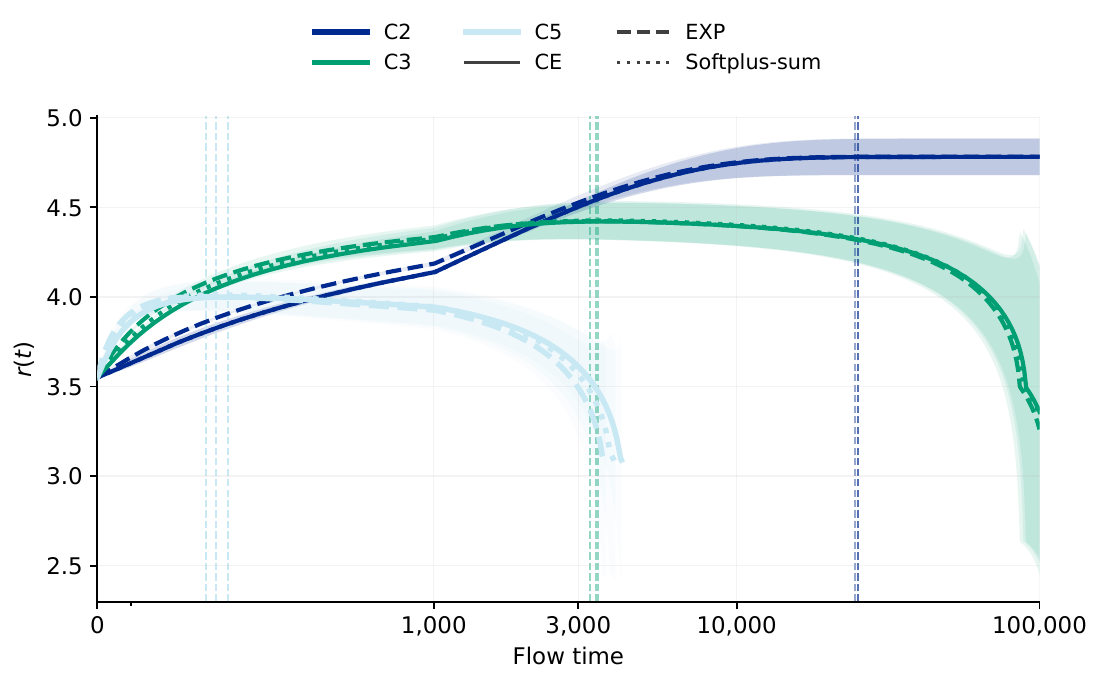}
        \caption{Radial evolution \(r(t)\).}
        \label{fig:coupled_radius}
    \end{subfigure}
    \hfill
    \begin{subfigure}[b]{0.32\linewidth}
        \centering
        \includegraphics[width=\linewidth]{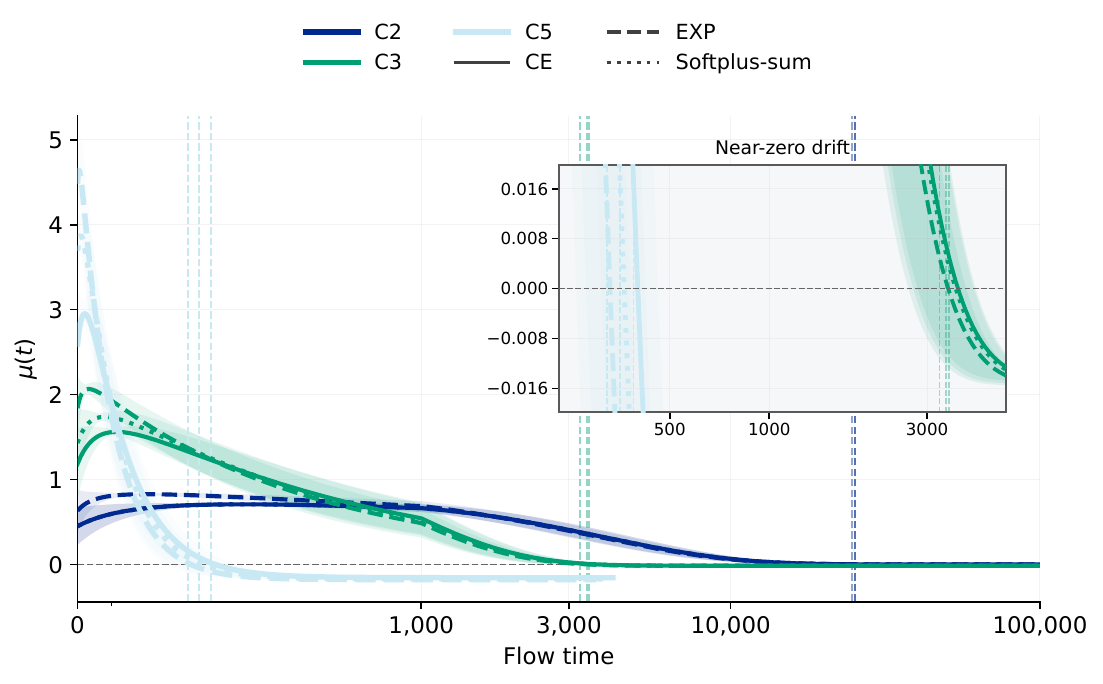}
        \caption{Drift coefficient \(\mu(\bm\xi(t),y)\).}
        \label{fig:coupled_mu}
    \end{subfigure}
    \hfill
    \begin{subfigure}[b]{0.32\linewidth}
        \centering
        \includegraphics[width=\linewidth]{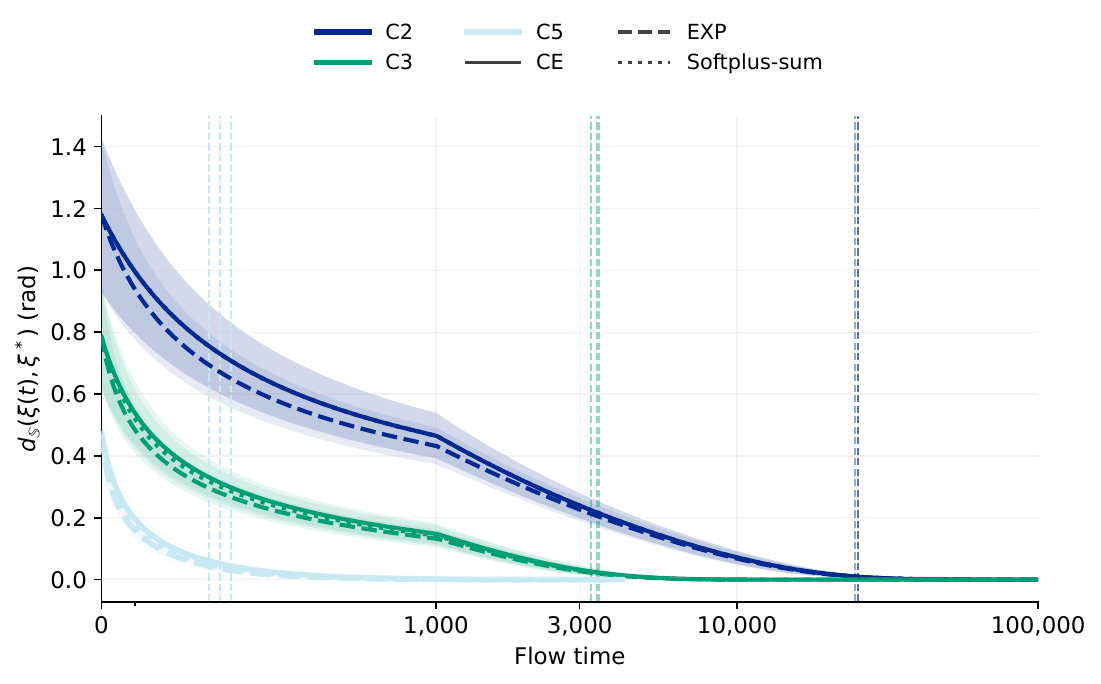}
        \caption{Angular \(d_{\Sn}(\bm\xi(t),\bm\xi^*)\).}
        \label{fig:coupled_angle}
    \end{subfigure}
    \captionsetup{justification=centering,singlelinecheck=false}
    \caption{Three-phase Busemann dynamics
    under coupled Riemannian gradient flow.}
    \vspace{-8pt}
    \label{fig:comparison}
\end{figure}

Figure~\ref{fig:comparison} confirms the three phases predicted
by Theorem~6.4 across configurations with \(C\ge3\).
The trajectories initially move outward, undergo angular alignment
with radial drift reversal, and subsequently return slowly toward
the interior. Across all configurations, the directions approach
Busemann local minima, supporting the predicted Busemann implicit
bias across prototype geometries and PERM losses, with weaker drift
reversal in degenerate cases. Shaded regions indicate standard
deviations across seeds. Vertical dashed lines indicate entry into
sufficiently aligned regions \(\mathcal A(\bm\xi^*)\).

\vspace{-7pt}

\begin{figure}[h]
  \centering
  \includegraphics[width=\linewidth]{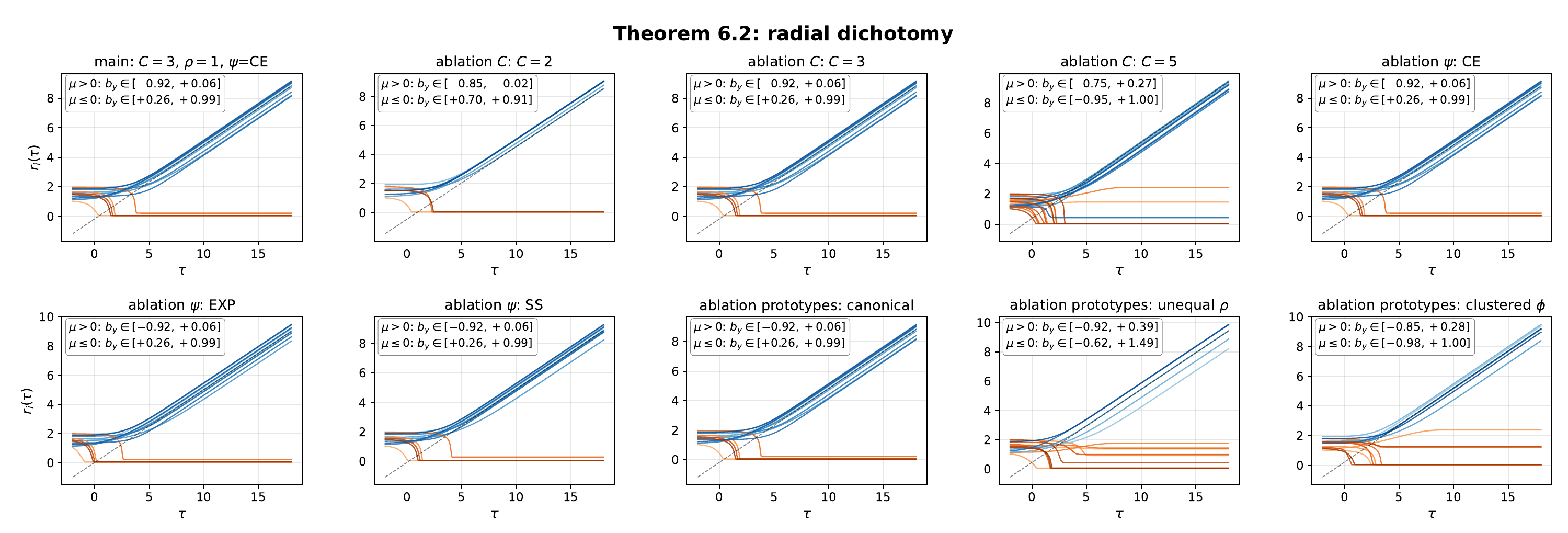}
  \caption{Numerical verification of radial dichotomy under
  fixed-direction gradient flow.}
  \label{fig:experiment_summary}
  \vspace{-7pt}
\end{figure}

Figure~\ref{fig:experiment_summary} confirms the predicted
\(r_i(t)=\frac12\log t+O(1)\) scaling under persistent positive drift,
while negative drift induces inward motion. These results support the
radial dichotomy and three-phase behavior across the tested settings.
Full experimental details, neural network experiments, and additional
diagnostics are provided in
Appendix~[\ref{sec:experiments(appendix)}].

\vspace{-7pt}
\section{Conclusions}

In this work, we analyzed the implicit bias of Riemannian gradient flow for
hyperbolic multiclass classification with fixed prototypes. Through a
Radial Busemann decomposition, we showed that large-radius dynamics are
governed by an interaction between radial drift and boundary direction
optimization. This yields a radial dichotomy and a three-phase regime of outward motion,
angular alignment, and slow return, during which the directions approach
local minima of the Busemann loss. We also established radial escape
and convergence to global Busemann risk minimizers.
We further characterized the asymptotic classifier and its spherical
decision boundaries.

Controlled experiments support the predicted radial and angular dynamics,
while coupled integration illustrates drift reversal and three-phase
behavior. Neural network experiments further validate the Busemann
asymptotic approximation. Together, these results provide a unified
perspective on boundary saturation and near-boundary clustering through
their shared Busemann directional bias. More broadly, our analysis
identifies the ideal boundary as the natural geometric object governing
optimization bias in hyperbolic representation learning and suggests
a boundary-based perspective for studying more general hyperbolic
learning dynamics.

\newpage

\begin{ack}

\textbf{Funding.}
This work was partially supported by JST Moonshot R\&D, Grant Number JPMJPS2011, the New Energy and Industrial Technology Development Organization (NEDO) under Project JPNP26007, JSPS KAKENHI, Grant Number JP25K21268 and JST SPRING, Grant Number JPMJSP2108.

\textbf{Competing interests.}
The authors declare no competing interests.

\end{ack}

\bibliographystyle{unsrtnat}
\bibliography{refs}

\newpage
\appendix

\section{Related Works, Discussion and Additional Results}
\subsection{Related Works}
\label{sec:related}

\paragraph{Implicit bias.}
Implicit bias studies how optimization algorithms select particular
solutions among many empirical risk minimizers, even without explicit
regularization. In separable linear classification, gradient descent with
exponential or cross entropy losses converges in direction to the maximum
margin classifier~\cite{GradientDescentonSeparableData}. This line of work
has been extended to refined convergence rates and margin dynamics
~\cite{Nacson2019Convergence,JiTelgarsky2019Risk}, nonseparable
classification~\cite{nonseparable}, homogeneous neural
networks~\cite{GradientHomo}, directional convergence in deep
models~\cite{Ji2020DirectionalCA}, general homogeneous or overparameterized
models~\cite{ChizatBach2020,Woodworth2020Kernel}, and optimization geometry
dependent biases such as steepest descent and mirror descent
~\cite{geometry,flavor}.
Beyond classification, implicit regularization has also been studied in
matrix factorization, deep matrix factorization, and deep linear networks,
where gradient based methods favor low rank or low complexity
solutions~\cite{inmatrix,inmatrix2}. These works mainly characterize Euclidean biases toward
maximum margin, low norm, low rank, or other complexity controlled
structures.
\paragraph{Multiclass implicit bias and PERM losses.}
For multiclass classification, recent work extends binary implicit bias
theory to permutation invariant relative margin losses through the PERM
framework~\cite{implicitmulti,JMLR:v25:23-1599}. This framework covers cross
entropy, multiclass exponential losses, and broader relative margin losses.
We use PERM losses as a general multiclass loss class, but our goal is
different from Euclidean multiclass margin theory: instead of characterizing
linear maximum margin behavior, we study radial escape and boundary
direction selection in hyperbolic space.

\paragraph{Hyperbolic representation learning and Busemann based methods.}
Hyperbolic representation learning has been widely used for hierarchical,
tree like, and long tailed data. Representative works include Poincar\'e
embeddings~\cite{Nickel2017PoincarEF}, Lorentz model
embeddings~\cite{pmlr-v80-nickel18a}, hyperbolic neural
networks~\cite{hnn}, hyperbolic neural network variants~\cite{hnn+},
hyperbolic graph neural networks~\cite{hgnn}, and hyperbolic
Transformers~\cite{Yang2024HypformerEE}; see~\cite{survey1} for a survey.
Applications include computer vision, few shot learning, metric learning,
graph learning, recommendation, and reinforcement learning
~\cite{metriclearning,fewshot,hgformer,hyperbolicRL}. Several empirical
studies also report boundary related phenomena, including numerical
instability near the Poincar\'e boundary~\cite{NumericalStability}, the
effectiveness of clipping~\cite{clipped}, common radius
concentration~\cite{fewshot}, and metric learning induced class
separation~\cite{metriclearning}.
Busemann functions have also been used in hyperbolic machine
learning. Hyperbolic Busemann Learning uses ideal prototypes and a Busemann
loss~\cite{Atigh2021HyperbolicBL}, and
horospherical decision boundary methods use level sets of Busemann functions
as classifiers~\cite{Fan2023Horospherical}. Recent works further study
Busemann based neural network
layers~\cite{Chen2026HyperbolicBN}, and related constructions on symmetric
spaces~\cite{Nguyen2026NeuralNO}. These works demonstrate the
importance of hyperbolic and Busemann geometry, but mainly focus on model
design, empirical behavior, or geometric constructions. Our work is
complementary: rather than proposing a new Busemann model, we show that
Busemann geometry emerges as the large radius limit of distance based
hyperbolic multiclass classification. Under PERM losses, the common radial
term cancels in the relative margins, yielding a Busemann risk on the ideal
boundary and leading to radial dichotomy, angular convergence, and an
asymptotic boundary classifier.

\subsection{Discussion and Limitation}
\label{sec:discussion}

\subsubsection{Discussion.}
Our results provide an asymptotic description of the boundary regime in
hyperbolic representation learning. This regime is different from the local
Euclidean behavior near the origin: near the ideal boundary, hyperbolic
distances admit an exponential expansion governed by Busemann functions.
Our analysis exploits this expansion to separate the loss into a common
radial term and a term that depends on direction through the Busemann
function. This separation is the geometric mechanism behind the radial
dichotomy, the Busemann risk on the ideal boundary, and the resulting
asymptotic classifier.

This perspective helps interpret several empirical phenomena in hyperbolic
learning. First, the radial dichotomy gives a mechanism for boundary
saturation: when the drift coefficient is positive, embeddings are pushed
toward the boundary at rate \(r(t)=\frac12\log t+O(1)\), consistent with
observations that embeddings concentrate near the Poincar\'e boundary and
that numerical instabilities arise in this region~\cite{NumericalStability}.
The coefficient \(\frac12\) is universal in our setting, in the sense that it
does not depend on the data geometry or margin. This contrasts with Euclidean separable linear classification, where the
logarithmic growth coefficient is determined by the data margin
\(\gamma^*\)~\cite{GradientDescentonSeparableData}. Second, the angular convergence theorem suggests a mechanism
for near boundary clustering: once embeddings have escaped sufficiently,
their discriminative information is encoded mainly in boundary directions
governed by the Busemann risk on \(\partial\Hn\), consistent with observations
of concentration at a common radius and directional class
separation~\cite{fewshot,metriclearning}.

Our theory also provides an interpretation of radius clipping
~\cite{clipped}. Since the leading classification behavior near the boundary
depends on Busemann values and boundary directions, extremely large radii
are not necessary once embeddings have entered the asymptotic regime.
Clipping can therefore be viewed as preventing excessive radial growth while
preserving the boundary direction structure that determines the classifier
in the late stage of training.

The discussion above was stated for the normalized case \(K=1\). For
completeness, we now record the corresponding curvature scaling, which
justifies the \(K=1\) convention used in the main text.

\paragraph{Curvature Dependence.}

For general Minkowski constant $K > 0$ (equivalently, curvature $\kappa = -1/K$), the model $\HnK$ and all preceding quantities rescale by $\sqrt{K}$.
Substituting $r \mapsto r/\sqrt{K}$, $d_{\HH}^K = \sqrt{K}\,d_{\HH}^{K=1}$, and $b_{\xhat}^K = \sqrt{K}\,b_{\xhat}^{K=1}$ throughout:

\begin{corollary}[Curvature scaling]
\label{cor:curvature}
For any $K > 0$, the rescaled properties are:
\begin{enumerate}[label=(\alph*)]
\item (Radial rate) Under the persistent positive drift condition of Theorem 6.2, $r_i(t) = \dfrac{\sqrt{K}}{2}\log t + O(1)$ as $t \to \infty$.
\item (Angular limit)
Under the assumptions of Theorem~6.5 for ideal prototypes,
\(
\hat{\bm\xi}_i(t)\to\hat{\bm\xi}_i^{*,K}
\in\arg\min_{\hat{\bm\xi}\in\mathbb S^{n-1}}
\mathcal L^{\mathrm{BS},K}(\bm\xi,y_i),
\)
where
\(\mathcal L^{\mathrm{BS},K}(\bm\xi,y_i)
:=\psi(\Upsilon_{y_i}D\mathbf b^K(\bm\xi))\).
\item (Decision boundary) The asymptotic decision boundaries on $\partial\HnK \cong \Sn$ are identical to the $K=1$ case.
\item (Horospherical margin) $m_i^K(\bm{\xi}) = \sqrt{K}\,m_i^{K=1}(\bm{\xi})$.
\end{enumerate}
\end{corollary}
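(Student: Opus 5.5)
The plan is to reduce every part of Corollary~\ref{cor:curvature} to the $K=1$ results by an explicit isometry-up-to-scale between $\HnK$ and $\Hn$. Consider the map $\Phi_K:\Hn\to\HnK$, $\mathbf x\mapsto\sqrt K\,\mathbf x$. Since $\ipL{\sqrt K\mathbf x}{\sqrt K\mathbf y}=K\ipL{\mathbf x}{\mathbf y}$, we get $d_{\HH}^K(\Phi_K\mathbf x,\Phi_K\mathbf y)=\sqrt K\,d_{\HH}(\mathbf x,\mathbf y)$. Thus $\Phi_K$ multiplies the Riemannian metric by $K$. It sends the polar point with radius $r$ and direction $\xhat$ to the polar point with radius $\sqrt K r$ and the same direction $\xhat$, so it fixes the ideal boundary pointwise. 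Passing to the limit in \eqref{eq:busemann} then gives $b^K_{\bm\xi}(\Phi_K\mathbf w)=\sqrt K\,b_{\bm\xi}(\mathbf w)$.

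Parts (c) and (d) follow from these identities alone. Because all Busemann values are scaled by the same positive factor $\sqrt K$, the equality $b^K_{\bm\xi}(\mathbf w_c)=b^K_{\bm\xi}(\mathbf w_{c'})$ holds exactly when the $K=1$ equality holds. Likewise the minimality condition over $k$ in \eqref{eq:boundary_condition} is unchanged. Hence the sets $\mathcal B_{c,c'}$ from Theorem~\ref{thm:boundary} coincide for every $K$. The same scaling applied to Definition~\ref{def:horo_margin} gives $m_i^K=\sqrt K\,m_i$.

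For the dynamics, suppose $\mathbf x(t)$ solves the $K$-flow for $\ell^K(\mathbf x)=\psi(\Upsilon_yD\mathbf v^K(\mathbf x))$, where $v_c^K=-d_{\HH}^K$. Set $\tilde{\mathbf x}=\Phi_K^{-1}\mathbf x$. Then $\ell^K(\Phi_K\tilde{\mathbf x})=\tilde\psi(\Upsilon_yD\tilde{\mathbf v}(\tilde{\mathbf x}))$ with $\tilde\psi(\mathbf u):=\psi(\sqrt K\mathbf u)$. This $\tilde\psi$ is still a totally regular symmetric template, since positive rescaling of the argument preserves nonnegativity, strict convexity, semi-coercivity and $\nabla\tilde\psi\prec0$. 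Because the metric scales by $K$, the gradient scales by $1/K$, so in $\tilde{\mathbf x}$ the flow becomes $\dot{\tilde{\mathbf x}}=-K^{-1}\grad\,\tilde\ell$. After the time change $\tau=t/K$, this is exactly the $K=1$ flow with template $\tilde\psi$, and Assumption~(\ref{assump:perm}) holds for it.

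Part (a) then comes from Theorem~\ref{thm:radial}(a) applied to $\tilde\psi$, whose drift coefficient is $\tilde\mu$. It gives $\tilde r(\tau)=\frac12\log\tau+O(1)$. Undoing the changes, $r=\sqrt K\tilde r$ and $\log\tau=\log t-\log K$, so $r(t)=\frac{\sqrt K}{2}\log t+O(1)$. The persistent positive drift hypothesis for the $K$ problem must be read as positivity of $\tilde\mu$, which is the same up to the positive constant $\sqrt K$ factor coming from the chain rule.

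Part (b) comes from Theorem~\ref{thm:angular} applied to $\tilde\psi$ with ideal prototypes. Since $\Phi_K$ fixes the boundary, directions are unchanged. The limit minimizes $\tilde\psi(\Upsilon_yD\mathbf b(\bm\xi))=\psi(\Upsilon_yD\mathbf b^K(\bm\xi))=\mathcal L^{\mathrm{BS},K}(\bm\xi,y)$.

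The main obstacle is the bookkeeping in (a): making sure the drift hypothesis and the constant $R_0$ transfer correctly, with $R_0^K=\sqrt K\tilde R_0$, and that the time rescaling only affects the $O(1)$ term. I would verify this by redoing the $\frac{d}{dt}e^{2r/\sqrt K}$ integration directly.
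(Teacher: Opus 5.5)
Your proposal is correct and follows the same rescaling route as the paper. The paper simply substitutes $r\mapsto r/\sqrt K$, $d^K_{\HH}=\sqrt K\,d_{\HH}$ and $b^K=\sqrt K\,b$ throughout, and your homothety $\Phi_K$ realizes exactly that substitution. Your extra bookkeeping makes precise what the paper's one-line argument leaves implicit: you absorb $\sqrt K$ into the template $\tilde\psi(\mathbf u)=\psi(\sqrt K\mathbf u)$, check that it stays totally regular, and apply the time change $\tau=t/K$. This matters because $\psi$ is not homogeneous, so the $K=1$ theorems must be applied to $\tilde\psi$ rather than to $\psi$.
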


\subsubsection{Limitations.}

Our analysis focuses on Riemannian gradient flow for embeddings with fixed
class prototypes. This setting follows a common strategy in implicit bias
theory, where part of the learning problem is fixed to isolate the
optimization mechanism of interest. For example, separable linear
classification fixes the data and studies classifier weight
dynamics~\cite{GradientDescentonSeparableData}, while matrix factorization
fixes observed entries and analyzes the implicit preference of learned
factors~\cite{inmatrix,inmatrix2}. In our case, fixing the prototypes
isolates the geometric mechanism governing radial drift and angular
alignment. We further investigate shared encoder parameterization
numerically. Extending the theoretical analysis to joint optimization
of embeddings, prototypes, and encoder parameters introduces additional
interactions through the encoder Jacobian and remains an important
direction for future work.

Our experiments combine controlled numerical integration with neural
network investigations. Coupled gradient flow experiments illustrate
the predicted three-phase dynamics across prototype configurations
and PERM losses. Shared encoder experiments reveal exponential
proximity between hyperbolic and Busemann stationary points, while
ResNet-18 experiments on CIFAR-10 provide complementary evidence
through parameter gradient comparisons. Together, these experiments
extend the numerical evidence beyond independent embedding dynamics,
while broader architectures and jointly learned prototypes remain
natural directions for further investigation.

\newpage

\section{Proof of Section~\ref{Section4}}

\subsection{Proof of Lemma~\ref{lem:dist_busemann}}\label{proof:4.1}

\textbf{Lemma~\ref{lem:dist_busemann}}~\ (Radial-Busemann expansion)
\textit{For $\mathbf{x} = (\cosh r,\, \sinh r\,\xhat) \in \Hn 
\setminus \{\mathbf{o}\}$ and $\mathbf{w} = (w_0, \wbar) 
\in \mathbb{H}\times\mathbb{H}^{n-1}$, let $\boldsymbol{\xi} = (1, \xhat) \in 
\partial\Hn$. There exists a constant 
$R_0(\mathbf{w}) > 0$ such that for all 
$r \ge R_0$,
\begin{equation*}
  d_{\HH}(\mathbf{x}, \mathbf{w}) 
    = r + \buse(\mathbf{w}) + O(e^{-2r}),
\end{equation*}
where $\buse(\mathbf{w}) = \log(w_0 - \xhat \cdot \wbar)$ 
is the Busemann function.}

\begin{proof}
Recall that the hyperboloid model is defined as $\Hn = \{\mathbf{x} = (x_0, x_1, \ldots, x_n)= (x_0, \bar{\mathbf{x}}) \in \R^{n+1} : \ipL{\mathbf{x}}{\mathbf{x}} = -1,\; x_0 > 0\}$, where the Minkowski inner product is
\(
  \ipL{\mathbf{x}}{\mathbf{y}} 
  = -x_0 y_0 + \sum_{i=1}^{n} x_i y_i= -x_0 y_0 + \bar{\mathbf{x}}^{\top}\bar{\mathbf{y}}.
\)
The geodesic distance between $\mathbf{x}, \mathbf{w} \in \Hn$ is

\begin{equation}\label{eq:dist_formula}
  d_{\HH}(\mathbf{x}, \mathbf{w}) 
  = \arcosh\bigl(-\ipL{\mathbf{x}}{\mathbf{w}}\bigr).
\end{equation}

We are given $\mathbf{x} = (\cosh r,\, \sinh r\,\xhat)$ with $r > 0$ and $\xhat \in \Sn$, and $\mathbf{w} = (w_0, w_1, \ldots, w_n) = (w_0, \wbar)$. Substituting into the Minkowski inner product:
\begin{align}
  -\ipL{\mathbf{x}}{\mathbf{w}}
  &= -\bigl(-\cosh r \cdot w_0 + \sinh r\,\xhat \cdot \wbar\bigr) \nonumber\\
  &= w_0 \cosh r - (\xhat \cdot \wbar)\sinh r. \label{eq:lorentz_ip}
\end{align}
Now we use the exponential forms of hyperbolic functions:
\[
  \cosh r = \frac{e^r + e^{-r}}{2}, \qquad
  \sinh r = \frac{e^r - e^{-r}}{2}.
\]
Substituting into \eqref{eq:lorentz_ip}:
\begin{align}
  -\ipL{\mathbf{x}}{\mathbf{w}}
  &= w_0 \cdot \frac{e^r + e^{-r}}{2} 
     - (\xhat \cdot \wbar) \cdot \frac{e^r - e^{-r}}{2} \nonumber\\
  &= \frac{e^r}{2}\bigl(w_0 - \xhat \cdot \wbar\bigr)
     + \frac{e^{-r}}{2}\bigl(w_0 + \xhat \cdot \wbar\bigr). \label{eq:ip_expanded}
\end{align}

Since $\mathbf{w} \in \Hn$, we have $\ipL{\mathbf{w}}{\mathbf{w}} = -w_0^2 + \|\wbar\|^2 = -1$, so $w_0 = \sqrt{1 +\|\wbar\|^2}$. In particular $w_0 \geq 1$ and $w_0 >\|\wbar\|$. By the Cauchy--Schwarz inequality in $\R^n$:
\[
  \|{\xhat \cdot \wbar}\| \leq \|{\xhat}\|\,\|\wbar\| 
  = \|\wbar\| < w_0,
\]
since $\xhat \in \Sn$ has $\|{\xhat} = 1$. Therefore $w_0 - \xhat \cdot \wbar > 0$. We define
\begin{equation}\label{eq:def_A}
  A \;:=\; w_0 - \xhat \cdot \wbar \;>\; 0.
\end{equation}
Similarly, we define $B := w_0 + \xhat \cdot \wbar > 0$. Then substitute into \eqref{eq:ip_expanded}: 
\begin{equation}\label{eq:ip_AB}
  -\ipL{\mathbf{x}}{\mathbf{w}} 
  = \frac{e^r}{2}\,A + \frac{e^{-r}}{2}\,B
  = \frac{e^r A}{2}\left(1 + \frac{B}{A}\,e^{-2r}\right).
\end{equation}

Since the distance function is $d_{\HH}(\mathbf{x}, \mathbf{w}) 
  = \arcosh\bigl(-\ipL{\mathbf{x}}{\mathbf{w}}\bigr)$, we also need to expand the $\arcosh$ function. By definition, $\arcosh(u) = \log(u + \sqrt{u^2 - 1})$. We have
\begin{equation}
\begin{aligned}
  u + \sqrt{u^2 - 1}
  &= u + u\sqrt{1 - 1/u^2} \\
  &= u\left(1 + \sqrt{1 - 1/u^2}\right) \\
  &= 2u \cdot \frac{1 + \sqrt{1 - 1/u^2}}{2}.
\end{aligned}
\end{equation}
Therefore
\begin{equation}
  \arcosh(u) = \log(2u) + \log\!\left(\frac{1 + \sqrt{1 - 1/u^2}}{2}\right).
\end{equation}

Let $u \geq 2$ and set $\delta = 1/u^2$, then $\delta \in (0, 1/4]$. 
Using the Taylor bound $\sqrt{1 - \delta} \geq 1 - \delta$ for $\delta \in [0,1]$:
\begin{equation}
  \frac{1 + \sqrt{1-\delta}}{2} \geq \frac{1 + 1 - \delta}{2} = 1 - \frac{\delta}{2}.
\end{equation}

Also $\sqrt{1-\delta} \leq 1$, so:
\begin{equation}
  \frac{1+\sqrt{1-\delta}}{2} \leq 1.
\end{equation}

Hence
\begin{equation}
  1 - \frac{\delta}{2} \leq \frac{1+\sqrt{1-\delta}}{2} \leq 1.
\end{equation}

Since $\log$ is monotone and $\abs{\log(1-s)} \leq 2s$ for $s \in [0, 1/2]$:
\begin{equation}
  \abs{\log\!\left(\frac{1 + \sqrt{1 - \delta}}{2}\right)} 
  \leq \abs{\log(1 - \delta/2)} 
  \leq 2 \cdot \frac{\delta}{2} = \delta = \frac{1}{u^2}.
\end{equation}

So we get that for $u \geq 2$, we have:
\begin{equation}\label{log(2u)}
  \arcosh(u) = \log(2u) + \varepsilon(u), 
  \qquad \text{where } \abs{\varepsilon(u)} \leq \frac{1}{u^2}.
\end{equation}

Then we need $u = -\ipL{\mathbf{x}}{\mathbf{w}} \geq 2$ for applying~\ref{log(2u)} .  From~\eqref{eq:ip_AB},
\(
  u 
  = \frac{e^r A}{2}\!\left(1 + \frac{B}{A}\,e^{-2r}\right),
\)
since $A > 0$ and $B > 0$, 
$1 + \frac{B}{A}\,e^{-2r} \geq 1$ for all $r > 0$, so
\begin{equation}
  u \;\geq\; \frac{e^r A}{2}.
\end{equation}

Since $\xhat \in \Sn$ is a unit vector, the Cauchy--Schwarz
inequality gives
\begin{equation}
  A \;=\; w_0 - \xhat \cdot \wbar
  \;\geq\; w_0 - \|\wbar\|.
\end{equation}
Using the hyperboloid constraint
$w_0^2 - \|\wbar\|^2 = 1$, we factor
\begin{equation}
  w_0 - \|\wbar\|
  \;=\; \frac{w_0^2 - \|\wbar\|^2}{w_0 + \|\wbar\|}
  \;=\; \frac{1}{w_0 + \|\wbar\|}
  \;>\; 0.
\end{equation}
Hence $A \geq \frac{1}{w_0 + \|\wbar\|}$ uniformly over
all directions $\xhat \in \Sn$. Then the condition
$u \geq \frac{e^r A}{2} \geq 2$ is guaranteed whenever
\begin{equation}\label{eq:R0}
  r \;\geq\; R_0(\mathbf{w})
  \;:=\; \log\bigl(4(w_0 + \|\wbar\|)\bigr),
\end{equation}
which depends only on the prototype~$\mathbf{w}$, not on
the direction~$\xhat$.

Finally, for $r$ large enough, all $r \geq R_0(\mathbf{w})$:
\begin{align}
  d_{\HH}(\mathbf{x}, \mathbf{w})
  &= \arcosh(u) \nonumber\\
  &= \log(2u) + O(u^{-2}). \label{eq:dist_arcosh}
\end{align}
We compute $\log(2u)$:
\begin{align}
  \log(2u) 
  &= \log\left(2 \cdot \frac{e^r A}{2}\left(1 + \frac{B}{A}e^{-2r}\right)\right) \nonumber\\
  &= \log\left(e^r A \left(1 + \frac{B}{A}e^{-2r}\right)\right) \nonumber\\
  &= \log(e^r) + \log(A) + \log\!\left(1 + \frac{B}{A}e^{-2r}\right) \nonumber\\
  &= r + \log A + \log\!\left(1 + \frac{B}{A}e^{-2r}\right). \label{eq:log2u}
\end{align}
The last logarithm is bounded by $e^{-2r}$ using $\log(1+s) \leq s$ for $s > 0$:
\begin{equation}\label{eq:log_remainder}
  0 \leq \log\!\left(1 + \frac{B}{A}e^{-2r}\right) 
  \leq \frac{B}{A}\,e^{-2r} \leq C_1(\textbf{w})e^{-2r},
\end{equation}
where $C_1(\textbf{w}):=(w_0+\|\wbar\|)^2$ since $\frac{B}{A}=\frac{w_0+\xhat\wbar}{w_0-\xhat\wbar}\leq(w_0+\|\wbar\|)^2$. Also, the $O(u^{-2})$ term in \eqref{eq:dist_arcosh} is also bounded by $e^{-2r}$, since $u \geq \frac{e^r A}{2}$ for $r \geq R_0$, we have $u^{-2} \leq \frac{4}{A^2}\,e^{-2r} \leq C_2(\textbf{w})e^{-2r}$, where $C_2(\textbf{w}):=4(w_0 + \|\wbar\|)^2.$
Combining \eqref{eq:dist_arcosh}, \eqref{eq:log2u}, and \eqref{eq:log_remainder}:
\begin{align*}
  d_{\HH}(\mathbf{x}, \mathbf{w})
  &= r + \log A + \underbrace{\log\!\left(1 + \frac{B}{A}e^{-2r}\right)}_{O(e^{-2r})} 
     + \underbrace{O(u^{-2})}_{O(e^{-2r})} \\
  &= r + \log A + O(e^{-2r}).
\end{align*}
Recalling $A = w_0 - \xhat \cdot \wbar$ from \eqref{eq:def_A}, we have
\(
  \log A = \log(w_0 - \xhat \cdot \wbar).
\)
This is precisely the Busemann function $\buse(\mathbf{w})$ centered at the ideal boundary point $\bm{\xi} = (1, \xhat) \in \partial\Hn$:
\[
  \buse(\mathbf{w}) = \log(w_0 - \xhat \cdot \wbar).
\]

So, we have shown that for $r \geq R_0(\mathbf{w})$,
\[
  d_{\HH}(\mathbf{x}, \mathbf{w}) 
  = r + \buse(\mathbf{w}) + O(e^{-2r}),
\]

which completed the proof.
\end{proof}
\newpage

\subsection{Proof of Proposition~\ref{prop:loss_expansion}}\label{proof:4.3}

\textbf{Proposition~\ref{prop:loss_expansion}}\quad(Loss expansion)
\textit{Under Assumption~(\ref{assump:perm}), there exists a 
constant $R_0(\mathbf{W}) > 0$ such that for each 
$i \in [N]$ and all $r_i \ge R_0$,
\begin{equation*}
\ell_i(\mathbf{x}_i, y_i) 
\;=\; \mathcal{L}^{\mathrm{BS}}(\bm{\xi}_i, y_i) 
\;+\; \mu(\bm{\xi}_i, y_i)\,e^{-2r_i} 
\;+\; O(e^{-4r_i}),
\end{equation*}
where
\begin{equation*}
\mu(\bm{\xi}, y) \;:=\; -\bigl\langle 
\nabla\psi\bigl(\Upsilon_{y} D\,\mathbf{b}(\bm{\xi})\bigr),\; 
\Upsilon_{y} D\,\boldsymbol{\alpha}(\bm{\xi})\bigr\rangle,
\end{equation*}
and we call $\mu : \partial\Hn \times [C] \to \mathbb{R}$ 
the radial drift coefficient.}

\begin{proof}

Recall from the proof of Lemma~\ref{lem:dist_busemann}
(Appendix~\ref{proof:4.1}) that for a sample point
$\mathbf{x}_i = (\cosh r_i,\, \sinh r_i\,\xhat_i)$ and a
prototype $\mathbf{w}_c = (w_{c,0},\, \wbar_c)$, we defined
\[
  A_c := w_{c,0} - \xhat_i \cdot \wbar_c > 0,
  \qquad
  B_c := w_{c,0} + \xhat_i \cdot \wbar_c > 0,
\]
and showed that
\begin{equation}\label{eq:uc_recall}
  u_c \;:=\; -\ipL{\mathbf{x}_i}{\mathbf{w}_c}
  \;=\; \frac{e^{r_i} A_c}{2}
        \bigl(1 + s_c\bigr),
  \qquad
  s_c := \frac{B_c}{A_c}\,e^{-2r_i}.
\end{equation}
Lemma~\ref{lem:dist_busemann} only kept the leading term,
here we need the exact $e^{-2r_i}$ coefficient, so we
expand more carefully. Here by
using the Cauchy inequality and the hyperboloid
constraint $w_{c,0}^2 - \norm{\wbar_c}^2 = 1$, we established
in Lemma~\ref{lem:dist_busemann} that
\begin{equation}\label{eq:Ac_lower}
  A_c \;\geq\; w_{c,0} - \norm{\wbar_c}
  \;=\; \frac{1}{w_{c,0} + \norm{\wbar_c}}
  \;>\; 0.
\end{equation}
Similarly, $B_c \leq w_{c,0} + \norm{\wbar_c}$, so the ratio
$B_c / A_c$ is bounded above by $(w_{c,0}+\norm{\wbar_c})^2$.
Define the global threshold
\begin{equation}\label{eq:R0_global}
  R_0(\mathbf{W})
  \;:=\; \max_{c \in [C]}
         \log\bigl(4(w_{c,0}+\norm{\wbar_c})\bigr).
\end{equation}
For any $r_i \geq R_0(\mathbf{W})$ and every $c \in [C]$,
two conditions hold simultaneously:
 \begin{equation}
     u_c \geq \frac{e^{r_i} A_c}{2} \geq 2,
 \end{equation} ensuring that
      the $\arcosh$ expansion
      (equation~\eqref{log(2u)} in
      Lemma~\ref{lem:dist_busemann}) is valid, and
 the perturbation $s_c $ satisfies
      \begin{equation}
     0 < s_c \leq
       (w_{c,0}+\norm{\wbar_c})^2 \cdot
       \frac{1}{16(w_{c,0}+\norm{\wbar_c})^2}
       = \frac{1}{16}, \end{equation}
      so all Taylor remainders for $s_c \to 0$ are uniformly
      $O(e^{-4r_i})$.
We now refine the two pieces of
$d_{\HH}(\mathbf{x}_i,\mathbf{w}_c) = \log(2u_c) + \varepsilon(u_c)$.
From~\eqref{eq:uc_recall}:
\begin{align}
  \log(2u_c)
  &= \log\!\bigl(e^{r_i}\,A_c\,(1+s_c)\bigr)
     \nonumber\\
  &= \log(e^{r_i}) + \log(A_c) + \log(1+s_c).
     \label{eq:log2uc_split}
\end{align}
Since $0 < s_c \leq 1/16$, we use the Taylor expansion
$\log(1+s) = s - \frac{s^2}{2} + \cdots$ with the bound
$\abs{\log(1+s) - s} \leq s^2$ for $s \in [0,1/2]$.
This gives
\begin{equation}\label{eq:log1sc}
  \log(1+s_c) = s_c + O(s_c^2)
  = \frac{B_c}{A_c}\,e^{-2r_i} + O(e^{-4r_i}).
\end{equation}
Substituting~\eqref{eq:log1sc} into~\eqref{eq:log2uc_split}:
\begin{equation}\label{eq:log2uc_final}
  \log(2u_c) = r_i + \log A_c
               + \frac{B_c}{A_c}\,e^{-2r_i}
               + O(e^{-4r_i}).
\end{equation}
Recall from the Appendix [\ref{proof:4.1}]
that
\(
  \varepsilon(u)
  = \log\!\left(\frac{1+\sqrt{1-1/u^2}}{2}\right).
\)
We first compute $1/u_c^2$.  From~\eqref{eq:uc_recall},
$u_c = \frac{e^{r_i}A_c}{2}(1+s_c)$, so
\begin{equation}\label{eq:uc_sq_inv}
  \frac{1}{u_c^2}
  = \frac{4}{e^{2r_i}\,A_c^2\,(1+s_c)^2}.
\end{equation}
Since $0 < s_c \leq 1/16$, we expand
$(1+s_c)^{-2} = 1 - 2s_c + O(s_c^2) = 1 + O(s_c)$,
giving
\begin{equation}\label{eq:delta_c}
  \delta_c \;:=\; \frac{1}{u_c^2}
  \;=\; \frac{4}{A_c^2}\,e^{-2r_i} + O(e^{-4r_i}).
\end{equation}
Now we expand $\varepsilon$ in powers of $\delta_c$.
First, $\sqrt{1-\delta} = 1 - \frac{\delta}{2} + O(\delta^2)$
for $\delta \to 0$:
\begin{equation}
  \sqrt{1 - \delta_c}
  = 1 - \frac{2}{A_c^2}\,e^{-2r_i} + O(e^{-4r_i}).
\end{equation}
Hence
\begin{equation}
  \frac{1+\sqrt{1-\delta_c}}{2}
  = 1 - \frac{1}{A_c^2}\,e^{-2r_i} + O(e^{-4r_i}).
\end{equation}
Next, $\log(1-s) = -s + O(s^2)$ for $s \to 0$, with
$s = \frac{1}{A_c^2}\,e^{-2r_i}$, we get:
\begin{equation}\label{eq:eps_final}
  \varepsilon(u_c)
  = -\frac{1}{A_c^2}\,e^{-2r_i} + O(e^{-4r_i}).
\end{equation}
Adding~\eqref{eq:log2uc_final} and~\eqref{eq:eps_final} we can get:
\begin{equation}\label{eq:dist_2nd}
  d_{\HH}(\mathbf{x}_i,\mathbf{w}_c)
  = r_i + \underbrace{\log A_c}_{= b_{\bm{\xi}_i}(\mathbf{w}_c)}
    + \underbrace{\left(\frac{B_c}{A_c}
      - \frac{1}{A_c^2}\right)}_{:=\alpha_{\bm{\xi}_i}(\mathbf{w}_c)}
      e^{-2r_i}
    + O(e^{-4r_i}).
\end{equation}

We combine the two fractions over a common denominator:
\begin{equation}\label{eq:alpha_simplify_step1}
  \alpha_{\bm{\xi}_i}(\mathbf{w}_c)
  = \frac{B_c}{A_c} - \frac{1}{A_c^2}
  = \frac{A_c\,B_c - 1}{A_c^2} = \frac{\norm{\wbar_c}^2
          - (\xhat_i \cdot \wbar_c)^2}
         {(w_{c,0} - \xhat_i\cdot\wbar_c)^2}
  \;\geq\; 0.
\end{equation}
Denoting by 
$\wbar_c^\perp := \wbar_c - (\xhat \cdot \wbar_c)\,\xhat$
the component of $\wbar_c$ orthogonal to the boundary
direction~$\xhat$, and recalling that
$A_c = -\ipL{\mathbf{w}_c}{\bm{\xi}}$ with
$\bm{\xi} = (1, \xhat)$, we obtain the compact form
\begin{equation}\label{eq:alpha_compact}
  \alpha_{\bm{\xi}}(\mathbf{w}_c)
  := \frac{\norm{\wbar_c^\perp}^2}
             {\ipL{\mathbf{w}_c}{\bm{\xi}}^2},
\end{equation}
with equality if and only if $\wbar_c \parallel \xhat$
, which means that the prototype points is in the same direction as the
boundary point or $\mathbf{w}_c = \mathbf{o}$.
We collect these coefficients into a vector:
\begin{equation}\label{eq:alpha_vec}
  \boldsymbol{\alpha}(\bm{\xi}_i)
  := \bigl(\alpha_{\bm{\xi}_i}(\mathbf{w}_1),\;
     \ldots,\;
     \alpha_{\bm{\xi}_i}(\mathbf{w}_C)\bigr)^\top
  \in \R^C.
\end{equation}

By the PERM loss definition in~\eqref{eq:template},
the per-sample loss is
\begin{equation}\label{eq:perm_recall}
  \ell_i(\mathbf{x}_i, y_i)
  = \psi\!\bigl(\Upsilon_{y_i}\,D\,\mathbf{v}_i\bigr),
\end{equation}
where the score vector
$\mathbf{v}_i = (v_{i,1},\ldots,v_{i,C})^\top \in \R^C$
has entries
\(
  v_{i,c} = -d_{\HH}(\mathbf{x}_i, \mathbf{w}_c).
\)
Substituting the second-order expansion~\eqref{eq:dist_2nd}
for each component:
\begin{align}
  v_{i,c}
  &= -d_{\HH}(\mathbf{x}_i, \mathbf{w}_c) \nonumber\\
  &= -r_i - b_{\bm{\xi}_i}(\mathbf{w}_c)
     - \alpha_{\bm{\xi}_i}(\mathbf{w}_c)\,e^{-2r_i}
     + O(e^{-4r_i}).
     \label{eq:logit_component}
\end{align}
Recall from Definition~\ref{def:bus_loss} that
$\mathbf{b}(\bm{\xi}_i)_c
= -b_{\bm{\xi}_i}(\mathbf{w}_c)$.
Therefore $-b_{\bm{\xi}_i}(\mathbf{w}_c)
= \mathbf{b}(\bm{\xi}_i)_c$, and we can rewrite
\eqref{eq:logit_component} in vector form:
\begin{equation}\label{eq:logit_vec}
  \mathbf{v}_i
  = -r_i\,\mathbf{1}
    + \mathbf{b}(\bm{\xi}_i)
    - \boldsymbol{\alpha}(\bm{\xi}_i)\,e^{-2r_i}
    + O(e^{-4r_i}),
\end{equation}
where $\mathbf{1} = (1,\ldots,1)^\top \in \R^C$.
Now we apply $\Upsilon_{y_i}\,D$ to both sides
of~\eqref{eq:logit_vec}. The key algebraic property
of the relative-margin matrix~$D$ is
\begin{equation}\label{eq:D1_zero}
  D\,\mathbf{1} = \mathbf{0}.
\end{equation}
This holds because $D$ extracts pairwise differences
$v_y - v_c$: adding any constant to all logits leaves
all differences unchanged.  Therefore:
\begin{align}
  \Upsilon_{y_i}\,D\,\mathbf{v}_i
  &= -r_i\,
     \underbrace{\Upsilon_{y_i}\,D\,\mathbf{1}}_{=\,\mathbf{0}}
     \;+\; \Upsilon_{y_i}\,D\,\mathbf{b}(\bm{\xi}_i)
     \;-\; \Upsilon_{y_i}\,D\,\boldsymbol{\alpha}(\bm{\xi}_i)
           \,e^{-2r_i}
     \;+\; O(e^{-4r_i})
     \nonumber\\[6pt]
  &= \underbrace{\Upsilon_{y_i}\,D\,
     \mathbf{b}(\bm{\xi}_i)}_{
       =:\;\mathbf{z}_0\;\in\;\R^{C-1}}
     \;-\; \underbrace{\Upsilon_{y_i}\,D\,
           \boldsymbol{\alpha}(\bm{\xi}_i)}_{
       =:\;\mathbf{h}\;\in\;\R^{C-1}}
     \,e^{-2r_i}
     \;+\; O(e^{-4r_i}).
     \label{eq:perm_arg}
\end{align}

By Assumption~\ref{assump:perm}, the template $\psi : \R^{C-1} \to \R$ is twice differentiable.  We take the Taylor expansion of $\psi$ around the
point $\mathbf{z}_0 \in \R^{C-1}$ and step
$\boldsymbol{\delta} \in \R^{C-1}$:
\begin{equation}\label{eq:taylor_general}
  \psi(\mathbf{z}_0 + \boldsymbol{\delta})
  = \psi(\mathbf{z}_0)
    + \ip{\nabla\psi(\mathbf{z}_0)}{\boldsymbol{\delta}}
    + O\!\bigl(\norm{\boldsymbol{\delta}}^2\bigr).
\end{equation}
From~\eqref{eq:perm_arg}, the argument of $\psi$ is
$\mathbf{z}_0 + \boldsymbol{\delta}$, where the
step is
\begin{equation}\label{eq:delta_def}
  \boldsymbol{\delta}
  := -\mathbf{h}\,e^{-2r_i} + O(e^{-4r_i}).
\end{equation}
Its norm satisfies
$\norm{\boldsymbol{\delta}} = O(e^{-2r_i})$, and hence
$\norm{\boldsymbol{\delta}}^2 = O(e^{-4r_i})$.

Substitute~\eqref{eq:delta_def}
into~\eqref{eq:taylor_general}:
\begin{align}
  \psi(\mathbf{z}_0 + \boldsymbol{\delta})
  &= \psi(\mathbf{z}_0)
     + \bigl\langle \nabla\psi(\mathbf{z}_0),\;
       -\mathbf{h}\,e^{-2r_i} + O(e^{-4r_i})
       \bigr\rangle
     + O(e^{-4r_i})
     \nonumber\\[4pt]
  &= \psi(\mathbf{z}_0)
     - \ip{\nabla\psi(\mathbf{z}_0)}{\mathbf{h}}
       \,e^{-2r_i}
     + O(e^{-4r_i}),
     \label{eq:psi_expanded}
\end{align}
where in the second line we used the fact that
$\ip{\nabla\psi(\mathbf{z}_0)}{O(e^{-4r_i})}
= O(e^{-4r_i})$  since $\nabla\psi(\mathbf{z}_0)$ is
a fixed vector, independent of~$r_i$.
Adding~\eqref{eq:perm_recall} and~\eqref{eq:psi_expanded} we get:
\begin{equation}\label{eq:loss_final}
  \ell_i(\mathbf{x}_i, y_i)
  = \psi(\mathbf{z}_0)
    - \ip{\nabla\psi(\mathbf{z}_0)}{\mathbf{h}}\,e^{-2r_i}
    + O(e^{-4r_i}).
\end{equation}
And we substitute the definition
$\mathbf{z}_0 = \Upsilon_{y_i}\,D\,\mathbf{b}(\bm{\xi}_i)$:
\begin{equation}
  \psi(\mathbf{z}_0)
  = \psi\!\bigl(\Upsilon_{y_i}\,D\,
    \mathbf{b}(\bm{\xi}_i)\bigr)
  = \mathcal{L}^{\mathrm{BS}}(\bm{\xi}_i, y_i),
\end{equation}
which by (Definition~\ref{def:bus_loss}), it is the Busemann loss. Then we 
substitute
$\mathbf{h} = \Upsilon_{y_i}\,D\,
\boldsymbol{\alpha}(\bm{\xi}_i)$ into the coefficient
of $e^{-2r_i}$:
\begin{equation}
  -\ip{\nabla\psi(\mathbf{z}_0)}{\mathbf{h}}
  = -\bigl\langle
    \nabla\psi\!\bigl(\Upsilon_{y_i}\,D\,
      \mathbf{b}(\bm{\xi}_i)\bigr),\;
    \Upsilon_{y_i}\,D\,\boldsymbol{\alpha}(\bm{\xi}_i)
    \bigr\rangle
  =: \mu(\bm{\xi}_i, y_i).
\end{equation}
This is the \emph{radial drift coefficient} defined
in~\eqref{eq:beta_def_v22}.

Then we obtain:
\[
  \ell_i(\mathbf{x}_i, y_i)
  = \mathcal{L}^{\mathrm{BS}}(\bm{\xi}_i, y_i)
    + \mu(\bm{\xi}_i, y_i)\,e^{-2r_i}
    + O(e^{-4r_i}),
\]
valid for all $r_i \geq R_0(\mathbf{W})
= \max_{c \in [C]}
\log\bigl(4(w_{c,0}+\norm{\wbar_c})\bigr)$.
This completed the proof.
\end{proof}

\newpage

\section{Proof of Section~\ref{section 5}}
\subsection{Proof of Theorem~\ref{thm:radial}}\label{proof: 5.2}

\textbf{Theorem~\ref{thm:radial}}\quad (Radial dichotomy)
\textit{Under Assumption~(\ref{assump:perm}), for any drift lower bound
\(\underline{\mu}>0\), there exists
\(R_0=R_0(\mathbf{W},\underline{\mu})>0\) such that:
\begin{enumerate}[label=\textup{(\alph*)}]
\item \textbf{Boundary divergence.}\;
  If there exist \(T_2>T_1\geq0\) such that \(r(T_1)\ge R_0\) and
  \(\mu(\bm{\xi}(t),y)\ge\underline{\mu}\) on the time interval
  \(I=[T_1,T_2]\), then
  \(
    r(t)\ge\tfrac{1}{2}\log\!\bigl(
      e^{2r(T_1)}+2\underline{\mu}(t-T_1)
    \bigr)
  \)
  for all \(t\in I\).
  Moreover, if \(r(T_1)\ge R_0\) and
  \(\mu(\bm{\xi}(t),y)\ge\underline{\mu}\) for all \(t\ge T_1\), then
  \begin{equation*}
    r(t)=\tfrac{1}{2}\log t+O(1),
    \qquad t\to\infty.
  \end{equation*}
\item \textbf{Boundary repulsion.}\;
  If there exist \(T_2>T_1\geq0\) such that
  \(\mu(\bm{\xi}(t),y)\le-\underline{\mu}\) and \(r(t)\geq R_0\) for all
  \(t\in I=[T_1,T_2]\), then
  \(
    r(t)\le\tfrac{1}{2}\log\!\bigl(
      e^{2r(T_1)}-2\underline{\mu}(t-T_1)
    \bigr)
  \)
  for all \(t\in I\).
  Moreover, if \(r(T_1)\ge R_0\) and
  \(\mu(\bm{\xi}(t),y)\le-\underline{\mu}\)
  whenever \(r(t)\ge R_0\), then the first hitting time
  \(T^*:=\inf\{t\ge T_1:\, r(t)\le R_0\}\) satisfies
  \begin{equation*}
    T^*\le T_1+
    \frac{e^{2r(T_1)}-e^{2R_0}}{2\underline{\mu}}
    <\infty.
  \end{equation*}
\end{enumerate}
}

\begin{proof}

Firstly recall that any point $\mathbf{x} \in \Hn \setminus \{\mathbf{o}\}$
can be written in geodesic polar coordinates as
$\mathbf{x} = (\cosh r,\, \sinh r\,\xhat)$ with $r > 0$
and $\xhat \in \Sn$.  In these coordinates, the
Riemannian metric of $\Hn$ takes the form
\begin{equation}\label{eq:polar_metric}
  ds^2 = dr^2 + \sinh^2\!(r)\; d\xhat^2,
\end{equation}
where $d\xhat^2$ is the round metric on $\Sn$.
For any smooth function \(f:\Hn\to\R\), the Riemannian
gradient in polar coordinates is
\begin{equation}\label{eq:grad_polar}
  \grad_{\Hn} f
  =
  \frac{\partial f}{\partial r}\mathbf e_r
  +
  \frac{1}{\sinh^2\!(r)}\,
  \grad_{\Sn} f,
\end{equation}
where \(\mathbf e_r\) denotes the unit radial vector field. Since the prototypes $\{\mathbf{w}_c\}$ are fixed, the
loss $\ell_i$ depends only on $\mathbf{x}_i$.  The
Riemannian gradient flow is
\begin{equation}\label{eq:grad_flow_recall}
  \dot{\mathbf{x}}_i(t)
  = -\grad_{\Hn}\,\ell_i(\mathbf{x}_i(t), y_i).
\end{equation}
Projecting onto the radial direction using
\eqref{eq:grad_polar}:
\begin{equation}\label{eq:rdot_general}
  \dot{r}(t) = -\frac{\partial \ell_i}{\partial r}\,
               \bigg|_{\xhat = \xhat(t)}.
\end{equation}
From Proposition~\ref{prop:loss_expansion}, for
$r \geq R_0(\mathbf{W})$:
\begin{equation}\label{eq:loss_exp_recall}
  \ell_i = \mathcal{L}^{\mathrm{BS}}(\bm{\xi}, y_i)
           + \mu(\bm{\xi}, y_i)\,e^{-2r}
           + O(e^{-4r}).
\end{equation}
When we differentiate with respect to $r$ while holding
$\xhat$ fixed. Since the first term
      $\mathcal{L}^{\mathrm{BS}}(\bm{\xi}, y_i)$ depends
      only on the direction $\bm{\xi} = (1, \xhat)$, not
      on~$r$, so
      $\frac{\partial}{\partial r}
       \mathcal{L}^{\mathrm{BS}} = 0$.
The second term $\mu = \mu(\bm{\xi}, y_i)$ also does not depend
      on~$r$:
      $\frac{\partial}{\partial r}
       \bigl[\mu\, e^{-2r}\bigr]
       = \mu \cdot (-2)\, e^{-2r}
       = -2\mu\, e^{-2r}$.
       The remainder:
      $\frac{\partial}{\partial r}
       O(e^{-4r}) = O(e^{-4r})$.
Substituting into~\eqref{eq:rdot_general}:
\begin{equation}\label{eq:rdot_explicit}
 \dot{r}(t) = 2\mu(t)\,e^{-2r(t)}
                      + O(e^{-4r(t)}).
\end{equation}
 The sign of $\mu(t)$
determines whether the sample is pushed toward the
boundary ($\mu > 0$) or pulled back toward the interior
($\mu < 0$).

\textbf{Proof of part~(a):
boundary divergence.}

Firstly, the $O(e^{-4r})$ in~\eqref{eq:rdot_explicit} can be
bounded that there exists a constant
$M = M(\mathbf{W}) > 0$, depending only on the
prototypes, such that for all $r \geq R_0$ and all
$\xhat \in \Sn$:
\begin{equation}\label{eq:remainder_bound}
  \abs{\dot{r}(t) - 2\mu(t)\,e^{-2r(t)}}
  \leq M\,e^{-4r(t)}.
\end{equation}
The constant $M$ can be extracted from the explicit
second-order expansion in
Proposition~\ref{prop:loss_expansion}.

Assume $r(T_1) \geq R_0$ and
$\mu(t) \geq \underline{\mu} > 0$ on some interval $I=[T_1,T_2]$.
From~\eqref{eq:rdot_explicit}
and~\eqref{eq:remainder_bound}:
\begin{equation}\label{eq:rdot_lower_raw}
  \dot{r}(t) \geq 2\underline{\mu}\,e^{-2r(t)}
                   - M\,e^{-4r(t)}
              = e^{-2r(t)}\bigl(
                2\underline{\mu} - M\,e^{-2r(t)}
                \bigr).
\end{equation}
We can choose $R_0 \geq \frac{1}{2}\log(M/\underline{\mu})$ so that
$M\,e^{-2R_0} \leq \underline{\mu}$.  
Then for all $r(t) \geq R_0$:
\begin{equation}\label{eq:rdot_lower}
  \dot{r}(t) \geq e^{-2r(t)}\bigl(
                  2\underline{\mu} - \underline{\mu}\bigr)
              = \underline{\mu}\,e^{-2r(t)}
              > 0.
\end{equation}
So $r(t)$ is strictly increasing whenever $r(t) \geq R_0$.
Since $r(T_1) \geq R_0$, we have $r(t) \geq R_0$ for all
$t \in I$, and $\dot{r}(t) > 0$ for all $t \in I$, which means once the trajectory is above $R_0$, it never
comes back down on this interval.

Then we try to prove $r(t) \to \infty$ under the global condition.
Suppose $\mu(t) \geq \underline{\mu} > 0$ for all $t\geq T_1$.
Suppose for contradiction that $r(t) \to r_B < \infty$
as $t \to \infty$.  Since $r(t)$ is increasing and $r(t) \to r_B$,
we have $r(t) \leq r_B$ for all $t \geq T_1$, and
hence $e^{-2r(t)} \geq e^{-2r_B}$.
Substituting into~\eqref{eq:rdot_lower} and integrating
from $T_1$ to $T$:
\begin{equation}
  r(T) - r(T_1)
  = \int_{T_1}^T \dot{r}(t)\,dt
  \geq \int_{T_1}^T \underline{\mu}\,e^{-2r(t)}\,dt
  \geq \underline{\mu}\,e^{-2r_B}\cdot (T-T_1).
\end{equation}
Sending $T \to \infty$, the right-hand side diverges,
so $r(T) \to \infty$, contradicting
$r(T) \to r_B < \infty$.

Therefore $r(t) \to \infty$.

Then we prove the rate of divergence. Firstly, we prove the 
lower bound that
$r(t) \geq \frac{1}{2}\log t + O(1)$.

We multiply both sides of~\eqref{eq:rdot_lower}
by $e^{2r(t)} > 0$:
\[
  e^{2r(t)}\,\dot{r}(t)
  \;\geq\; \underline{\mu}\,
           e^{-2r(t)}\,e^{2r(t)}
  \;=\; \underline{\mu}.
\]
The left-hand side is a total derivative.  By the
chain rule,
\(
  \frac{d}{dt}\bigl[e^{2r(t)}\bigr]
  = 2\,e^{2r(t)}\,\dot{r}(t),
\)
so the inequality becomes
\begin{equation}\label{eq:f_lower_ineq}
  \frac{d}{dt}\bigl[e^{2r(t)}\bigr]
  \;\geq\; 2\underline{\mu}.
\end{equation}
This says that $e^{2r(t)}$ grows at least as fast as a
linear function.  Integrating both sides from $T_1$ to~$T$:
\begin{equation}
  e^{2r(T)} - e^{2r(T_1)}
  \;=\; \int_{T_1}^T
        \frac{d}{dt}\bigl[e^{2r(t)}\bigr]\,dt
  \;\geq\; \int_{T_1}^T 2\underline{\mu}\,dt
  \;=\; 2\underline{\mu}\,(T-T_1).
\end{equation}
Rearranging:
\begin{equation}\label{eq:e2r_lower}
  e^{2r(T)} \;\geq\; 2\underline{\mu}\,(T-T_1) + e^{2r(T_1)}.
\end{equation}
Taking logarithms  and
dividing by~$2$:
\begin{equation}\label{eq:r_lower_v2}
  r(T) \;\geq\; \frac{1}{2}
        \log\!\bigl(2\underline{\mu}\,(T-T_1) + e^{2r(T_1)}\bigr).
\end{equation}
For $T \geq T_1+1$, since
$2\underline{\mu}\,(T-T_1) + e^{2r(T_1)} \geq 2\underline{\mu}\,(T-T_1)$:
\begin{equation}\label{eq:r_lower_asymp}
  r(T) \;\geq\; \frac{1}{2}\log(2\underline{\mu}\,(T-T_1))
  \;=\; \frac{1}{2}\log T
        + O(1).
\end{equation}

Then we prove the upper bound that
$r(t) \leq \frac{1}{2}\log t + O(1)$.

Since $\mu(t) \geq \underline{\mu} > 0$ and $\xhat(t)$
stays in the compact set $\Sn$, there exists a constant
$\bar{\mu} := \sup_{\xhat \in \Sn} \mu(\bm{\xi}, y_i)
< \infty$ (by continuity of $\mu$ on the compact
set~$\Sn$).  From~\eqref{eq:rdot_explicit}
and~\eqref{eq:remainder_bound}:
\begin{equation}\label{eq:rdot_upper_raw}
  \dot{r}(t) \leq 2\bar{\mu}\,e^{-2r(t)}
                   + M\,e^{-4r(t)}.
\end{equation}
For $r(t) \geq R_0\geq \frac{1}{2}\log(M/\bar{\mu})$, we have
$M\,e^{-4r} \leq M\,e^{-2R_0}\,e^{-2r}
\leq \bar{\mu}\,e^{-2r}$,
therefore:
\begin{equation}\label{eq:rdot_upper}
  \dot{r}(t) \leq 3\bar{\mu}\,e^{-2r(t)}.
\end{equation}

From~\eqref{eq:rdot_upper},
$\dot{r}(t) \leq 3\bar{\mu}\,e^{-2r(t)}$.
Similarly, multiplying both sides by $e^{2r(t)}$:
\begin{equation}
  e^{2r(t)}\,\dot{r}(t) \;\leq\; 3\bar{\mu},
\end{equation}
hence
for $T \geq T_1+1$:
\begin{equation}\label{eq:r_upper_asymp}
  r(T) \;\leq\; \frac{1}{2}
        \log\!\bigl(6\bar{\mu}\,(T-T_1) + e^{2r(T_1)}\bigr)
  \;\leq\; \frac{1}{2}\log T
           + O(1).
\end{equation}

Finally, from~\eqref{eq:r_lower_asymp}
and~\eqref{eq:r_upper_asymp}, there exist constants
$c_1, c_2$ (depending on $\underline{\mu}$, $\bar{\mu}$,
$r(T_1)$, and $T_1$) such that for all $t \geq T_1+1$:
\begin{equation}
  \frac{1}{2}\log t + c_1
  \;\leq\; r(t)
  \;\leq\; \frac{1}{2}\log t + c_2.
\end{equation}
This gives $r(t) = \frac{1}{2}\log t + O(1)$, proving
part~(a).

\textbf{Proof of part~(b):
boundary repulsion.}

Assume $r(T_1)\geq R_0$ and
$\mu(t) \leq -\underline{\mu} < 0$
whenever $r(t) \geq R_0$.

From~\eqref{eq:rdot_explicit}
and~\eqref{eq:remainder_bound}, whenever
$r(t) \geq R_0$:
\begin{equation}\label{eq:rdot_repel}
  \dot{r}(t) \leq -2\underline{\mu}\,e^{-2r(t)}
                   + M\,e^{-4r(t)}
              = e^{-2r(t)}\bigl(
                -2\underline{\mu} + M\,e^{-2r(t)}\bigr).
\end{equation}
Also, we can choose $R_0 \geq \frac{1}{2}\log(M/\underline{\mu})$ so that
$M\,e^{-2R_0} \leq \underline{\mu}$. Then for all
$r(t) \geq R_0$:
\begin{equation}\label{eq:rdot_repel_clean}
  \dot{r}(t) \leq -\underline{\mu}\,e^{-2r(t)} < 0.
\end{equation}
So $r(t)$ is \emph{strictly decreasing} whenever
$r(t) \geq R_0$.
Similarly, multiplying both sides by $e^{2r(t)}$:
\begin{equation}\label{eq:fdot}
  2\dot{r}(t)\,e^{2r(t)}
  \leq 2\bigl(-\underline{\mu}\,e^{-2r(t)}\bigr)
       \,e^{2r(t)}
  = -2\underline{\mu}.
\end{equation}
Integrating from $T_1$ to $t$:
\begin{equation}\label{eq:f_decrease}
  e^{2r(t)} \leq e^{2r(T_1)} - 2\underline{\mu}\,(t-T_1)
       .
\end{equation}
The condition $e^{2r(t)} \leq e^{2R_0}$ (i.e., $r(t) \leq R_0$)
is achieved when
\begin{equation}\label{eq:Tstar}
  e^{2r(T_1)} - 2\underline{\mu}\,(t-T_1) \leq e^{2R_0},
  \qquad
  t \geq T^* := T_1+
  \frac{e^{2r(T_1)} - e^{2R_0}}{2\underline{\mu}}.
\end{equation}
Since $r(T_1) \geq R_0$, we have $T^* \geq T_1$, and
$T^* < \infty$.  So $r(t)$ drops below $R_0$ no later
than time~$T^*$.

This completed the proof of part~(b), and with it, we completed the proof of the whole
theorem.
\end{proof}
\newpage

\subsection{Proof of Theorem~\ref{thm:three_phase}}
\label{proof: 6.5}

\textbf{Theorem~\ref{thm:three_phase}}
\textit{Under Assumptions~(\ref{assump:perm}) and
~(\ref{assump:noncollinear}), with $\psi$ real analytic,
suppose $\bm\xi(T_0)$ lies in the Busemann gradient-flow
basin of a nondegenerate local minimum $\bm\xi^*$,
with $\mu(\bm\xi(T_0),y)>0$.
For a fixed initial angular direction, there exist
$R_0>0$ and constants $C,\lambda,\kappa>0$
such that, for any $R\ge R_0$, if
$r(T_0)\ge 2R$, then there exist finite times
$T_0<T_1<T_2<T_3$ satisfying:}

\noindent\textbf{(a) Large-radius excursion.}\;
\textit{The trajectory initially moves outward,
$\dot r(T_0)>0$, while
$\|\dot{\xhat}(t)\|=O(e^{-2r(t)})$
in the large-radius regime.}

\smallskip
\noindent\textbf{(b) Angular alignment and drift reversal.}\;
\textit{At $T_1$, $\mu(\bm\xi(T_1),y)=0$,
and $\mu(\bm\xi(t),y)<0$ for $t\in(T_1,T_3]$.
At $T_2$, the trajectory first enters the
sufficiently aligned region
$\mathcal A(\bm\xi^*)$.}

\smallskip
\noindent\textbf{(c) Slow radial return.}\;
\textit{For $t\in[T_2,T_3]$, $\dot r(t)<0$
and $|\dot r(t)|=O(e^{-2r(t)})$, with
$r(T_3)=R$.}
\textit{For any $\varepsilon>0$,
the returning trajectory reaches
\begin{equation*}
    d_{\Sn}(\bm\xi(T_3),\bm\xi^*)\le\varepsilon
\end{equation*}
in finite time by choosing $R$ sufficiently large.}

\begin{proof}
Firstly recall that, in geodesic polar coordinates
$x=(\cosh r,\sinh r\,\xhat)$, the Riemannian gradient
flow satisfies
\begin{equation}
\dot r=-\partial_r\ell,
\qquad
\dot{\xhat}
=
-\frac{1}{\sinh^2r}\grad_{\Sn}\ell .
\label{eq:three_phase_polar}
\end{equation}
From Proposition~\ref{prop:loss_expansion},
$\ell(x,y)=\mathcal L^{\mathrm{BS}}(\bm\xi,y)
+\mu(\bm\xi,y)e^{-2r}+O(e^{-4r})$.
Differentiating gives
\begin{equation}
\dot r
=
2\mu(\bm\xi,y)e^{-2r}+O(e^{-4r}),
\qquad
\dot{\xhat}
=
-\frac{1}{\sinh^2r}
\left(
\grad_{\Sn}\mathcal L^{\mathrm{BS}}(\bm\xi,y)
+O(e^{-2r})
\right).
\label{eq:three_phase_flow}
\end{equation}

We first establish the sign of the radial drift at
a Busemann critical direction.
Write $g_c=-\partial_c\psi>0$ at the Busemann
margins and $b_c=b_{\bm\xi}(w_c)$.
By the definitions of $\mu$ and
$\mathcal L^{\mathrm{BS}}$,
\[
\mu=\sum_{c\ne y}g_c
\left(\|\grad_{\Sn}b_y\|^2-\|\grad_{\Sn}b_c\|^2\right),
~\
\grad_{\Sn}\mathcal L^{\mathrm{BS}}
=\sum_{c\ne y}g_c(\grad_{\Sn}b_y-\grad_{\Sn}b_c).
\]
Consequently, at any Busemann critical direction
$\bm\xi^*$,
\begin{equation}
\mu(\bm\xi^*,y)
=
-\sum_{c\ne y}g_c
\left\|
\grad_{\Sn}b_{\bm\xi^*}(w_c)
-
\grad_{\Sn}b_{\bm\xi^*}(w_y)
\right\|^2<0.
\label{eq:three_phase_mu_negative}
\end{equation}
Indeed, equality would require identical Busemann
gradients for all prototypes, placing them on a common
hyperbolic geodesic asymptotic to $\bm\xi^*$,
contrary to Assumption~(\ref{assump:noncollinear}).
Since $\bm\xi^*$ is a nondegenerate local minimum,
we may choose $\mathcal A(\bm\xi^*)$ sufficiently
small that $\mu(\bm\xi,y)\le-m<0$ and
$\operatorname{Hess}_{\Sn}
\mathcal L^{\mathrm{BS}}\succeq\lambda I$
throughout this region, for some $m,\lambda>0$.

We next compare the angular dynamics with Busemann
gradient flow. Introduce the rescaled time
$s(t)=\int_{T_0}^t\sinh^{-2}r(\tau)\,d\tau$.
Equation~\eqref{eq:three_phase_flow} becomes
\begin{equation}
\frac{dr}{ds}
=
\frac12\mu(\bm\xi,y)+O(e^{-2r}),
\qquad
\frac{d\xhat}{ds}
=
-\grad_{\Sn}\mathcal L^{\mathrm{BS}}(\bm\xi,y)
+O(e^{-2r}).
\label{eq:three_phase_rescaled}
\end{equation}
Let $\bm\zeta(s)$ solve the unperturbed Busemann
gradient flow starting from $\bm\xi(T_0)$.
By the basin-of-attraction hypothesis,
$\bm\zeta(s)\to\bm\xi^*$.
Hence there exists a finite $S>0$ such that
$d_{\Sn}(\bm\zeta(S),\bm\xi^*)<\theta_*/2$.

Since $\mu$ is uniformly bounded, the rescaled
radial velocity is bounded by some $M>0$ in the
large-radius regime. Thus $r(s)\ge r(T_0)-Ms$.
For sufficiently large $R$, the trajectory remains
above $R$ on $[0,S]$, and the angular perturbation
is uniformly $O(e^{-2R})$.
By smoothness and a Gronwall estimate,
\begin{equation}
\sup_{0\le s\le S}
d_{\Sn}(\bm\xi(s),\bm\zeta(s))
\le C_S e^{-2R}.
\label{eq:three_phase_shadowing}
\end{equation}
Choosing $R$ sufficiently large, the full angular
trajectory therefore enters $\mathcal A(\bm\xi^*)$
within finite rescaled time, while remaining at
large radius.

We now prove the three parts.

\paragraph{Proof of part (a): large-radius excursion.}
At $T_0$, $\mu(\bm\xi(T_0),y)>0$.
From \eqref{eq:three_phase_flow}, for sufficiently
large $r(T_0)$, the positive leading term dominates
the remainder, giving $\dot r(T_0)>0$.
Thus the trajectory initially moves outward.

Moreover, smoothness of $\mathcal L^{\mathrm{BS}}$
and compactness of $\Sn$ imply that its spherical
gradient is uniformly bounded. Hence
\[
\|\dot{\xhat}(t)\|
=
O(\sinh^{-2}r(t))
=
O(e^{-2r(t)})
\]
throughout the large-radius regime.
Intermediate drift reversals are allowed.
This proves part~(a).

\paragraph{Proof of part (b): angular alignment and drift reversal.}
Let $T_2$ be the first entrance time into
$\mathcal A(\bm\xi^*)$.
The finite-time approximation above shows that
$T_2<\infty$ and, since $s(T_2)\le S$,
\[
r(T_2)\ge 2R-MS>R
\]
for sufficiently large $R$.

By \eqref{eq:three_phase_mu_negative} and the
definition of $\mathcal A(\bm\xi^*)$,
$\mu(\bm\xi(T_2),y)\le-m<0$.
Together with $\mu(\bm\xi(T_0),y)>0$,
continuity gives at least one zero crossing
on $(T_0,T_2)$.
Let $T_1$ be the last such crossing.
Then $T_0<T_1<T_2$ and
$\mu(\bm\xi(t),y)<0$ for $t\in(T_1,T_2]$.

Furthermore, positive definiteness of the local
Busemann Hessian and
\eqref{eq:three_phase_rescaled} yield
\begin{equation}
\frac{d}{ds}
d_{\Sn}(\bm\xi,\bm\xi^*)
\le
-\lambda d_{\Sn}(\bm\xi,\bm\xi^*)
+Ce^{-2r}.
\label{eq:three_phase_contraction}
\end{equation}
At zero distance, the inequality is understood
in the upper Dini derivative sense.
Choosing $R$ so that
$Ce^{-2R}<\lambda\theta_*$ makes
$\mathcal A(\bm\xi^*)$ forward invariant while
$r\ge R$.
Therefore $\mu<0$ persists throughout the
subsequent large-radius return.
This proves part~(b).

\paragraph{Proof of part (c): slow radial return.}
Within $\mathcal A(\bm\xi^*)$,
$\mu(\bm\xi,y)\le-m<0$.
From \eqref{eq:three_phase_rescaled}, for sufficiently
large $R$,
\[
-M\le\frac{dr}{ds}\le-\frac m4<0
\]
whenever $r\ge R$.
Thus the radius decreases strictly and reaches
$R$ in finite rescaled time.
Let $T_3$ denote the first hitting time.
Since $r$ remains bounded on $[T_2,T_3]$,
the corresponding original time is also finite.
Moreover, \eqref{eq:three_phase_flow} gives
$|\dot r(t)|=O(e^{-2r(t)})$.

It remains to establish the angular convergence.
Since $r$ decreases on $[T_2,T_3]$,
$e^{-2r(t)}\le e^{-2R}$.
Integrating \eqref{eq:three_phase_contraction}
therefore gives
\begin{equation}
d_{\Sn}(\bm\xi(t),\bm\xi^*)
\le
\theta_*e^{-\lambda(s(t)-s(T_2))}
+\frac{C}{\lambda}e^{-2R}.
\label{eq:three_phase_alignment_bound}
\end{equation}
On the other hand, the lower bound on the radial
velocity and $r(T_2)\ge2R-MS$ imply
\[
s(T_3)-s(T_2)
\ge
\frac{r(T_2)-R}{M}
\ge
\frac{R-MS}{M}.
\]
Substituting into
\eqref{eq:three_phase_alignment_bound} yields
\begin{equation}
d_{\Sn}(\bm\xi(T_3),\bm\xi^*)
\le
\theta_*e^{\lambda S}e^{-\lambda R/M}
+\frac{C}{\lambda}e^{-2R}
\le C'e^{-\kappa R},
\label{eq:three_phase_epsilon_limit}
\end{equation}
where $\kappa=\min\{\lambda/M,2\}>0$
and $C'>0$ is independent of $R$.

Consequently, for any $\varepsilon>0$,
choosing $R$ sufficiently large gives
\[
d_{\Sn}(\bm\xi(T_3),\bm\xi^*)
\le\varepsilon.
\]
Thus the returning trajectory approaches the
Busemann local minimum to arbitrary accuracy
in finite time, despite the decreasing radius.
This completes the proof.
\end{proof}
\newpage

\subsection{Proof of Theorem~\ref{thm:angular}}
\label{proof: 5.6}

\textbf{Theorem~\ref{thm:angular}}\quad
(Angular convergence to Busemann minimizers in the escaping regime)\;
\textit{Under
Assumptions~(\ref{assump:perm}) and (\ref{assump:ideal-prototypes}),
the Riemannian gradient flow satisfies
\begin{equation*}
  r(t)\longrightarrow\infty,
  \qquad
  \bm{\xi}(t)\longrightarrow\bm{\xi}^*
  ,
  \quad \text{where}\quad
  \bm{\xi}^*\in
  \arg\min_{\bm{\xi}\in\partial\Hn}
  \mathcal{L}^{\mathrm{BS}}(\bm{\xi},y).
\end{equation*}
 If Assumption~
(\ref{assump:ideal-prototypes}) holds for all $i\in[N]$,
then $(\bm{\xi}_1^*,\ldots,\bm{\xi}_N^*)$ is a
\textbf{global minimizer} of the Busemann risk
$\mathcal{R}^{\mathrm{BS}}_{\mathbf{W}^\infty}
  =\frac{1}{N}\sum_{i=1}^{N}
  \mathcal{L}^{\mathrm{BS}}(\bm{\xi}_i,y_i)$
on~$(\partial\Hn)^N$.}

\begin{proof}
We first consider a single sample with label $y$.
Under Assumption~(\ref{assump:ideal-prototypes}),
the class scores are
$v_c(\mathbf x)=-b_{w_c^\infty}(\mathbf x)$,
where the ideal prototypes are pairwise distinct.
For each ideal prototype
$w_c^\infty=(1,\hat w_c)$, consider
$w_c(R)=(\cosh R,\sinh R\,\hat w_c)\in\Hn$.
By the radial Busemann expansion,
\[
d_{\Hn}(x,w_c(R))
=R+b_{w_c^\infty}(x)+O(e^{-2R}).
\]
Since all prototypes have the same radius $R$,
the common term $-R$ cancels in the relative
margins of any PERM loss. Consequently,
the limiting class scores are
\[
v_c(x)=-b_{w_c^\infty}(x).
\]

Recall that $\mathbf x=(x_0,\bar{\mathbf x})\in\Hn$.
By a spatial rotation, we may assume that the
true-class ideal prototype is
$w_y^\infty=(1,1,0,\ldots,0)$.
Write $\bar{\mathbf x}=(x_1,\mathbf u)$ and
$s=x_0-x_1>0$. The hyperboloid constraint
$\langle\mathbf x,\mathbf x\rangle_M=-1$
then gives
$x_0+x_1=(1+\|\mathbf u\|^2)/s$.

For each $c\ne y$, write
$w_c^\infty=(1,\hat w_{c,1},
\hat{\mathbf w}_{c,\perp})$ and define
$a_c=\hat{\mathbf w}_{c,\perp}/
(1-\hat w_{c,1})$.
Since the ideal prototypes are distinct,
$1-\hat w_{c,1}>0$.
Using the Minkowski inner product, we obtain
$-\langle\mathbf x,w_y^\infty\rangle_M=s$
and
\begin{equation}\label{eq:ideal_margins}
m_c(\mathbf x)
=
b_{w_c^\infty}(\mathbf x)
-b_{w_y^\infty}(\mathbf x)
=
\log\frac{1+\|\mathbf u-sa_c\|^2}
{s^2(1+\|a_c\|^2)},
\qquad c\ne y.
\end{equation}

By Assumption~(\ref{assump:perm}),
$\ell(\mathbf x,y)=\psi(m)$.
Let $q_c=-\partial_c\psi(m)>0$.
For any ideal prototype $w_c^\infty$,
the Riemannian gradient of its Busemann
function is
$\grad_{\Hn}b_{w_c^\infty}(\mathbf x)
=\mathbf x-
w_c^\infty/(-\langle\mathbf x,
w_c^\infty\rangle_M)$.
Consequently, the full Riemannian
gradient flow satisfies
\begin{equation}\label{eq:ideal_gradient_flow}
\begin{aligned}
\dot s
&=-2s\sum_{c\ne y}
\frac{q_c}{1+\|\mathbf u-sa_c\|^2}<0,\\
\dot{\mathbf u}
&=-2s\sum_{c\ne y}
\frac{q_ca_c}{1+\|\mathbf u-sa_c\|^2}.
\end{aligned}
\end{equation}
Thus $s(t)$ is strictly decreasing.

We first verify global existence.
Since $0<s(t)\le s(0)$, the margins
in~\eqref{eq:ideal_margins} are uniformly
bounded below. By convexity,
nonnegativity, and coordinatewise
monotonicity of $\psi$, we have
$q_c\le\psi(m-e_c)$, where $e_c$
is the corresponding coordinate vector.
The lower bounds on the margins therefore
imply that all $q_c$ remain bounded.
Moreover, since each Busemann function
has unit Riemannian gradient norm,
$\|\grad_{\Hn}m_c\|\le2$.
Hence $\|\grad_{\Hn}\ell\|$ is uniformly
bounded. By completeness of $\Hn$,
the gradient flow exists for all $t\ge0$.

We next establish radial divergence.
Dividing the two equations
in~\eqref{eq:ideal_gradient_flow} yields
\begin{equation}\label{eq:ideal_ratio}
\frac{d\mathbf u}{ds}
=
\frac{\displaystyle\sum_{c\ne y}
\frac{q_ca_c}{1+\|\mathbf u-sa_c\|^2}}
{\displaystyle\sum_{c\ne y}
\frac{q_c}{1+\|\mathbf u-sa_c\|^2}}.
\end{equation}
The right-hand side is a weighted
average of the fixed vectors $a_c$,
and is therefore uniformly bounded.
Since $s(t)$ is positive and decreasing,
equation~\eqref{eq:ideal_ratio} implies
that $\mathbf u(t)$ remains bounded
and converges to a finite limit.

Suppose, for contradiction, that
$s(t)\to s_\infty>0$.
Then $s$ and $\mathbf u$ remain in a
compact region, and so does $\mathbf x$
by the hyperboloid constraint.
Since $q_c>0$ and the vector field is
continuous, the first equation
in~\eqref{eq:ideal_gradient_flow}
has a strictly negative upper bound
on this compact region.
This contradicts the convergence of
$s(t)$ to a positive limit.
Therefore $s(t)\to0$.

By the hyperboloid constraint,
$x_0+x_1=(1+\|\mathbf u\|^2)/s\to\infty$,
while $x_0-x_1=s\to0$.
Thus $x_0\to\infty$,
$x_1/x_0\to1$, and
$\mathbf u/x_0\to0$.
Recalling $x_0=\cosh r$ and
$\bar{\mathbf x}=\sinh r\,\xhat$,
we conclude that
\begin{equation}\label{eq:ideal_convergence}
r(t)\longrightarrow\infty,
\qquad
\bm\xi(t)\longrightarrow
\bm\xi^*\in\partial\Hn,
\quad t\to\infty.
\end{equation}

It remains to establish the minimizing
property of $\bm\xi^*$.
Recall that the Busemann loss retains
the original PERM form
$\mathcal L^{\mathrm{BS}}(\bm\xi,y)
=\psi(\Upsilon_yD\mathbf b(\bm\xi))$.
For directions distinct from the ideal
prototypes, the relative margins are
$\log((1-\xhat^\top\hat w_c)/
(1-\xhat^\top\hat w_y))$, $c\ne y$.

As $\bm\xi$ approaches the true-class
ideal direction, all relative margins
diverge to $+\infty$.
Since $\psi$ is nonnegative and strictly
decreasing in each coordinate,
$L_*=\lim_{a\to\infty}\psi(a\mathbf1)$
exists and equals the infimum of $\psi$.
The Busemann loss therefore extends
continuously to $\bm\xi^*$ with value
$L_*$.

For any direction with finite relative
margins, strict coordinatewise
monotonicity gives
$\mathcal L^{\mathrm{BS}}(\bm\xi,y)>L_*$.
At any other ideal prototype direction,
at least one relative margin tends to
$-\infty$, while the remaining margins
stay bounded above. Monotonicity then
also excludes the value $L_*$.
Consequently,
\begin{equation}
\bm\xi^*\in
\argmin_{\bm\xi\in\partial\Hn}
\mathcal L^{\mathrm{BS}}(\bm\xi,y).
\end{equation}

Finally, restoring the sample index $i$,
the same argument applies independently
to every trajectory. Since the Busemann
risk is separable across samples,
$\mathcal R_{\mathbf W^\infty}^{\mathrm{BS}}
=\frac1N\sum_{i=1}^N
\mathcal L^{\mathrm{BS}}(\bm\xi_i,y_i)$,
the joint limit
$(\bm\xi_1^*,\ldots,\bm\xi_N^*)$
is a global minimizer on
$(\partial\Hn)^N$.
This completes the proof.
\end{proof}

\newpage

\subsection{Proof of Theorem~\ref{thm:boundary}}
\label{proof: 5.7}

\textbf{Theorem~\ref{thm:boundary}}
\textit{Under Assumption~(\ref{assump:perm}), let
\(\{\mathbf w_c\}_{c=1}^C\) be fixed prototypes,
either all in \(\Hn\) or all in \(\partial\Hn\),
with distance or Busemann scores, respectively.
For a test point
\(\mathbf x=(\cosh r,\sinh r\,\xhat)\) with
\(\bm\xi=(1,\xhat)\in\partial\Hn\), define
\(
  \hat y(\mathbf x)
  :=
  \argmin_{y\in[C]}
  \psi\!\left(\Upsilon_yD\,\mathbf v(\mathbf x)\right).
\)
If \(\argmin_y\mathcal L^{\mathrm{BS}}(\bm\xi,y)\) is unique, then for all
sufficiently large \(r\),
\(
  \hat y(\mathbf x)
  =
  \argmin_{y\in[C]}\mathcal L^{\mathrm{BS}}(\bm\xi,y)
  =
  \argmin_{c\in[C]} b_{\bm\xi}(\mathbf w_c).
\)
For ideal prototypes, this equality holds for all \(r>0\).}

\textit{Moreover, the decision boundary between
classes $c$ and $c'$ on $\partial\Hn\cong\Sn$ is
\begin{equation*}
\begin{aligned}
\mathcal B_{c,c'}=\bigl\{\xhat\in\Sn:\;&
\xhat^\top(\wbar_c-\wbar_{c'})
=w_{c,0}-w_{c',0},\quad b_{\bm\xi}(\mathbf w_c)
\le b_{\bm\xi}(\mathbf w_k),
\quad \forall k\in[C]\bigr\},
\end{aligned}
\end{equation*}
which is a subset of the intersection of $\Sn$
with an affine hyperplane in $\R^n$.
For ideal prototypes, $w_{c,0}=1$, so the
corresponding spherical sections are great subspheres.}

\begin{proof}
We first consider the case of finite prototypes.
Throughout this part, fix a boundary
direction
\(
  \bm{\xi}=(1,\xhat)\in\partial\Hn,
 ~
  \xhat\in\Sn,
\)
with
\(
  \mathbf{x}=(\cosh r,\sinh r\,\xhat).
\)
We write
\begin{equation}
  \beta_c := b_{\bm{\xi}}(\mathbf w_c),
  \qquad c\in[C],
\end{equation}
for the Busemann value of the \(c\)-th prototype.

For each candidate label \(y\in[C]\), the PERM loss at the test point is
\begin{equation}
  \psi\!\left(\Upsilon_yD\mathbf v(\mathbf x)\right),
  \qquad
  v_c(\mathbf x)=-d_{\HH}(\mathbf x,\mathbf w_c).
\end{equation}
By the radial Busemann expansion~\eqref{eq:dist_bus_exp}, for each prototype \(\mathbf w_c\),
\begin{equation}
  d_{\HH}(\mathbf x,\mathbf w_c)
  =
  r+b_{\bm{\xi}}(\mathbf w_c)+O(e^{-2r})
  =
  r+\beta_c+O(e^{-2r}).
\end{equation}
By the definition of the Busemann loss,
\begin{equation}
  \mathcal L^{\mathrm{BS}}(\bm{\xi},y)
  =
  \psi\!\left(
  \Upsilon_yD
  \begin{pmatrix}
    -\beta_1\\
    \vdots\\
    -\beta_C
  \end{pmatrix}
  \right).
\end{equation}
Since \(\psi\) is smooth and the number of labels is finite, we have
\begin{equation}
  \psi\!\left(\Upsilon_yD\mathbf v(\mathbf x)\right)
  =
  \mathcal L^{\mathrm{BS}}(\bm{\xi},y)
  +
  O(e^{-2r}),
\end{equation}
uniformly over \(y\in[C]\). Thus, there exist constants \(C_0>0\) and
\(R_0>0\) such that, for all \(r\ge R_0\) and all \(y\in[C]\),
\begin{equation}\label{eq:uniform_boundary_loss_error}
  \left|
  \psi\!\left(\Upsilon_yD\mathbf v(\mathbf x)\right)
  -
  \mathcal L^{\mathrm{BS}}(\bm{\xi},y)
  \right|
  \le C_0e^{-2r}.
\end{equation}

Let
\begin{equation}
  y^*
  =
  \argmin_{y\in[C]}
  \mathcal L^{\mathrm{BS}}(\bm{\xi},y).
\end{equation}
By assumption, this minimizer is unique. Therefore, the gap between the best
label and the second best label is strictly positive:
\begin{equation}
  \Delta
  :=
  \min_{y\neq y^*}
  \left[
    \mathcal L^{\mathrm{BS}}(\bm{\xi},y)
    -
    \mathcal L^{\mathrm{BS}}(\bm{\xi},y^*)
  \right]
  >0.
\end{equation}
Choose \(R_1\ge R_0\) so large that
\begin{equation}
  2C_0e^{-2r}<\Delta
  \qquad
  \text{for all } r\ge R_1.
\end{equation}
Then, for any \(y\neq y^*\) and any \(r\ge R_1\),
\begin{equation}
\begin{aligned}
  &\psi\!\left(\Upsilon_yD\mathbf v(\mathbf x)\right)
  -
  \psi\!\left(\Upsilon_{y^*}D\mathbf v(\mathbf x)\right)
  \\
  &\quad \ge
  \mathcal L^{\mathrm{BS}}(\bm{\xi},y)
  -
  \mathcal L^{\mathrm{BS}}(\bm{\xi},y^*)
  -
  2C_0e^{-2r}
  \\
  &\quad \ge
  \Delta-2C_0e^{-2r}
  >0.
\end{aligned}
\end{equation}
Thus, for all sufficiently large \(r\),
\begin{equation}\label{eq:finite_perm_to_bs_prediction}
  \argmin_{y\in[C]}
  \psi\!\left(\Upsilon_yD\mathbf v(\mathbf x)\right)
  =
  \argmin_{y\in[C]}
  \mathcal L^{\mathrm{BS}}(\bm{\xi},y).
\end{equation}

We now prove that
\begin{equation}\label{eq:busemann_loss_minimum_beta}
  \argmin_{y\in[C]}
  \mathcal L^{\mathrm{BS}}(\bm{\xi},y)
  =
  \argmin_{c\in[C]}
  b_{\bm{\xi}}(\mathbf w_c).
\end{equation}

Recall that \(\beta_c=b_{\bm{\xi}}(\mathbf w_c)\). For a candidate label
\(y\), the vector of Busemann margins has the form
\begin{equation}
  \mathbf m^y
  =
  \bigl(\beta_c-\beta_y\bigr)_{c\neq y}
  \in \mathbb R^{C-1},
\end{equation}
up to a permutation of coordinates. Hence
\begin{equation}
  \mathcal L^{\mathrm{BS}}(\bm{\xi},y)
  =
  \psi(\mathbf m^y).
\end{equation}

Let
\begin{equation}
  y^*
  \in
  \argmin_{c\in[C]}\beta_c.
\end{equation}
We compare the loss of \(y^*\) with the loss of any other label \(y\neq y^*\).

If \(\beta_y>\beta_{y^*}\), then the margin vector for \(y^*\) is strictly
larger than the margin vector for \(y\), after a possible permutation of
coordinates. Indeed, for every \(c\neq y,y^*\),
\begin{equation}
  \beta_c-\beta_{y^*}
  >
  \beta_c-\beta_y,
\end{equation}
because \(\beta_y>\beta_{y^*}\). For the remaining coordinate pair, we have
\begin{equation}
  \beta_y-\beta_{y^*}
  >
  \beta_{y^*}-\beta_y.
\end{equation}
Thus \(\mathbf m^{y^*}\) is strictly larger than \(\mathbf m^y\)
componentwise, up to permutation.

Since \(\psi\) is a PERM template, it is invariant under permutations of its
coordinates. Since the loss is totally regular, \(\nabla\psi\prec0\), so
\(\psi\) is strictly decreasing in each coordinate. Therefore,
\begin{equation}
  \psi(\mathbf m^{y^*})<\psi(\mathbf m^y),
\end{equation}
or equivalently,
\begin{equation}
  \mathcal L^{\mathrm{BS}}(\bm{\xi},y^*)
  <
  \mathcal L^{\mathrm{BS}}(\bm{\xi},y).
\end{equation}
Hence any label \(y\) with \(\beta_y>\beta_{y^*}\) cannot minimize the
Busemann loss.

If instead \(\beta_y=\beta_{y^*}\), then the two margin vectors
\(\mathbf m^y\) and \(\mathbf m^{y^*}\) differ only by a permutation of
coordinates. By permutation invariance of \(\psi\),
\begin{equation}
  \psi(\mathbf m^y)=\psi(\mathbf m^{y^*}).
\end{equation}
Therefore the minimizers of the Busemann loss are exactly the labels whose
Busemann value \(\beta_y\) is minimal. This proves
\eqref{eq:busemann_loss_minimum_beta}. In particular, uniqueness of
\(\argmin_y\mathcal L^{\mathrm{BS}}(\bm{\xi},y)\) is equivalent to uniqueness
of \(\argmin_c b_{\bm{\xi}}(\mathbf w_c)\).

Combining \eqref{eq:finite_perm_to_bs_prediction} and
\eqref{eq:busemann_loss_minimum_beta}, we obtain, for all sufficiently large
\(r\),
\begin{equation}
  \hat y(\mathbf x)
  =
  \argmin_{y\in[C]}
  \mathcal L^{\mathrm{BS}}(\bm{\xi},y)
  =
  \argmin_{c\in[C]}
  b_{\bm{\xi}}(\mathbf w_c).
\end{equation}

Let
\begin{equation}
  c^*
  =
  \argmin_{c\in[C]}
  b_{\bm{\xi}}(\mathbf w_c).
\end{equation}
By uniqueness, the Busemann gap
\begin{equation}
  \Gamma
  :=
  \min_{c\neq c^*}
  \left[
    b_{\bm{\xi}}(\mathbf w_c)
    -
    b_{\bm{\xi}}(\mathbf w_{c^*})
  \right]
\end{equation}
is strictly positive.
Again by the radial Busemann expansion,
\(
  d_{\HH}(\mathbf x,\mathbf w_c)
  =
  r+b_{\bm{\xi}}(\mathbf w_c)+O(e^{-2r})
\)
for every \(c\in[C]\). Since there are only finitely many prototypes, the
error can be bounded uniformly over \(c\). Hence there exist constants
\(C_1>0\) and \(R_2>0\) such that, for all \(r\ge R_2\),
\begin{equation}
  \left|
  d_{\HH}(\mathbf x,\mathbf w_c)
  -
  r
  -
  b_{\bm{\xi}}(\mathbf w_c)
  \right|
  \le C_1e^{-2r}
  \qquad
  \text{for all } c\in[C].
\end{equation}
Choose \(R_3\ge R_2\) such that
\begin{equation}
  2C_1e^{-2r}<\Gamma
  \qquad
  \text{for all } r\ge R_3.
\end{equation}
Then, for any \(c\neq c^*\) and any \(r\ge R_3\),
\begin{equation}
\begin{aligned}
  d_{\HH}(\mathbf x,\mathbf w_c)
  -
  d_{\HH}(\mathbf x,\mathbf w_{c^*})
  &\ge
  b_{\bm{\xi}}(\mathbf w_c)
  -
  b_{\bm{\xi}}(\mathbf w_{c^*})
  -
  2C_1e^{-2r}
  \\
  &\ge
  \Gamma-2C_1e^{-2r}
  >0.
\end{aligned}
\end{equation}
Thus
\begin{equation}
  \argmin_{c\in[C]}
  d_{\HH}(\mathbf x,\mathbf w_c)
  =
  \argmin_{c\in[C]}
  b_{\bm{\xi}}(\mathbf w_c)
\end{equation}
for all sufficiently large \(r\). Therefore the asymptotic PERM prediction
also coincides with the nearest prototype prediction.

We next consider ideal prototypes
$\mathbf w_c=w_c^\infty\in\partial\Hn$.
Their class scores satisfy
\begin{equation}
v_c(\mathbf x)
=
-b_{w_c^\infty}(\mathbf x)
=
-\log\!\left(
\cosh r-\sinh r\,\xhat^\top\wbar_c
\right).
\end{equation}
For every $r>0$, the ordering of these scores
is determined exactly by
$\xhat^\top\wbar_c$. The same PERM comparison
argument above therefore gives
$\hat y(\mathbf x)=
\argmax_{c\in[C]}\xhat^\top\wbar_c$.
Moreover, the extended Busemann values satisfy
$b_{\bm\xi}(w_c^\infty)
=\log(1-\xhat^\top\wbar_c)$
away from the ideal prototype directions.
Their ordering is identical, and hence
\begin{equation}
\hat y(\mathbf x)
=
\argmin_{y\in[C]}
\mathcal L^{\mathrm{BS}}(\bm\xi,y)
=
\argmin_{c\in[C]}
b_{\bm\xi}(w_c^\infty),
\qquad r>0.
\end{equation}
The equality at ideal prototype directions
follows from the continuous extension of
the Busemann loss. Thus, unlike the finite
case, the classification rule is exact
for every positive radius.

Finally, we characterize the decision boundary.
For either prototype configuration, a boundary
direction separating classes $c$ and $c'$ must
satisfy
\begin{equation}
  b_{\bm{\xi}}(\mathbf w_c)
  =
  b_{\bm{\xi}}(\mathbf w_{c'}),
\end{equation}
with the common value minimal among all
classes. Using the explicit formula
\begin{equation}
  b_{\bm{\xi}}(\mathbf w_c)
  =
  \log\!\bigl(w_{c,0}-\xhat\cdot\wbar_c\bigr),
\end{equation}
the equality becomes
\begin{equation}
  \log\!\bigl(w_{c,0}-\xhat\cdot\wbar_c\bigr)
  =
  \log\!\bigl(w_{c',0}-\xhat\cdot\wbar_{c'}\bigr).
\end{equation}
The quantities inside the logarithms are
positive for finite prototypes and for
ideal prototypes away from their own
directions. Since distinct ideal prototypes
cannot be tied at either endpoint, the
logarithms are well defined at any
pairwise boundary. Their injectivity gives
\begin{equation}
  w_{c,0}-\xhat\cdot\wbar_c
  =
  w_{c',0}-\xhat\cdot\wbar_{c'}.
\end{equation}
Rearranging gives
\begin{equation}
  \xhat\cdot(\wbar_c-\wbar_{c'})
  =
  w_{c,0}-w_{c',0}.
\end{equation}
The additional minimum condition requires
$b_{\bm\xi}(\mathbf w_c)
\le b_{\bm\xi}(\mathbf w_k)$
for every $k\in[C]$. Therefore,
\begin{equation}
\begin{aligned}
  \mathcal B_{c,c'}
  =
  \bigl\{\xhat\in\Sn:\;&
  \xhat\cdot(\wbar_c-\wbar_{c'})
  =
  w_{c,0}-w_{c',0},~\
  b_{\bm\xi}(\mathbf w_c)
  \le b_{\bm\xi}(\mathbf w_k),
  \quad \forall k\in[C]\bigr\}.
\end{aligned}
\end{equation}
This is a subset of the intersection
of $\Sn$ with the affine hyperplane
\begin{equation}
  \left\{
  z\in\mathbb R^n:
  z\cdot(\wbar_c-\wbar_{c'})
  =
  w_{c,0}-w_{c',0}
  \right\}.
\end{equation}
For ideal prototypes, $w_{c,0}=w_{c',0}=1$,
so the affine hyperplane passes through
the origin and its spherical section
is a great subsphere.

Then we completed the proof.
\end{proof}

\newpage

\section{Experiments}\label{sec:experiments(appendix)}

In this appendix, we present the numerical setup,
integration procedures, ablations, and additional
diagnostics for the radial and angular dynamics.
We first integrate the radial and limiting Busemann
angular equations separately to examine their
asymptotic behavior. We then study the full coupled
Riemannian gradient flow, focusing on the evolution
of the drift coefficient $\mu$ and the three-phase
dynamics characterized by
Theorem~\ref{thm:three_phase}.
Finally, we report classification results from
neural network training.

\subsection{Setup}
\label{exp:setup}

Throughout the numerical experiments, we use the
hyperboloid model
\begin{equation}
    \mathbb{H}^n
    =
    \left\{
    \mathbf x\in\mathbb R^{n+1}:
    \langle\mathbf x,\mathbf x\rangle_M=-1,
    \ x_0>0
    \right\},
\end{equation}
with $n=2$ and constant sectional curvature $-1$.
We represent each point in polar coordinates
$\mathbf x=(\cosh r,\sinh r\,\xhat)$,
where $r\ge0$ and $\xhat\in\Sn$,
and visualize the trajectories on the Poincar\'e
disk through $\tanh(r/2)\xhat$.

For the canonical prototype configuration,
we use $C=3$ fixed prototypes on the geodesic
sphere of radius $\rho=1.0$, equally spaced
at angles $\phi_c=2\pi c/C$ for
$c\in\{0,1,2\}$. The prototypes remain
fixed throughout the numerical integration,
while the sample embeddings evolve according
to the corresponding gradient dynamics.

\paragraph{Dataset.}
The dataset consists of $N=15$ samples, with
five samples per class assigned by
$\mathrm{label}_i=i\bmod C$.
For each sample, the initial radius is drawn
from $r_0\sim\mathrm{Uniform}[1.0,2.0]$,
and the initial angle from
$\phi_0\sim\mathrm{Uniform}[0,2\pi)$,
with $\xhat_0=(\cos\phi_0,\sin\phi_0)$.
We use rejection sampling with angular
exclusion intervals of $\pm0.10$ radians
around the prototype directions.
All random initializations use
\texttt{np.random.default\_rng(seed=11)}.

The default loss is the PERM cross-entropy
$\psi_{\mathrm{CE}}(u)
=\log(1+\sum_j e^{-u_j})$,
where the relative margins are
$u_k=v_y-v_{j_k}$ with
$v_c=-d_{\Hn}(\mathbf x,\mathbf w_c)$
and $j_k\in[C]\setminus\{y\}$.

\subsection{Separate Radial and Angular Flows}
\label{exp:implementations}

We first examine the radial and angular
components of the large-radius dynamics
through separate numerical integration.
The radial trajectories characterize the
dependence of radial motion on the drift
coefficient $\mu$, while the angular
trajectories characterize optimization
of the Busemann loss on $\partial\Hn$.
The resulting behavior is visualized in
Fig.~\ref{fig:experiment_summary}.

\paragraph{Radial flow.}
We fix the angular direction
$\xhat_i(t)=\xhat_i(0)$ and integrate
$\dot r=-\partial_r\ell$.
Introducing the logarithmic time coordinate
$\tau=\log t$ allows us to examine the
asymptotic radial scaling
$r(t)=\frac12\log t+O(1)$ predicted
by Theorem~\ref{thm:radial}.

We sample $\tau\in[-2,18]$ at 300 points
using \texttt{scipy.integrate.solve\_ivp}
with LSODA, relative tolerance $10^{-7}$,
and absolute tolerance $10^{-9}$.
Integration terminates when $r\ge14$
or $r\le0.05$.
We partition the trajectories according
to the sign of the initial drift
$\mu(\bm\xi_i(0),y_i)$ and estimate
their late-time slopes using linear
regression over the final 30\% of the
integration interval.

\paragraph{Busemann angular flow.}
We integrate the limiting angular dynamics
on $\Sn$:
\begin{equation}
\frac{d\xhat}{ds}
=
-\grad_{\Sn}
\mathcal L^{\mathrm{BS}}(\bm\xi,y).
\end{equation}
The Busemann loss is evaluated as
$\mathcal L^{\mathrm{BS}}(\bm\xi,y)
=\psi((b_{\bm\xi}(\mathbf w_c)
-b_{\bm\xi}(\mathbf w_y))_{c\ne y})$,
and its spherical gradient is computed
analytically.

We sample $s\in[0,40]$ at 120 linearly
spaced points using
\texttt{scipy.integrate.solve\_ivp}
with RK45, relative tolerance $10^{-8}$,
absolute tolerance $10^{-10}$,
and maximum step size $0.5$.
The angular directions are normalized
during integration.
We evaluate convergence through the
limiting directions and the spherical
gradient norm
$\|\grad_{\Sn}\mathcal L^{\mathrm{BS}}\|$.

\paragraph{Numerical implementation.}
We evaluate the Minkowski inner product
using the identity
\begin{equation}
-\langle\mathbf x,\mathbf w\rangle_M
=
\frac{A}{2}e^r+\frac{B}{2}e^{-r},
\end{equation}
where $A=w_0-\xhat^\top\wbar$ and
$B=w_0+\xhat^\top\wbar$.
Hyperbolic distances are evaluated through
$\operatorname{arccosh}(
\max\{-\langle\mathbf x,\mathbf w\rangle_M,1\})$,
with analytic derivatives in the radial
and angular coordinates.
For the drift coefficient, we use
\begin{equation}
\alpha_{\bm\xi}(\mathbf w_c)
=
\frac{
\|\wbar_c\|^2-(\xhat^\top\wbar_c)^2
}{
(w_{c,0}-\xhat^\top\wbar_c)^2
}
=
\frac{\|\wbar_c^\perp\|^2}
{\langle\mathbf w_c,\bm\xi\rangle_M^2}.
\end{equation}
We evaluate the PERM losses and their
derivatives using the same numerical
expressions across prototype configurations.

\subsection{Coupled Gradient Flow and Three-Phase Dynamics}

We next integrate the full Riemannian gradient flow, allowing both
the radius and angular direction to evolve:
\begin{equation}
\dot r=-\frac{\partial\ell}{\partial r},
\qquad
\dot{\hat{\bm\xi}}
=-\frac{1}{\sinh^2 r}
\operatorname{grad}_{\Sn}\ell.
\end{equation}
This experiment examines the interaction between radial drift and
angular alignment under the dynamics of Theorem~6.4.

\paragraph{Initialization and evaluation.}
We consider prototype counts \(C\in\{2,3,5\}\) with equally spaced
prototypes at radius \(\rho=1\), and compare cross-entropy,
multiclass exponential, and sum-of-softplus PERM losses.
For the nondegenerate configurations, initial directions are selected
within the attraction basin of a Busemann local minimum
\(\bm\xi^*\), with \(\mu(\bm\xi(0),y)>0\).
We initialize trajectories at large radii and integrate the
coupled radial and angular equations simultaneously.

Along each trajectory, we record the radius \(r(t)\), drift
coefficient \(\mu(\bm\xi(t),y)\), and spherical distance
\(d_{\Sn}(\bm\xi(t),\bm\xi^*)\).
We also evaluate the radial velocity \(\dot r(t)\) and its
large-radius approximation
\(2\mu(\bm\xi(t),y)e^{-2r(t)}\).
Figure~\ref{fig:comparison} presents the trajectories across
prototype counts and PERM losses, with shaded regions indicating
standard deviations across seeds.

\paragraph{Sufficiently aligned regions.}
For nondegenerate configurations, we examine the sufficiently
aligned region defined in Definition~6.3:
\[
\mathcal A(\bm\xi^*)
=
\left\{
\bm\xi\in\partial\Hn:
d_{\Sn}(\bm\xi,\bm\xi^*)\le\theta_*
\right\},
\]
where \(\theta_*>0\) is chosen such that
\(\mu(\bm\xi,y)<0\) and the Busemann Hessian is
positive definite throughout the region.
We record the first entry time
\[
T_2=\inf\{t:\bm\xi(t)\in\mathcal A(\bm\xi^*)\},
\]
indicated by vertical dashed lines in
Figure~\ref{fig:comparison}.
The inset in the drift panel magnifies the near-zero behavior
of \(\mu\) for \(C=3\) and \(C=5\).

\paragraph{Three-phase evolution.}
The coupled trajectories exhibit three successive phases.
Initially, positive drift produces outward radial motion while
the directions evolve toward Busemann local minima.
As angular alignment proceeds, the drift decreases and
crosses zero, reversing the leading radial motion.
The trajectories subsequently enter the sufficiently aligned
regions, where negative drift induces slow radial return.

Figure~\ref{fig:comparison} illustrates these phases across
the nondegenerate configurations, while the degenerate case
exhibits weaker drift reversal.
The angular distances decrease toward zero across the tested
configurations, providing numerical evidence of the Busemann
directional bias.

\paragraph{Radial and angular coupling.}
The large-radius radial dynamics satisfy
\[
\dot r
=
2\mu(\bm\xi,y)e^{-2r}+O(e^{-4r}).
\]
The numerical trajectories illustrate how angular alignment
changes the sign of the leading radial term, connecting
outward excursions with subsequent slow return.
Together, these observations support the coupled radial and
angular behavior characterized by Theorem~6.4.

\subsection{Ablations}
\label{exp:ablations}

We examine the radial and Busemann
angular dynamics across three factors:
the number of prototypes, the PERM loss
template, and the geometric configuration
of the prototypes. The results are
presented column-wise in
Fig.~\ref{fig:experiment_summary}.

\paragraph{(i) Number of prototypes $C$.}
We consider $C\in\{2,3,5\}$, with five
samples per class, giving
$N\in\{10,15,25\}$.
The prototype radius is fixed at
$\rho=1.0$, and the loss is cross-entropy.
All configurations use seed 11.
This experiment examines the radial
dichotomy and the structure of angular
attractors as the number of classes
increases.

\paragraph{(ii) PERM loss template $\psi$.}
We compare three PERM loss templates:
\begin{enumerate}
    \item Cross-entropy:
    $\psi_{\mathrm{CE}}(u)
    =\log(1+\sum_j e^{-u_j})$.
    \item Multiclass exponential:
    $\psi_{\mathrm{EXP}}(u)
    =\sum_j e^{-u_j}$.
    \item Sum-of-softplus:
    $\psi_{\mathrm{SS}}(u)
    =\sum_j\log(1+e^{-u_j})$.
\end{enumerate}
We use numerically stable implementations
of these functions, including
\texttt{logsumexp} for cross-entropy.
Their gradients are checked against
\texttt{scipy.optimize.approx\_fprime},
with relative error below $10^{-6}$.
All three losses are evaluated on the
same canonical configuration
($C=3$, $\rho=1.0$, seed 11).

\paragraph{(iii) Prototype configuration.}
We consider three geometric layouts:
\begin{enumerate}
    \item Canonical: equal radii
    $\rho_c=1.0$, with
    $\phi_c=2\pi c/3$.
    \item Unequal radii:
    $\rho_c\in\{0.7,1.0,1.5\}$,
    with equally spaced angles.
    \item Clustered angles:
    equal radii $\rho_c=1.0$,
    with $\phi_c\in\{0,\pi/3,2\pi/3\}$.
\end{enumerate}
We use cross-entropy and seed 11
throughout these configurations.

\paragraph{Evaluation.}
For each panel of
Fig.~\ref{fig:experiment_summary},
we fit a linear regression to
$r_i(\tau)$ over the final 30\% of
the logarithmic time interval.
We report the late-time slope with
bootstrap confidence intervals.
The predicted positive-drift slope
is $1/2$.

We further group the radial trajectories
by $\mu(\bm\xi_i(0),y_i)>0$ and
$\mu(\bm\xi_i(0),y_i)\le0$,
corresponding to the two radial
behaviors. Angular convergence is
evaluated through the spherical
gradient norm and the limiting
Busemann directions.

Across the tested configurations,
the positive-drift radial trajectories
exhibit the predicted logarithmic
scaling, while the negative-drift
trajectories move inward.
The angular trajectories converge to
Busemann critical points, with
$\|\grad_{\Sn}\mathcal L^{\mathrm{BS}}\|
\le10^{-6}$ across the tested settings.
For the canonical configuration,
the limiting gradient norms reach
approximately $10^{-16}$.

\subsection{Additional Diagnostics}
\label{exp:add_diag}

We further examine the drift coefficient
at Busemann angular attractors, visualize
the trajectories on the Poincar\'e disk,
and characterize the angular attraction
basins. These diagnostics complement
the radial and angular results in
Fig.~\ref{fig:experiment_summary}
and the coupled three-phase analysis.

\paragraph{Drift coefficient at angular attractors.}
For each panel, we cluster the limiting
angular directions into bins of width
$0.05$ radians and evaluate the drift
coefficient at each cluster centroid:
\begin{equation*}
\mu(\bm\xi^*,y)
=
-\left\langle
\nabla\psi(\Upsilon_yD\mathbf b(\bm\xi^*)),
\Upsilon_yD\boldsymbol\alpha(\bm\xi^*)
\right\rangle.
\end{equation*}
In nine of the ten panels, all observed
angular attractors have negative drift.
For the canonical configuration
($C=3$, $\rho=1$, cross-entropy),
we obtain six attractors with
$\mu\in[-0.965,-0.059]$.
For $C=5$, the nine observed attractors
satisfy $\mu\in[-1.033,-0.303]$.

In the $C=2$ configuration, the
degenerate angular structure produces
$\mu\approx10^{-15}$ at the corresponding
antipodal directions.
The distribution of drift coefficients
illustrates the negative-drift structure
of nondegenerate Busemann attractors
and its connection to the radial return
mechanism.

\begin{figure}[h]
    \centering
    \includegraphics[width=\linewidth]{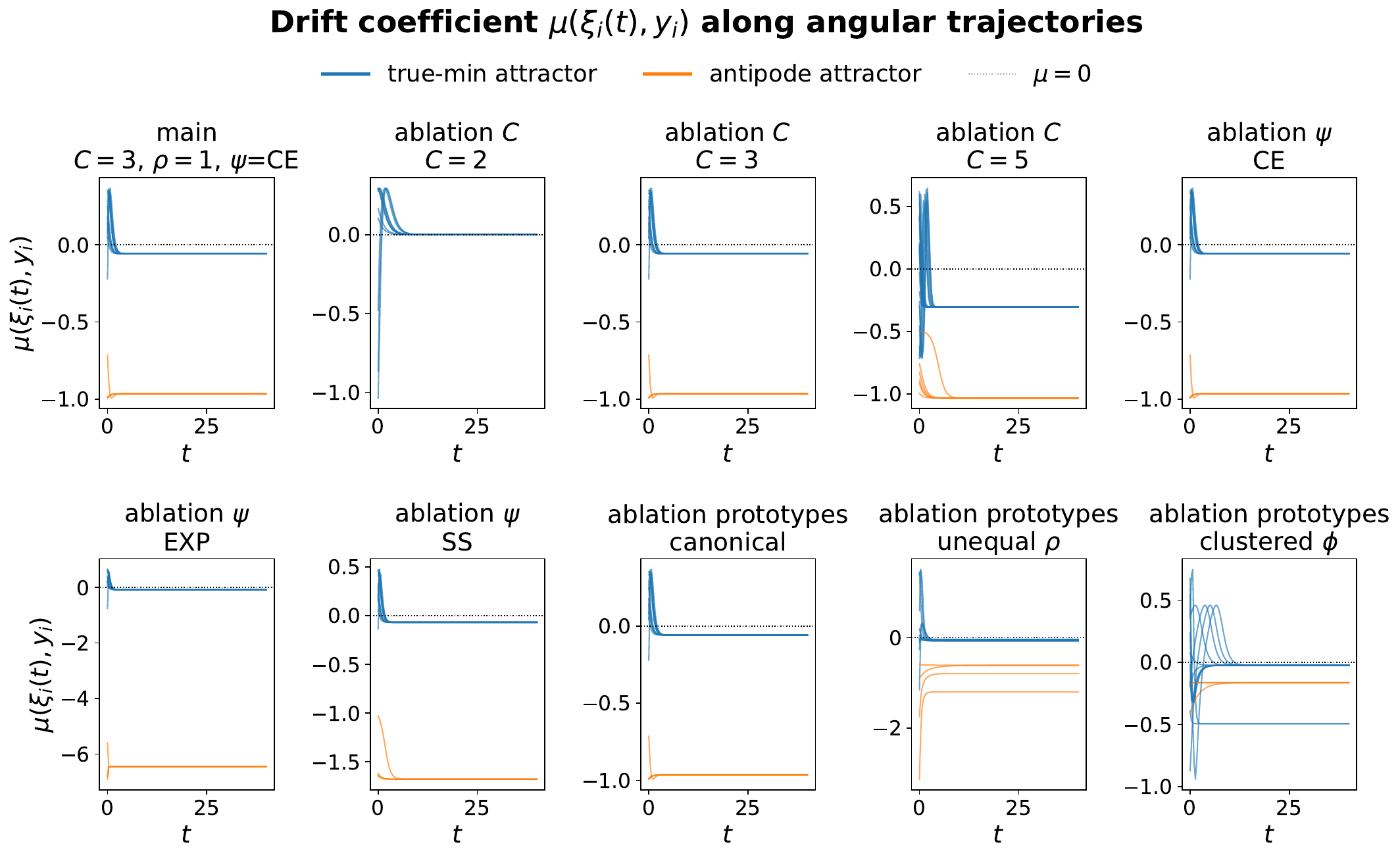}
    \caption{Evolution of the drift coefficient
    $\mu(\bm\xi(t),y)$ along the limiting
    Busemann angular trajectories.
    Blue: class-aligned attractors;
    orange: antipodal attractors;
    dashed line: $\mu=0$.
    The observed nondegenerate attractors
    have negative drift, illustrating
    the angular alignment mechanism
    underlying radial return.}
    \label{fig:app1_mu_traces}
\end{figure}

\paragraph{Poincar\'e disk visualizations.}
Fig.~\ref{fig:app2_disks} presents the
same ten configurations as
Fig.~\ref{fig:experiment_summary},
arranged in a $4\times5$ grid.
The upper two rows show the radial
trajectories, while the lower two rows
show the Busemann angular trajectories.

We use $\mathbf X$ to denote prototype
locations, $\circ$ for initial
directions, and $\star$ for the
limiting angular directions.
For the angular trajectories,
we apply a radial visualization
coordinate in $[0.55,0.97]$ to
display their evolution on $\mathbb S^1$
within the Poincar\'e disk.

\begin{figure}[h]
    \centering
    \includegraphics[width=\linewidth]{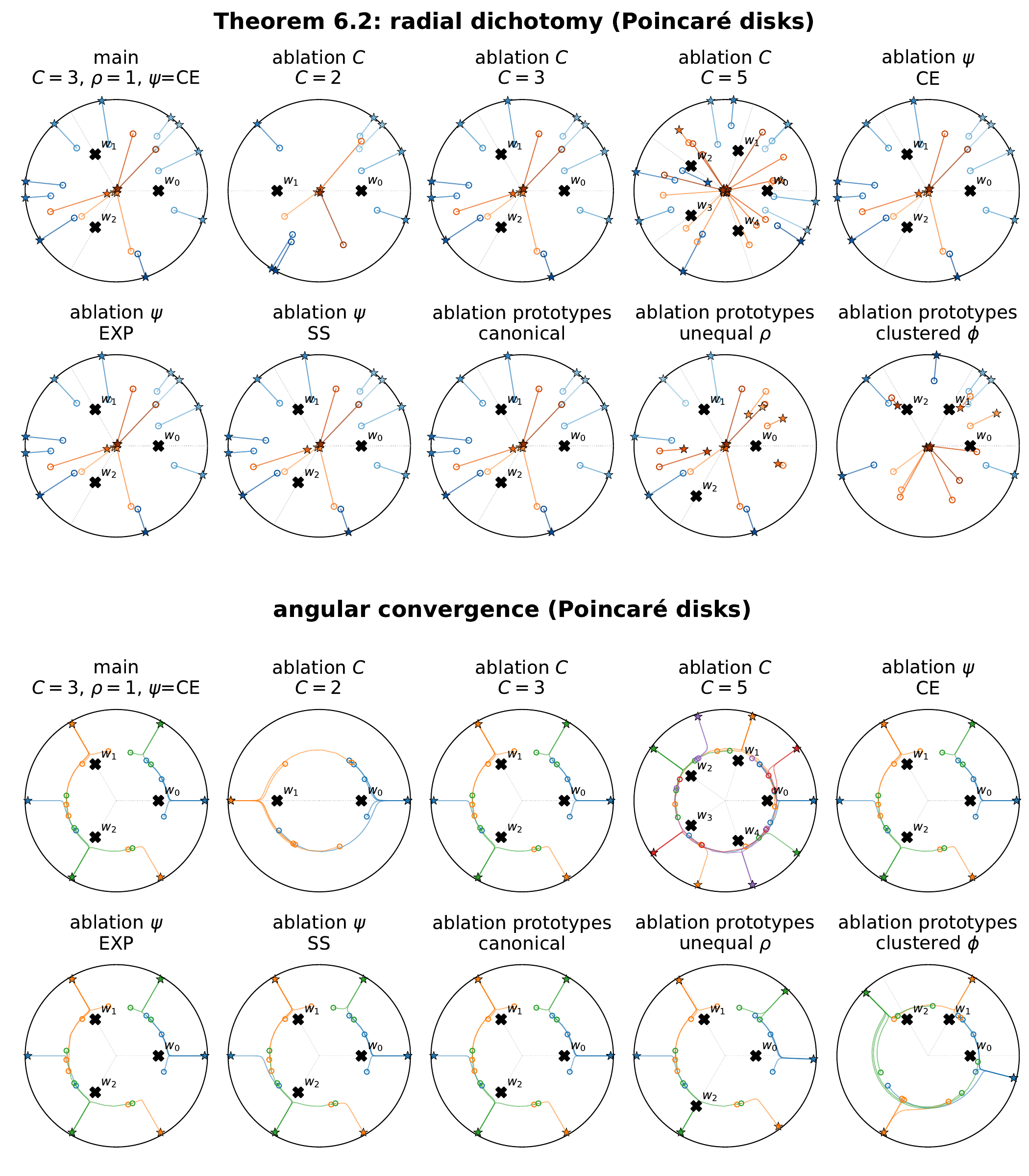}
    \caption{Poincar\'e disk visualization
    of the radial and Busemann angular
    trajectories across ten experimental
    configurations. Top two rows:
    fixed-direction radial dynamics.
    Bottom two rows: limiting angular
    dynamics, displayed with radial
    separation for visualization.
    Prototype locations, initial
    directions, and limiting directions
    are marked by $\mathbf X$,
    $\circ$, and $\star$, respectively.}
    \label{fig:app2_disks}
\end{figure}

\paragraph{Initialization basin map.}
We consider the canonical configuration
with $C=3$, $\rho=1.0$, cross-entropy,
and target class $y=0$.
We sample 200 equally spaced initial
directions $\theta_0\in[-\pi,\pi)$
and integrate the Busemann angular flow
up to $s=40$.

Each limiting direction is classified
as class-aligned when
$b_{\bm\xi^*}(\mathbf w_y)
<\min_{c\ne y}b_{\bm\xi^*}(\mathbf w_c)$,
and as antipodal otherwise.
We obtain 145 class-aligned trajectories
and 55 antipodal trajectories,
corresponding to 72.5\% and 27.5\%
of the initial directions.
The basin boundary occurs at
approximately
$|\theta_0|=132.5^\circ$,
giving the antipodal basin a width
of approximately $95^\circ$.

These results characterize how the
initial boundary direction determines
the limiting Busemann attractor and,
through $\mu$, influences the
subsequent radial dynamics.

\paragraph{Software, hardware, and reproducibility.}
The numerical integration experiments
use Python 3.9, NumPy 2.0, SciPy 1.13,
and matplotlib 3.9 on a CPU.
The original separated-flow figures,
tables, and CSV outputs are generated
in approximately 30 seconds.
The implementation uses fixed random
seeds and analytic gradients for
the hyperbolic dynamics.
The experimental code is anonymously
released for review.

\subsection{Neural Network Training}
\label{exp:network}

We further investigate Busemann asymptotic behavior
under shared neural network parameterization.
We first compare stationary solutions of the exact
hyperbolic loss and Busemann risk using a shared
angular encoder, then examine their gradients
on ResNet-18 checkpoints trained on CIFAR-10.

\paragraph{Shared-encoder stationarity.}
We consider a three-parameter shared linear
angular encoder with $900$ samples, cross-entropy
loss, and fixed prototypes. We use canonical and
irregular prototype configurations, five independent
seeds, and seven radii
$R\in\{3,3.5,4,4.5,5,5.5,6\}$,
yielding $70$ stationary-point checkpoints.

From each exact hyperbolic stationary point
$\theta_R^*$, we recover a nearby Busemann
stationary point $\theta_{\mathrm{BS}}^*$ by solving
$\nabla_\theta\mathcal R_{\mathbf W}^{\mathrm{BS}}
(\bm\Xi_\theta)=0$
using full-batch float64 L-BFGS and
step-restricted Newton iterations.
All $70$ recovered points have gradient
residuals below $1.11\times10^{-13}$.

We measure the angular proximity between
their representations:
\begin{equation}
D_R=
\left[
\frac1N\sum_{i=1}^N
d_{\Sn}\left(
\hat{\bm\xi}_i(\theta_R^*),
\hat{\bm\xi}_i(\theta_{\mathrm{BS}}^*)
\right)^2
\right]^{1/2}.
\end{equation}
Motivated by the large-radius expansion,
we examine the scaling $D_R=O(e^{-2R})$.
We fit the slope of $\log D_R$ against $R$
for each seed and compute $95\%$ confidence
intervals from the five seed-level slopes.

\begin{table}[t]
\centering
\small
\begin{tabular}{llcc}
\toprule
Prototypes & Checkpoints &
Slope of $\log D_R$ & 95\% CI \\
\midrule
Canonical & All
& $-1.989442$ & $[-2.017487,-1.961398]$ \\
Irregular & All
& $-2.000482$ & $[-2.001438,-1.999526]$ \\
Canonical & Converged
& $-2.001996$ & $[-2.007563,-1.996428]$ \\
Irregular & Converged
& $-2.000017$ & $[-2.000328,-1.999706]$ \\
\bottomrule
\end{tabular}
\caption{Angular proximity between stationary
representations of the exact hyperbolic loss
and Busemann risk. The converged subset satisfies
$\|\nabla_\theta\mathcal L_{\mathbf W}\|<10^{-8}$.
The observed slopes closely follow $e^{-2R}$ scaling.}
\label{tab:shared-critical-proximity}
\end{table}

Table~\ref{tab:shared-critical-proximity}
shows slopes close to $-2$ in both configurations.
The corresponding slopes of
$\log\|\nabla_\theta
\mathcal R_{\mathbf W}^{\mathrm{BS}}\|$
are $-1.990407$ and $-2.000416$
for all checkpoints, and $-2.001859$
and $-2.000007$ for converged subsets.

We also examine the local first-order
stationary-point displacement. The recovered
Busemann Hessians have smallest absolute
eigenvalues between $0.424545$ and $0.558863$,
with maximum condition number $2.13155$.
Median relative errors of the first-order
parameter and angular displacement approximations
are $0.3374\%$ and $0.3375\%$, respectively.
The recovered Busemann critical parameters
agree across radii within $1.72\times10^{-13}$.
These observations provide additional numerical
evidence for $O(e^{-2R})$ stationary-point
proximity under shared parameterization.

\paragraph{ResNet-18 training.}
We additionally evaluate saved ResNet-18
checkpoints trained on CIFAR-10 with freely
varying hyperbolic embedding radii.
We compute exact hyperbolic and Busemann
risk parameter gradients on a fixed
unaugmented $1024$-sample training probe
using float64 arithmetic.

\begin{table}[t]
\centering
\small
\begin{tabular}{ccccc}
\toprule
Seed & $R_\theta$ & Mean radius &
$\|\nabla_\theta\mathcal L_{\mathbf W}\|$ &
$\|\nabla_\theta\mathcal R_{\mathbf W}^{\mathrm{BS}}\|$ \\
\midrule
0 & $3.0677$ & $3.1514$ & $0.096812$ & $0.095768$ \\
1 & $3.0591$ & $3.1493$ & $0.435698$ & $0.429801$ \\
2 & $3.0258$ & $3.0369$ & $0.461785$ & $0.457161$ \\
\bottomrule
\end{tabular}
\caption{Exact hyperbolic and Busemann parameter
gradient norms at CIFAR-10 ResNet-18 checkpoints,
evaluated on a fixed $1024$-sample training probe.}
\label{tab:resnet-probe}
\end{table}

Table~\ref{tab:resnet-probe} shows closely
matched gradient norms across all three seeds.
Together with the shared-encoder results,
these observations provide complementary
numerical evidence for Busemann asymptotic
behavior under neural network parameterization.

\section{Societal Impact}
\label{sec:social}
This work is primarily theoretical and studies the implicit bias of
Riemannian gradient flow in hyperbolic representation learning. It does not
collect user data, involve human subjects, release pretrained models, or
deploy a system. The potential positive impact is indirect: a better
understanding of boundary saturation, near boundary clustering, and
asymptotic decision boundaries may improve the stability, interpretability,
and reliability of hyperbolic models for structured or hierarchical data.
Potential negative impacts are also indirect. More effective representation
learning methods could be used in sensitive domains such as recommendation,
ranking, social network analysis, or decision support, where insufficient
validation, fairness evaluation, or privacy safeguards may lead to harmful
outcomes. Since our experiments are controlled synthetic simulations and our
contribution is theoretical, we do not identify direct risks beyond the
broader responsible use concerns associated with representation learning.

\newpage

\newpage
\section*{NeurIPS Paper Checklist}

\begin{enumerate}

\item {\bf Claims}
    \item[] Question: Do the main claims made in the abstract and introduction accurately reflect the paper's contributions and scope?
    \item[] Answer: \answerYes{} 
    \item[] Justification: Yes, we claim it clearly.
    \item[] Guidelines:
    \begin{itemize}
        \item The answer \answerNA{} means that the abstract and introduction do not include the claims made in the paper.
        \item The abstract and/or introduction should clearly state the claims made, including the contributions made in the paper and important assumptions and limitations. A \answerNo{} or \answerNA{} answer to this question will not be perceived well by the reviewers. 
        \item The claims made should match theoretical and experimental results, and reflect how much the results can be expected to generalize to other settings. 
        \item It is fine to include aspirational goals as motivation as long as it is clear that these goals are not attained by the paper. 
    \end{itemize}

\item {\bf Limitations}
    \item[] Question: Does the paper discuss the limitations of the work performed by the authors?
    \item[] Answer:\answerYes{}
    \item[] Justification: We discuss our limitations in Appendix~[\ref{sec:discussion}].
    \item[] Guidelines:
    \begin{itemize}
        \item The answer \answerNA{} means that the paper has no limitation while the answer \answerNo{} means that the paper has limitations, but those are not discussed in the paper. 
        \item The authors are encouraged to create a separate ``Limitations'' section in their paper.
        \item The paper should point out any strong assumptions and how robust the results are to violations of these assumptions (e.g., independence assumptions, noiseless settings, model well-specification, asymptotic approximations only holding locally). The authors should reflect on how these assumptions might be violated in practice and what the implications would be.
        \item The authors should reflect on the scope of the claims made, e.g., if the approach was only tested on a few datasets or with a few runs. In general, empirical results often depend on implicit assumptions, which should be articulated.
        \item The authors should reflect on the factors that influence the performance of the approach. For example, a facial recognition algorithm may perform poorly when image resolution is low or images are taken in low lighting. Or a speech-to-text system might not be used reliably to provide closed captions for online lectures because it fails to handle technical jargon.
        \item The authors should discuss the computational efficiency of the proposed algorithms and how they scale with dataset size.
        \item If applicable, the authors should discuss possible limitations of their approach to address problems of privacy and fairness.
        \item While the authors might fear that complete honesty about limitations might be used by reviewers as grounds for rejection, a worse outcome might be that reviewers discover limitations that aren't acknowledged in the paper. The authors should use their best judgment and recognize that individual actions in favor of transparency play an important role in developing norms that preserve the integrity of the community. Reviewers will be specifically instructed to not penalize honesty concerning limitations.
    \end{itemize}

\item {\bf Theory assumptions and proofs}
    \item[] Question: For each theoretical result, does the paper provide the full set of assumptions and a complete (and correct) proof?
    \item[] Answer: \answerYes{}
    \item[] Justification: We provide all the detailed proofs in Appendix~[\ref{proof:4.1}],[\ref{proof:4.3}],[\ref{proof: 5.2}],[\ref{proof: 5.6}], and [\ref{proof: 5.7}].
    \item[] Guidelines:
    \begin{itemize}
        \item The answer \answerNA{} means that the paper does not include theoretical results. 
        \item All the theorems, formulas, and proofs in the paper should be numbered and cross-referenced.
        \item All assumptions should be clearly stated or referenced in the statement of any theorems.
        \item The proofs can either appear in the main paper or the supplemental material, but if they appear in the supplemental material, the authors are encouraged to provide a short proof sketch to provide intuition. 
        \item Inversely, any informal proof provided in the core of the paper should be complemented by formal proofs provided in appendix or supplemental material.
        \item Theorems and Lemmas that the proof relies upon should be properly referenced. 
    \end{itemize}

    \item {\bf Experimental result reproducibility}
    \item[] Question: Does the paper fully disclose all the information needed to reproduce the main experimental results of the paper to the extent that it affects the main claims and/or conclusions of the paper (regardless of whether the code and data are provided or not)?
    \item[] Answer: \answerYes{}
    \item[] Justification: We provide all the details of experiments in Appendix~[\ref{sec:experiments(appendix)}].
    \item[] Guidelines:
    \begin{itemize}
        \item The answer \answerNA{} means that the paper does not include experiments.
        \item If the paper includes experiments, a \answerNo{} answer to this question will not be perceived well by the reviewers: Making the paper reproducible is important, regardless of whether the code and data are provided or not.
        \item If the contribution is a dataset and\slash or model, the authors should describe the steps taken to make their results reproducible or verifiable. 
        \item Depending on the contribution, reproducibility can be accomplished in various ways. For example, if the contribution is a novel architecture, describing the architecture fully might suffice, or if the contribution is a specific model and empirical evaluation, it may be necessary to either make it possible for others to replicate the model with the same dataset, or provide access to the model. In general. releasing code and data is often one good way to accomplish this, but reproducibility can also be provided via detailed instructions for how to replicate the results, access to a hosted model (e.g., in the case of a large language model), releasing of a model checkpoint, or other means that are appropriate to the research performed.
        \item While NeurIPS does not require releasing code, the conference does require all submissions to provide some reasonable avenue for reproducibility, which may depend on the nature of the contribution. For example
        \begin{enumerate}
            \item If the contribution is primarily a new algorithm, the paper should make it clear how to reproduce that algorithm.
            \item If the contribution is primarily a new model architecture, the paper should describe the architecture clearly and fully.
            \item If the contribution is a new model (e.g., a large language model), then there should either be a way to access this model for reproducing the results or a way to reproduce the model (e.g., with an open-source dataset or instructions for how to construct the dataset).
            \item We recognize that reproducibility may be tricky in some cases, in which case authors are welcome to describe the particular way they provide for reproducibility. In the case of closed-source models, it may be that access to the model is limited in some way (e.g., to registered users), but it should be possible for other researchers to have some path to reproducing or verifying the results.
        \end{enumerate}
    \end{itemize}

\item {\bf Open access to data and code}
    \item[] Question: Does the paper provide open access to the data and code, with sufficient instructions to faithfully reproduce the main experimental results, as described in supplemental material?
    \item[] Answer: \answerYes{}
    \item[] Justification: We provide them in the supplementary materials.
    \item[] Guidelines:
    \begin{itemize}
        \item The answer \answerNA{} means that paper does not include experiments requiring code.
        \item Please see the NeurIPS code and data submission guidelines (\url{https://neurips.cc/public/guides/CodeSubmissionPolicy}) for more details.
        \item While we encourage the release of code and data, we understand that this might not be possible, so \answerNo{} is an acceptable answer. Papers cannot be rejected simply for not including code, unless this is central to the contribution (e.g., for a new open-source benchmark).
        \item The instructions should contain the exact command and environment needed to run to reproduce the results. See the NeurIPS code and data submission guidelines (\url{https://neurips.cc/public/guides/CodeSubmissionPolicy}) for more details.
        \item The authors should provide instructions on data access and preparation, including how to access the raw data, preprocessed data, intermediate data, and generated data, etc.
        \item The authors should provide scripts to reproduce all experimental results for the new proposed method and baselines. If only a subset of experiments are reproducible, they should state which ones are omitted from the script and why.
        \item At submission time, to preserve anonymity, the authors should release anonymized versions (if applicable).
        \item Providing as much information as possible in supplemental material (appended to the paper) is recommended, but including URLs to data and code is permitted.
    \end{itemize}

\item {\bf Experimental setting/details}
    \item[] Question: Does the paper specify all the training and test details (e.g., data splits, hyperparameters, how they were chosen, type of optimizer) necessary to understand the results?
    \item[] Answer: \answerYes{}
    \item[] Justification: We provide all the details of experiments in Appendix~[\ref{sec:experiments(appendix)}].
    \item[] Guidelines:
    \begin{itemize}
        \item The answer \answerNA{} means that the paper does not include experiments.
        \item The experimental setting should be presented in the core of the paper to a level of detail that is necessary to appreciate the results and make sense of them.
        \item The full details can be provided either with the code, in appendix, or as supplemental material.
    \end{itemize}

\item {\bf Experiment statistical significance}
    \item[] Question: Does the paper report error bars suitably and correctly defined or other appropriate information about the statistical significance of the experiments?
    \item[] Answer: \answerYes{}
    \item[] Justification: 
We report inter-seed standard deviations for coupled gradient flow
trajectories and 95\% confidence intervals for neural network
stationary point scaling estimates. These statistics quantify the
variability of our numerical verification of theoretical predictions,
rather than statistical superiority over baselines.

    \item[] Guidelines:
    \begin{itemize}
        \item The answer \answerNA{} means that the paper does not include experiments.
        \item The authors should answer \answerYes{} if the results are accompanied by error bars, confidence intervals, or statistical significance tests, at least for the experiments that support the main claims of the paper.
        \item The factors of variability that the error bars are capturing should be clearly stated (for example, train/test split, initialization, random drawing of some parameter, or overall run with given experimental conditions).
        \item The method for calculating the error bars should be explained (closed form formula, call to a library function, bootstrap, etc.)
        \item The assumptions made should be given (e.g., Normally distributed errors).
        \item It should be clear whether the error bar is the standard deviation or the standard error of the mean.
        \item It is OK to report 1-sigma error bars, but one should state it. The authors should preferably report a 2-sigma error bar than state that they have a 96\% CI, if the hypothesis of Normality of errors is not verified.
        \item For asymmetric distributions, the authors should be careful not to show in tables or figures symmetric error bars that would yield results that are out of range (e.g., negative error rates).
        \item If error bars are reported in tables or plots, the authors should explain in the text how they were calculated and reference the corresponding figures or tables in the text.
    \end{itemize}

\item {\bf Experiments compute resources}
    \item[] Question: For each experiment, does the paper provide sufficient information on the computer resources (type of compute workers, memory, time of execution) needed to reproduce the experiments?
    \item[] Answer: \answerYes{}
    \item[] Justification: We provide all the details of experiments in Appendix~[\ref{sec:experiments(appendix)}].
    \item[] Guidelines:
    \begin{itemize}
        \item The answer \answerNA{} means that the paper does not include experiments.
        \item The paper should indicate the type of compute workers CPU or GPU, internal cluster, or cloud provider, including relevant memory and storage.
        \item The paper should provide the amount of compute required for each of the individual experimental runs as well as estimate the total compute. 
        \item The paper should disclose whether the full research project required more compute than the experiments reported in the paper (e.g., preliminary or failed experiments that didn't make it into the paper). 
    \end{itemize}
    
\item {\bf Code of ethics}
    \item[] Question: Does the research conducted in the paper conform, in every respect, with the NeurIPS Code of Ethics \url{https://neurips.cc/public/EthicsGuidelines}?
    \item[] Answer: \answerYes{}
    \item[] Justification: Yes. Our research is primarily a
theoretical analysis of optimization dynamics in hyperbolic classification,
supported by controlled synthetic experiments. 
    \item[] Guidelines:
    \begin{itemize}
        \item The answer \answerNA{} means that the authors have not reviewed the NeurIPS Code of Ethics.
        \item If the authors answer \answerNo, they should explain the special circumstances that require a deviation from the Code of Ethics.
        \item The authors should make sure to preserve anonymity (e.g., if there is a special consideration due to laws or regulations in their jurisdiction).
    \end{itemize}

\item {\bf Broader impacts}
    \item[] Question: Does the paper discuss both potential positive societal impacts and negative societal impacts of the work performed?
    \item[] Answer: \answerYes{}
    \item[] Justification: We discuss our impacts in Appendix~[\ref{sec:social}]
    \item[] Guidelines:
    \begin{itemize}
        \item The answer \answerNA{} means that there is no societal impact of the work performed.
        \item If the authors answer \answerNA{} or \answerNo, they should explain why their work has no societal impact or why the paper does not address societal impact.
        \item Examples of negative societal impacts include potential malicious or unintended uses (e.g., disinformation, generating fake profiles, surveillance), fairness considerations (e.g., deployment of technologies that could make decisions that unfairly impact specific groups), privacy considerations, and security considerations.
        \item The conference expects that many papers will be foundational research and not tied to particular applications, let alone deployments. However, if there is a direct path to any negative applications, the authors should point it out. For example, it is legitimate to point out that an improvement in the quality of generative models could be used to generate Deepfakes for disinformation. On the other hand, it is not needed to point out that a generic algorithm for optimizing neural networks could enable people to train models that generate Deepfakes faster.
        \item The authors should consider possible harms that could arise when the technology is being used as intended and functioning correctly, harms that could arise when the technology is being used as intended but gives incorrect results, and harms following from (intentional or unintentional) misuse of the technology.
        \item If there are negative societal impacts, the authors could also discuss possible mitigation strategies (e.g., gated release of models, providing defenses in addition to attacks, mechanisms for monitoring misuse, mechanisms to monitor how a system learns from feedback over time, improving the efficiency and accessibility of ML).
    \end{itemize}
    
\item {\bf Safeguards}
    \item[] Question: Does the paper describe safeguards that have been put in place for responsible release of data or models that have a high risk for misuse (e.g., pre-trained language models, image generators, or scraped datasets)?
    \item[] Answer: \answerNA{}
    \item[] Justification: The experiments use only
controlled synthetic data for numerical verification of theoretical results,
and no sensitive, private, or user collected data are involved.
    \item[] Guidelines:
    \begin{itemize}
        \item The answer \answerNA{} means that the paper poses no such risks.
        \item Released models that have a high risk for misuse or dual-use should be released with necessary safeguards to allow for controlled use of the model, for example by requiring that users adhere to usage guidelines or restrictions to access the model or implementing safety filters. 
        \item Datasets that have been scraped from the Internet could pose safety risks. The authors should describe how they avoided releasing unsafe images.
        \item We recognize that providing effective safeguards is challenging, and many papers do not require this, but we encourage authors to take this into account and make a best faith effort.
    \end{itemize}

\item {\bf Licenses for existing assets}
    \item[] Question: Are the creators or original owners of assets (e.g., code, data, models), used in the paper, properly credited and are the license and terms of use explicitly mentioned and properly respected?
    \item[] Answer: \answerYes{}
    \item[] Justification:  We properly cites prior work and theoretical frameworks used in
the analysis. The numerical experiments use synthetic data generated by the
authors and do not use external datasets, pretrained models, or third party
assets beyond standard open source scientific computing libraries.
    \item[] Guidelines:
    \begin{itemize}
        \item The answer \answerNA{} means that the paper does not use existing assets.
        \item The authors should cite the original paper that produced the code package or dataset.
        \item The authors should state which version of the asset is used and, if possible, include a URL.
        \item The name of the license (e.g., CC-BY 4.0) should be included for each asset.
        \item For scraped data from a particular source (e.g., website), the copyright and terms of service of that source should be provided.
        \item If assets are released, the license, copyright information, and terms of use in the package should be provided. For popular datasets, \url{paperswithcode.com/datasets} has curated licenses for some datasets. Their licensing guide can help determine the license of a dataset.
        \item For existing datasets that are re-packaged, both the original license and the license of the derived asset (if it has changed) should be provided.
        \item If this information is not available online, the authors are encouraged to reach out to the asset's creators.
    \end{itemize}

\item {\bf New assets}
    \item[] Question: Are new assets introduced in the paper well documented and is the documentation provided alongside the assets?
    \item[] Answer: \answerNA{}
    \item[] Justification: The paper does not release new datasets, pretrained models, or other
standalone assets. 
    \item[] Guidelines:
    \begin{itemize}
        \item The answer \answerNA{} means that the paper does not release new assets.
        \item Researchers should communicate the details of the dataset\slash code\slash model as part of their submissions via structured templates. This includes details about training, license, limitations, etc. 
        \item The paper should discuss whether and how consent was obtained from people whose asset is used.
        \item At submission time, remember to anonymize your assets (if applicable). You can either create an anonymized URL or include an anonymized zip file.
    \end{itemize}

\item {\bf Crowdsourcing and research with human subjects}
    \item[] Question: For crowdsourcing experiments and research with human subjects, does the paper include the full text of instructions given to participants and screenshots, if applicable, as well as details about compensation (if any)? 
    \item[] Answer: \answerNA{}
    \item[] Justification: The paper does not involve crowdsourcing experiments, human subjects,
participant instructions, screenshots for user studies, or participant
compensation. The experiments are controlled synthetic numerical simulations.
    \item[] Guidelines:
    \begin{itemize}
        \item The answer \answerNA{} means that the paper does not involve crowdsourcing nor research with human subjects.
        \item Including this information in the supplemental material is fine, but if the main contribution of the paper involves human subjects, then as much detail as possible should be included in the main paper. 
        \item According to the NeurIPS Code of Ethics, workers involved in data collection, curation, or other labor should be paid at least the minimum wage in the country of the data collector. 
    \end{itemize}

\item {\bf Institutional review board (IRB) approvals or equivalent for research with human subjects}
    \item[] Question: Does the paper describe potential risks incurred by study participants, whether such risks were disclosed to the subjects, and whether Institutional Review Board (IRB) approvals (or an equivalent approval/review based on the requirements of your country or institution) were obtained?
    \item[] Answer:  \answerNA{}
    \item[] Justification: The paper does not involve human subjects, user studies,
or participant data. Therefore, there are no study participant risks to
disclose and no IRB or equivalent human subjects approval is required.
    \item[] Guidelines:
    \begin{itemize}
        \item The answer \answerNA{} means that the paper does not involve crowdsourcing nor research with human subjects.
        \item Depending on the country in which research is conducted, IRB approval (or equivalent) may be required for any human subjects research. If you obtained IRB approval, you should clearly state this in the paper. 
        \item We recognize that the procedures for this may vary significantly between institutions and locations, and we expect authors to adhere to the NeurIPS Code of Ethics and the guidelines for their institution. 
        \item For initial submissions, do not include any information that would break anonymity (if applicable), such as the institution conducting the review.
    \end{itemize}

\item {\bf Declaration of LLM usage}
    \item[] Question: Does the paper describe the usage of LLMs if it is an important, original, or non-standard component of the core methods in this research? Note that if the LLM is used only for writing, editing, or formatting purposes and does \emph{not} impact the core methodology, scientific rigor, or originality of the research, declaration is not required.
    \item[] Answer:\answerNA{}
    \item[] Justification: The research does not use LLMs as part of the core methodology. Any LLM usage
was limited to writing, editing, and experimental assistance.
    \item[] Guidelines:
    \begin{itemize}
        \item The answer \answerNA{} means that the core method development in this research does not involve LLMs as any important, original, or non-standard components.
        \item Please refer to our LLM policy in the NeurIPS handbook for what should or should not be described.
    \end{itemize}

\end{enumerate}

\end{document}